\documentclass[lettersize,journal]{IEEEtran}
\usepackage{amsmath,amsfonts}
\usepackage{amsthm}
\usepackage{algorithmic}
\usepackage{algorithm}
\usepackage{array}
\usepackage[caption=false,font=normalsize,labelfont=sf,textfont=sf]{subfig}
\usepackage{textcomp}
\usepackage{stfloats}
\usepackage{url}
\usepackage{verbatim}
\usepackage{graphicx}
\usepackage{cite}
\usepackage{multirow}
\usepackage{colortbl}
\usepackage{pifont}
\usepackage{booktabs}
\usepackage{xcolor}
\usepackage{makecell}
\usepackage[framemethod=TikZ]{mdframed}
\usepackage[colorlinks=true,citecolor=blue,linkcolor=blue,urlcolor=blue]{hyperref}

\newcommand{\maketitlesupplementary}{%
  \section*{Supplementary Material}%
}

\newcommand{\figpanel}[2]{Fig.~\hyperref[#1]{\ref*{#1}(#2)}}

\newmdenv[
  backgroundcolor=gray!12,
  linecolor=gray!65,
  linewidth=1.2pt,
  roundcorner=4pt,
  skipabove=8pt,
  skipbelow=8pt,
  innertopmargin=8pt,
  innerbottommargin=8pt,
  innerleftmargin=9pt,
  innerrightmargin=9pt
]{theorembox}

\begin{document}

\title{Essential Subspace Merging for\\Multi-Task Learning}

\author{Longhua Li,
        Lei Qi,
        Xin Geng,~\IEEEmembership{Senior Member,~IEEE},
        Qi Tian,~\IEEEmembership{Fellow,~IEEE}
\thanks{Longhua Li, Lei Qi and Xin Geng are with the School of Computer Science and Engineering, Southeast University, and Key Laboratory of New Generation Artificial Intelligence Technology and Its Interdisciplinary Applications (Southeast University), Ministry of Education, China, 211189 (e-mail: lhli@seu.edu.cn; qilei@seu.edu.cn; xgeng@seu.edu.cn).}
\thanks{Qi Tian is with Huawei Inc., Shenzhen 518129, China (e-mail: tian.qi1@huawei.com).}
\thanks{Corresponding authors: Lei Qi and Xin Geng.}
\thanks{This is an extended version of the paper presented at CVPR 2026 \cite{li2026model}.}
}

\markboth{Journal of \LaTeX\ Class Files,~Vol.~14, No.~8, August~2021}%
{Shell \MakeLowercase{\textit{et al.}}: A Sample Article Using IEEEtran.cls for IEEE Journals}

\IEEEpubid{0000--0000/00\$00.00~\copyright~2021 IEEE}

\maketitle

\begin{abstract}
Model merging aims to enable multi-task learning by integrating the capabilities of multiple models fine-tuned from the same pre-trained checkpoint into a single model. Its core challenge is inter-task interference among task-specific parameter updates. In this paper, we analyze the output shifts induced by task updates and observe that their energy is concentrated in a small number of principal directions. We call the subspace spanned by these directions the essential subspace. In contrast, most remaining directions carry little task-relevant energy, but their accumulation across multiple task updates can cause severe interference during merging. Motivated by this observation, we propose Essential Subspace Decomposition (ESD), which decomposes each task update according to the principal components of its activation shift. Based on ESD, we introduce Essential Subspace Merging (ESM), a training-free static merging method that orthogonalizes and fuses essential components into one compact multi-task model. We further extend ESM to ESM++, a training-free dynamic merging method that decomposes task-specific residuals into low-rank experts and selects the most relevant expert through prototype-based routing during forward inference.
Extensive experiments across multiple task sets and model scales demonstrate that ESM and ESM++ effectively preserves task knowledge while reducing inter-task interference.
Code is available at \url{https://github.com/kiddo127/ESM}.
\end{abstract}

\begin{IEEEkeywords}
Model merging, mixture of experts, essential subspace decomposition, multi-task learning.
\end{IEEEkeywords}

\IEEEpeerreviewmaketitle

\section{Introduction}
\label{sec:intro}

\IEEEPARstart{I}{n} recent years, the pre-training--fine-tuning paradigm has produced large numbers of task-specialized models adapted from the same pre-trained checkpoint. Model merging \cite{wortsman2022model,ilharcoediting,yan2025calm,sun2025towards} aims to integrate the capabilities of these fine-tuned models into a single model without additional training, thereby providing a training-free route to multi-task learning. The central difficulty lies in composing multiple task-specific parameter updates without letting them interfere with one another.

Inter-task interference arises because each fine-tuned model encodes its task knowledge as an update relative to the shared pre-trained model, and these updates may contain directions that are useful for one task but harmful or irrelevant for others. Simple averaging methods such as Model Soup \cite{wortsman2022model} directly mix all update directions, so useful task knowledge can be diluted by noisy or conflicting components. To mitigate this issue, subsequent studies analyze task vectors, defined as the parameter differences between fine-tuned and pre-trained models \cite{ilharcoediting,yadav2023ties,matena2022merging,jindataless}. More recent methods further apply Singular Value Decomposition (SVD) to task vectors to identify low-rank structures and remove redundant parameter-space components \cite{stoicamodel,gargiulo2025task,marczakno,zhang2026dc}. However, SVD orders update directions by parameter-space energy rather than by their functional effect on the data distribution. Although it truncates the smallest singular values, it may still discard directions that induce large output changes on frequently occurring inputs, leading to significant functional errors when input tokens align with truncated singular vectors, as quantified in Equation~\ref{eq:svd_loss}. This limitation suggests that model merging should decompose task updates according to their effect on output activations rather than parameter-space energy alone.

\begin{figure}[t]
    \centering
    \includegraphics[width=\linewidth]{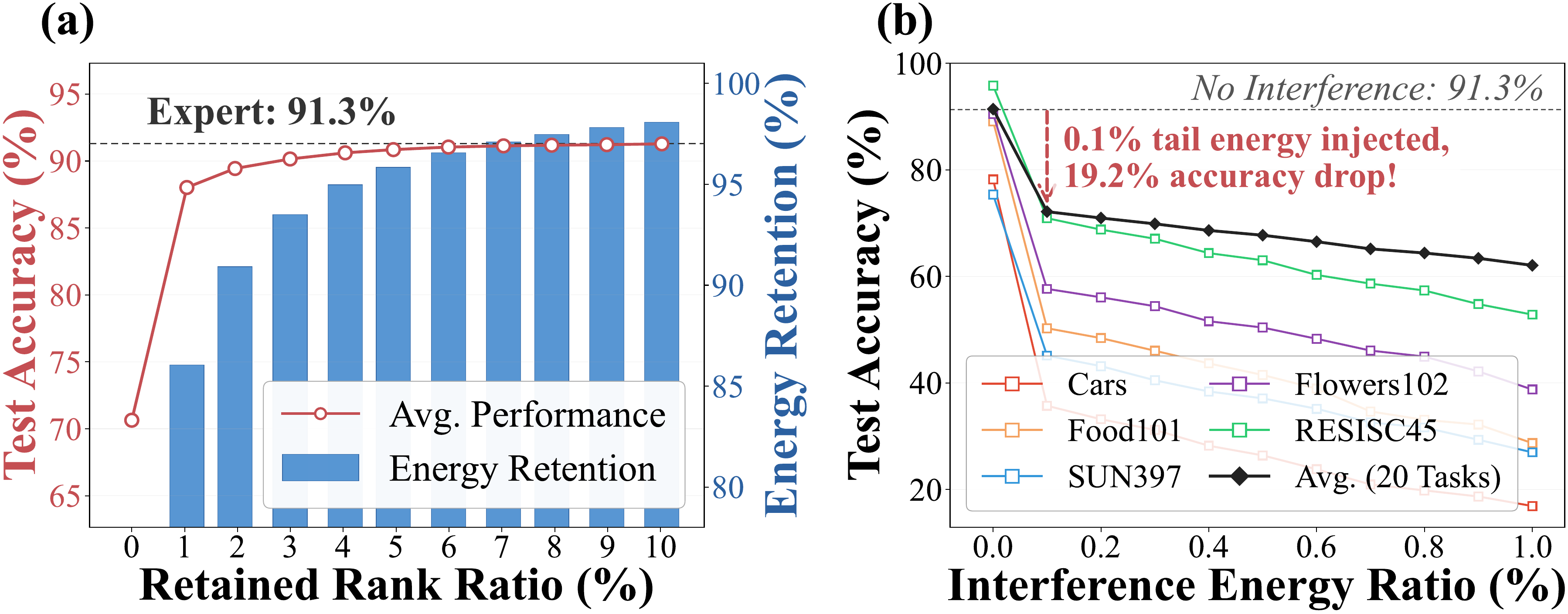}
    \caption{(a) Fine-tuned task updates are highly low-rank: retaining a small fraction of ranks preserves nearly all energy and approaches expert performance. The dual-axis plot uses \colorbox{red!10}{\textcolor{red!70!black}{red/left axis}} for task performance and \colorbox{blue!10}{\textcolor{blue!70!black}{blue/right axis}} for retained energy. (b) The x-axis denotes the energy fraction of low-energy tail components injected from the other 19 task updates into a target task, and the curves report the resulting change in target-task performance. Although each tail component carries little energy (e.g., only $0.1\%$), these components are largely useless for their own tasks but can accumulate across multiple tasks and substantially degrade other tasks.}
    \label{fig:insight}
\end{figure}

\IEEEpubidadjcol

Motivated by this perspective, we revisit the low-rank phenomenon through the output activation shifts caused by task updates. Given a task update matrix, we perform Principal Component Analysis (PCA) on its induced output shifts and observe that the energy is highly concentrated in only a few principal directions, as shown in \figpanel{fig:insight}{a}. We refer to the subspace spanned by these dominant directions as the \emph{essential subspace}, since it captures the functional directions most responsible for the task behavior. Conversely, most remaining directions contain very little activation-shift energy and contribute little to the task itself. Nevertheless, these low-energy directions are not harmless in model merging: when accumulated across many task updates, they can introduce substantial cross-task interference and degrade all tasks simultaneously, as illustrated in \figpanel{fig:insight}{b}.

In this paper, we build on our preliminary work, Essential Subspace Merging (ESM) \cite{li2026model}, which explored activation shift-aware model merging for ViT architectures. ESM introduces a static training-free merging paradigm based on the principle that effective merging should separate the essential directions carrying task knowledge from the non-essential directions that mainly accumulate interference, and should compose task updates primarily through the former without retraining or learning additional parameters.
To this end, ESM uses Essential Subspace Decomposition (ESD). ESD performs PCA on the output activation shifts induced by a task update and uses the principal directions as an essential basis. Each task matrix is then projected into this basis, and only the dominant components are retained. Because ESD directly ranks directions by functional activation-shift energy, its truncation error depends only on the discarded eigenvalues, making it better aligned with preserving task behavior under a rank budget. By discarding the low-energy non-essential directions, ESD prevents weak residual components from accumulating across tasks and becoming a major source of interference.

We further extend ESM to ESM++, a dynamic training-free merging framework that goes beyond static ViT model fusion. Starting from the statically merged ESM model, ESM++ constructs task-specific dynamic merging parameters for each task: it decomposes the residual difference between each fine-tuned expert and the ESM base model with ESD, stores the resulting low-rank components as task-specific experts, and dynamically selects the most relevant experts during forward inference through prototype-based routing. This extension preserves the compact shared representation learned by ESM while recovering task-specific specialization through adaptive composition. Moreover, we broaden the scope of essential subspace merging from ViT architectures to a wider range of settings, including vision models, discriminative language models, and generative language models.

Our main contributions are summarized as follows:
\begin{itemize}
    \item We reveal that task-update-induced output shifts concentrate in a few essential directions, while low-energy residual directions accumulate and cause inter-task interference. Based on this insight, we propose Essential Subspace Decomposition (ESD), an output shift-aware decomposition with optimal truncation error for preserving functional behavior.
    \item We propose ESM, a static training-free merging method that decomposes task updates with ESD, removes non-essential directions, and orthogonalizes the retained essential components into a compact multi-task model.
    \item  We extend ESM to ESM++, a dynamic training-free merging method that preserves task-specific residual knowledge as low-rank experts and composes them at inference time using prototype-based routing.
    \item We conduct extensive experiments on vision models, discriminative and generative language models across multiple task sets and model scales, demonstrating state-of-the-art performance in multi-task model merging.
\end{itemize}
\section{Related Work}
\label{sec:related_work}

\subsection{Model Merging}
Model merging aims to combine multiple task-specific models into a unified multi-task model without retraining.
Since models obtained from different training processes reside in distinct loss basins, directly performing linear fusion leads to significant performance degradation. To alleviate this issue, some studies employ training‑time alignment \cite{li2026improving} and post‑training alignment \cite{singh2020model,tatro2020optimizing,pena2023re} methods.
To ensure merging stability, recent studies typically merge models that are fine-tuned from the same pre-trained checkpoint.
Model Soup \cite{wortsman2022model} averages fine-tuned weights to improve generalization, while Task Arithmetic \cite{ilharcoediting} introduces task vectors, defined as the parameter differences between fine-tuned and pre-trained models, to enable vector-based knowledge composition.

However, direct averaging of task vectors often causes severe task interference due to conflicting updates. To address this, TIES-Merging \cite{yadav2023ties} trims redundant parameters before averaging salient ones, AdaMerging \cite{yangadamerging} learns adaptive task-wise coefficients, and DARE \cite{yu2024language} resets redundant updates while rescaling the rest. Information-weighted methods such as Fisher Merging \cite{matena2022merging} and RegMean \cite{jindataless} use Fisher information or input similarity for weighted averaging. Other works refine merging through parameter- or layer-wise strategies \cite{du2024parameter,zhang2024knowledge,wanglines}, or leverage implicit or modular representations to enhance flexibility \cite{chengwhoever,huang2024emr,zheng2025free}.
To further preserve and leverage the task-specific knowledge of each fine-tuned model, several studies \cite{zheng2025free,shen2026efficient,tang2026zero} upscale these models into a MoE model.

Recent advances move beyond raw parameter space to the spectral domain. TSV-M \cite{gargiulo2025task} perform Singular Value Decomposition (SVD) on task matrices and merge along the top singular directions that capture dominant functional subspaces. Iso-CTS \cite{marczakno} constructs an isotropic common subspace through singular value normalization followed by task-specific refinements, achieving state-of-the-art performance. However, singular values reflect only the parameter energy rather than their functional impact.
To overcome this limitation, we propose Essential Subspace Decomposition (ESD), which decomposes each task matrix within a subspace derived from its effect on output activations. We prove that ESD achieves lower truncation error than SVD and better preserves task-specific features during merging. Under this common decomposition, we develop two complementary composition paradigms: ESM for static merging and ESM++ for dynamic routing with per-layer expert selection.

\subsection{Model Weight Low-Rank Decomposition}
Decomposing model weights has been extensively studied in various areas \cite{wangsvd,li2026stratified,li2026energy}. One of the most popular approaches is based on the low-rank assumption of weight matrices.
The LoRA family of methods \cite{hulora,dettmers2023qlora,ding2023parameter} assumes that fine-tuning updates are inherently low-rank and learns compact matrices to parameterize these updates.
Other methods use SVD-based decompositions for parameter-efficient fine-tuning \cite{sun2022singular,han2023svdiff} or model compression \cite{li2023losparse,saha2023matrix,wangsvd,wang2025svd}. More recently, low-rank decompositions have been applied to model merging \cite{stoicamodel,gargiulo2025task,marczakno}, combining task updates in reduced subspaces to mitigate inter-task interference.

In contrast, our proposed Essential Subspace Decomposition (ESD) constructs the decomposition space not from the weight updates themselves but from the activation shifts induced by these updates. By capturing task-specific principal directions in the activation space, ESD produces sparse yet expressive task representations, reducing cross-task interference while preserving high task fidelity.

\section{Methodology}

\subsection{Preliminaries on Model Merging}
Model merging aims to integrate a collection of task-specific models, each fine-tuned from a common pre-trained checkpoint, into a single unified model without additional retraining. Formally, let $W_0$ denote the weight matrix of the pre-trained model, and $W_t$ be the weight matrix of the expert model fine-tuned on task $t$, where $t=1,\ldots,T$. The fundamental object of interest in model merging is the task update, which captures how fine-tuning shifts the model away from the pre-trained weights.
Following Task Arithmetic \cite{ilharcoediting}, the task vector for task $t$ is defined as:
\begin{equation}
\tau_{t}=\operatorname{Flatten}(W_t-W_0).
\end{equation}
Given the structured nature of models, it is preferable to retain the matrix form of the update rather than flattening it into a vector. The task matrix of layer $\ell$ is defined as:
\begin{equation}
\Delta W_t^{ \left( \ell \right)}= W_t^{ \left( \ell \right)}- W_0^{ \left( \ell \right)}.
\end{equation}
Each $\Delta W_t^{(\ell)}$ represents the task-specific parameter update at layer $\ell$, preserving the row--column structure essential for spectral analysis and subspace alignment. The goal of model merging is to construct merged weights $W_{\text{merge}}$ that support all tasks, typically in the form:
\begin{equation}
W_{\text{merge}}= W_0+f \left( \Delta W_1, \cdots, \Delta W_T \right),
\end{equation}
where $f \left( \cdot \right)$ is a merging function, which is the main focus of current model merging research \cite{ilharcoediting, stoicamodel,gargiulo2025task,marczakno}.

\subsection{Essential Subspace Decomposition}
\label{sec:ESD}

The motivation of ESM is that task knowledge is typically concentrated in a few functional directions, whereas the numerous remaining weak directions can accumulate across tasks and become a major source of interference. Therefore, before composing task updates, we first decompose each task matrix, retain only the directions that are essential to its functional behavior, and later orthogonalize or route the retained components across tasks. Unlike previous methods \cite{gargiulo2025task,marczakno} that merge models in the truncated singular vector subspace obtained via SVD, we propose to decompose and merge task matrices within a more essential subspace that is aligned with the task's output feature space.
For simplicity, unless otherwise specified, we omit the layer index $\ell$ and task identifier $t$ in the task matrix and denote it simply as $\Delta W$.

\subsubsection{\textbf{Limitations of Direct Task Matrix Decomposition}}
Recent model merging methods often directly decompose the task matrix $\Delta W \in \mathbb{R}^{d_{\text{out}} \times d_{\text{in}}}$ with SVD, keeping only the top-$r$ singular components to retain dominant parameter-space directions and reduce task interference, yielding the truncated approximation $\widehat{\Delta W}$.
This strategy is reasonable because it removes many small components that are unlikely to help the current task but may still interfere with other tasks after aggregation. However, the criterion used by SVD is parameter-centric: it minimizes the Frobenius norm reconstruction error of $\Delta W$ without considering the input feature distribution. Thus, a direction with small singular value is not guaranteed to be functionally unimportant. For an input $x\sim\mathcal{D}$, the expected output error after discarding the smallest $s-r$ singular components is:
\begin{equation}\label{eq:svd_loss}
\mathbb{E}_{x\sim\mathcal{D}} \left[ \|\Delta W x - \widehat{\Delta W} x \|_2^2 \right] = \sum_{i=r+1}^s\sigma_{i}^{2} \cdot \mathbb{E}_{x\sim\mathcal{D}} \left[ (v_{i}^{\top}x)^2 \right],
\end{equation}
where $s$ denotes the number of non-zero singular values and $\{u_i\}$, $\{v_i\}$ are the left and right singular vectors. The proof of this SVD truncation loss is provided in Appendix~\ref{sec:Proof_SVD_Truncation_Error}.

As shown, the error depends not only on the discarded singular values $\sigma_i$, but also on the alignment between the input distribution and the right singular vectors $v_i$. A direction with small $\sigma_i$ may be functionally critical if inputs project strongly onto $v_i$. Conversely, retaining parameter-dominant directions that have little activation effect may introduce unnecessary cross-task overlap. By ignoring the input distribution, SVD may both discard functionally essential information and retain directions that mainly contribute to interference.

\subsubsection{\textbf{Output Shift-Aware Decomposition}}
To address this limitation, we introduce Essential Subspace Decomposition (ESD), which constructs a basis from the principal directions of output shifts induced by the task update matrix $\Delta W$. Instead of asking which parameter directions reconstruct $\Delta W$ most accurately, ESD asks which output directions explain the functional change caused by $\Delta W$ on representative inputs. This directly connects decomposition to task behavior.

For each task $t$, we sample a lightweight unlabeled proxy dataset. By performing a forward pass through the task-specific fine-tuned model and recording the layer-wise input features, we obtain the input matrix for each layer. Specifically, given $n$ input tokens of dimension $d_{\text{in}}$, forming $X_{\text{proxy}} \in \mathbb{R}^{n \times d_{\text{in}}}$, the shift is computed as:
\begin{equation}\label{eq:activation_shift}
\Delta O = X_{\text{proxy}} \Delta W^{\top} \in \mathbb{R}^{n \times d_{\text{out}}},
\end{equation}
which captures the functional footprint of $\Delta W$ on a representative set of inputs. By performing PCA on $\Delta O$, we obtain eigenvectors $\boldsymbol{e}_i$ and corresponding eigenvalues $\lambda_i$, sorted by explained variance. These eigenvectors form an orthonormal basis $E = [\boldsymbol{e}_1, \boldsymbol{e}_2, \dots, \boldsymbol{e}_{d_{\text{out}}}] \in \mathbb{R}^{d_{\text{out}} \times d_{\text{out}}}$ for the output space.

The original task matrix $\Delta W$ is projected onto $E$, yielding the coordinate matrix $C = E^{\top} \Delta W\in\mathbb{R}^{d_{\text{out}}\times d_{\text{in}}}$, and can be factorized as:
\begin{equation}\label{eq:esd_factorization}
\Delta W = EC = E(E^{\top} \Delta W).
\end{equation}
We truncate to the top-$r$ principal components to form the essential basis $\hat{E}=[e_1, \dots, e_r] \in \mathbb{R}^{d_{\text{out}} \times r}$. The corresponding coordinate matrix is $\hat{C} = \hat{E}^{\top} \Delta W\in\mathbb{R}^{r\times d_{\text{in}}}$, leading to the low-rank approximation:
\begin{equation}\label{eq:esd_approximation}
\widehat{\Delta W} = \hat{E} \hat{C} = \hat{E} (\hat{E}^{\top} \Delta W).
\end{equation}
Under this decomposition, the expected output truncation error is:
\begin{equation}\label{eq:esd_loss}
\mathbb{E}_{x\sim\mathcal{D}} \left[ \|\Delta W x - \widehat{\Delta W} x \|_2^2 \right] = \sum_{i=r+1}^{d_{\text{out}}} \lambda_i.
\end{equation}
The proof is provided in Appendix~\ref{sec:Proof_ESD_Truncation_Error}.

\textbf{Comparison.} Unlike directly applying SVD to the parameter update matrix (Equation \ref{eq:svd_loss}), the ESD truncation error (Equation \ref{eq:esd_loss}) depends only on the sum of discarded eigenvalues. The eigenvalues $\lambda_i$ directly measure the variance of activation shifts along each principal direction $e_i$. Removing directions with the smallest eigenvalues therefore discards the least functionally relevant components, regardless of how inputs align with parameter-space singular vectors. This makes ESD truly ``essential'': for any given rank budget $r$, it provides the optimal low-rank approximation in terms of expected functional output preservation. Experiments in Section~\ref{seq:experiments} confirm that ESD yields substantially higher energy concentration and feature retention than SVD (Fig.~\ref{fig:esd_vs_svd}).

\begin{figure*}[t]
    \centering
    \includegraphics[width=\linewidth]{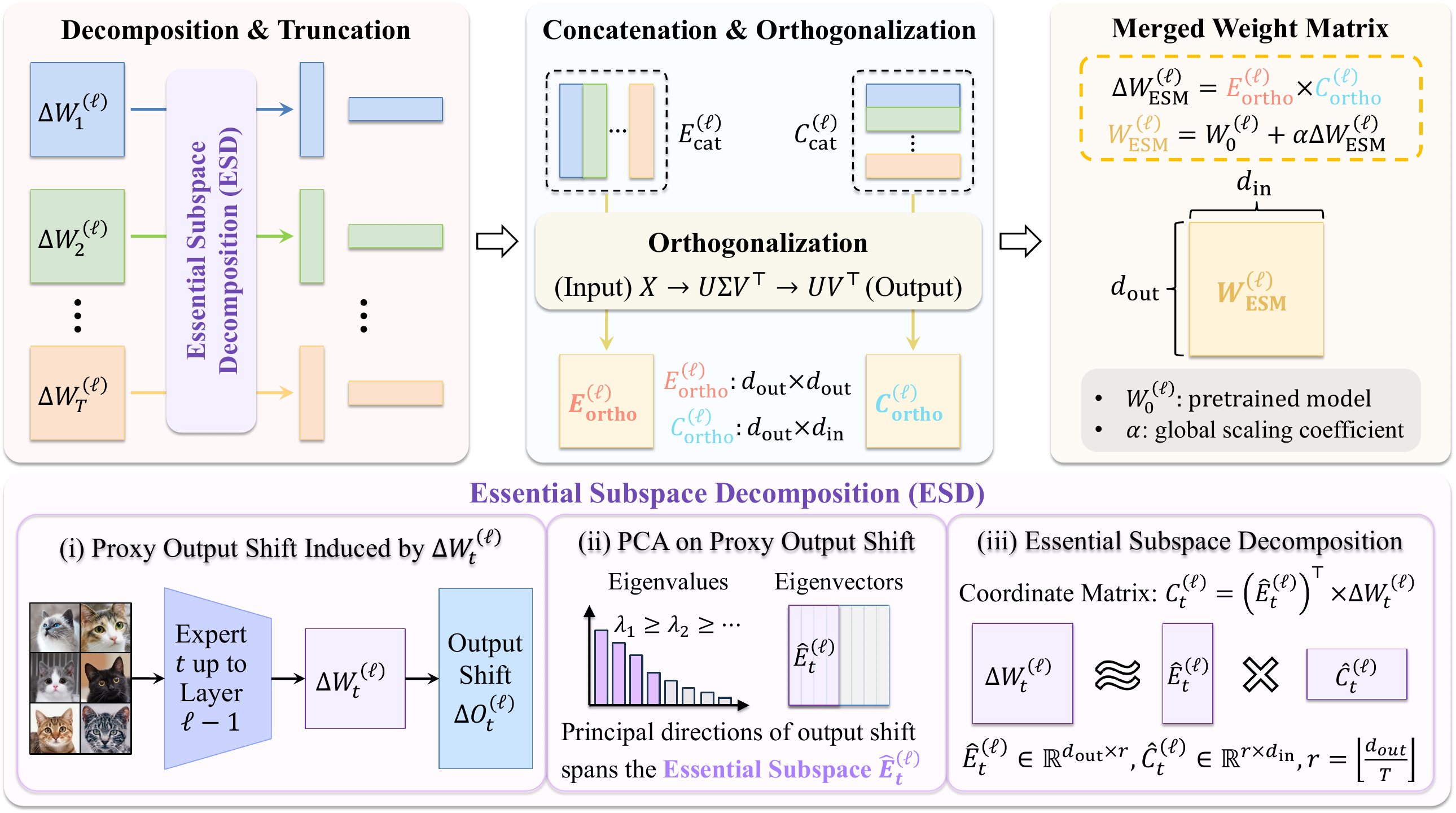}
    \caption{Overview of ESM, the proposed static training-free model merging method. For each task update, Essential Subspace Decomposition (ESD) first extracts output shift-aware basis and coordinates, then truncates them to the task's essential components. The retained components from all tasks are concatenated and orthogonalized to reduce cross-task interference, producing a single merged weight matrix added to the pre-trained weight with a global scaling coefficient.}
    \label{fig:ESM_M}
\end{figure*}

\begin{figure*}[t]
    \centering
    \includegraphics[width=\linewidth]{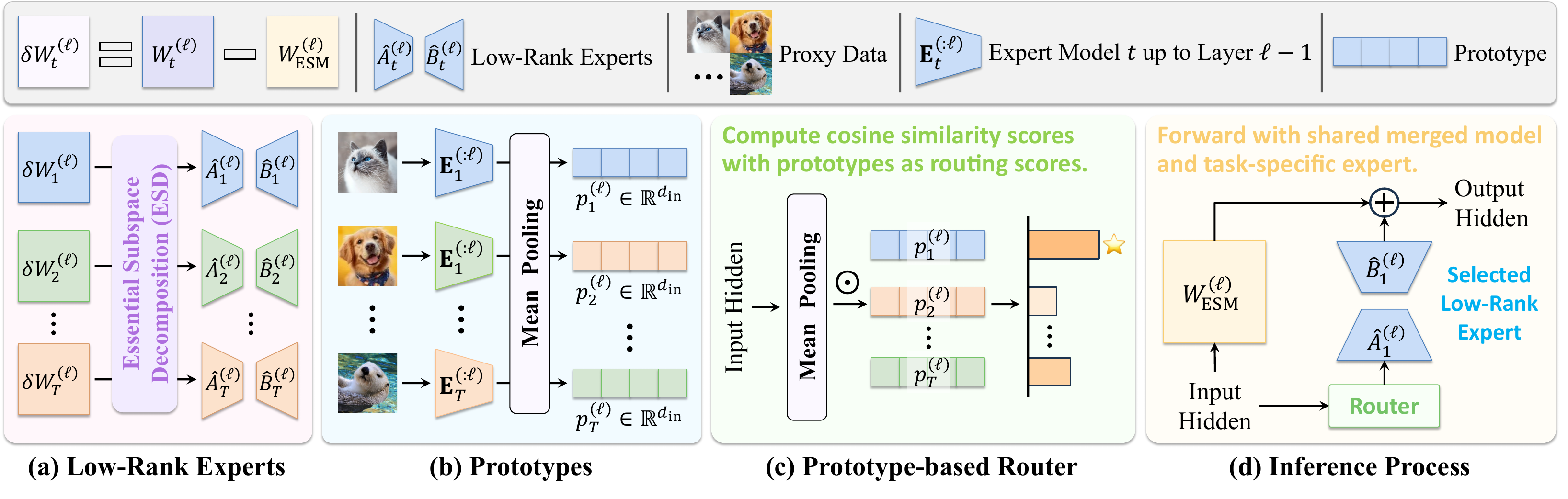}
    \caption{Overview of ESM++, the proposed dynamic training-free model merging method. ESM++ decomposes task-specific residuals into low-rank experts, collects per-task prototypes from proxy activations, and uses prototype-based routing to select the most relevant expert for each layer at inference time. The selected low-rank expert is composed with the shared ESM merged weight, preserving task-specific specialization while retaining shared knowledge.}
    \label{fig:ESM_R}
\end{figure*}

\subsection{ESM: Static Essential Subspace Merging}
\label{sec:ESM}

Building upon ESD, ESM composes task matrices into a single merged model by retaining and aligning functionally important components, as illustrated in Fig.~\ref{fig:ESM_M}. This static essential merging process is training-free and follows the principle described above: remove non-essential directions before they accumulate into interference, preserve the principal directions that carry task knowledge, and orthogonalize the retained components to reduce conflict among tasks. The process follows three steps:

\begin{enumerate}
    \item \textbf{Decomposition and Truncation.} For each task $t \in \{1, \dots, T\}$, we factorize the task matrix $\Delta W_t$ into its essential basis $E_t$ and coordinate matrix $C_t$, such that $\Delta W_t = E_t C_t$, where $E_t \in \mathbb{R}^{d_{\text{out}} \times d_{\text{out}}}$ and $C_t \in \mathbb{R}^{d_{\text{out}} \times d_{\text{in}}}$. Given $T$ tasks, we allocate a rank budget $r = \lfloor d_{\text{out}} / T \rfloor$ to each one. We then truncate the task-specific factors to their top-$r$ components, resulting in the sparse factors $\hat{E}_t \in \mathbb{R}^{d_{\text{out}} \times r}$ and $\hat{C}_t \in \mathbb{R}^{r \times d_{\text{in}}}$.

    \item \textbf{Concatenation.} Next, we form the merged basis and coordinate matrices by horizontally and vertically concatenating the truncated factors across all tasks, respectively:
    \begin{equation}\label{eq:E_cat}
    E_{\text{cat}} = [\hat{E}_1 | \hat{E}_2 | \dots | \hat{E}_T] \in \mathbb{R}^{d_{\text{out}} \times (r \cdot T)},
    \end{equation}
    \begin{equation}\label{eq:C_cat}
    C_{\text{cat}} =
    \begin{bmatrix}
    \hat{C}_1 \\
    \hat{C}_2 \\
    \vdots \\
    \hat{C}_T
    \end{bmatrix}
    \in \mathbb{R}^{(r \cdot T) \times d_{\text{in}}}.
    \end{equation}

    \item \textbf{Orthogonalization.} The concatenated factors $E_{\text{cat}}$ and $C_{\text{cat}}$ consist of components from different task subspaces, which may not be mutually orthogonal, leading to interference. To reconstruct the merged matrix with minimized correlation, we orthogonalize these factors. Following TSV-M \cite{gargiulo2025task}, we compute the SVDs for each concatenated matrix:
    \begin{equation}\label{eq:esm_orthogonalization_svd}
    \begin{aligned}
    E_{\text{cat}} &= U_E \Sigma_E V_E^{\top}, \\
    C_{\text{cat}} &= U_C \Sigma_C V_C^{\top}.
    \end{aligned}
    \end{equation}
    To ensure that more important parameter directions are preferentially preserved, we apply eigenvalue-based weighting to both the directional vectors of $E_{\text{cat}}$ and the coordinate vectors of $C_{\text{cat}}$ prior to performing SVD. We then retain only the orthogonal components via polar normalization (equivalently, solving the Orthogonal Procrustes problem, as shown in Appendix~\ref{sec:whitening_procrustes}):
    \begin{equation}\label{eq:esm_orthogonalization}
    \begin{aligned}
    E_{\text{ortho}} &= U_E V_E^{\top}, \\
    C_{\text{ortho}} &= U_C V_C^{\top}.
    \end{aligned}
    \end{equation}
    The final merged task matrix is constructed as:
    \begin{equation}\label{eq:esm_reconstruction}
    \Delta W_{\text{ESM}} = E_{\text{ortho}} C_{\text{ortho}}.
    \end{equation}
\end{enumerate}


The parameter matrix for the $\ell$-th layer of the final merged multi-task model is:
\begin{equation}\label{eq:esm_final_merge}
W_{\text{ESM}}^{(\ell)}=W_0^{(\ell)}+\alpha \cdot \Delta W_{\text{ESM}}^{(\ell)},
\end{equation}
where $\alpha$ is a global scaling coefficient selected on a held-out validation set as in previous model merging works.

\subsection{ESM++: Dynamic Essential Subspace Merging}
\label{sec:ESM++}

ESM produces a single merged model that captures shared knowledge across tasks. However, the static composition process inevitably dilutes task-specific expertise: knowledge that is unique to a particular task may be suppressed during orthogonalization and inter-task competition. To address this, we introduce ESM++, which preserves the low-rank essential components of each task as separate experts and composes them dynamically at inference time, as shown in Fig.~\ref{fig:ESM_R}. Crucially, ESM++ remains training-free: it does not learn an additional router, but instead relies on proxy prototypes collected offline. ESM++ is not an orthogonal routing method bolted onto ESM, but rather a complementary composition paradigm within the same essential subspace framework: both paradigms share the identical ESD decomposition principle and differ only in how the decomposed knowledge is composed---statically into one model, or dynamically through per-input expert selection.

\subsubsection{\textbf{Expert Extraction}}

ESM++ begins with the merged model $W_{\text{ESM}}$ obtained from ESM, which serves as a shared knowledge foundation. For each task $t$ and each weight matrix at layer $\ell$, we compute the residual task matrix, denoted by $\delta W_t$, as the difference between the expert parameter $W_t$ and the ESM merged parameter $W_{\text{ESM}}$:
\begin{equation}\label{eq:esmr_residual}
\delta W_t = W_t - W_{\text{ESM}}.
\end{equation}
Unlike the original task matrices used in ESM, these residuals isolate the task-specific knowledge that the static merging process failed to retain. We then decompose each residual $\delta W_t$ using ESD (Section~\ref{sec:ESD}), yielding the essential basis and coordinate matrix, denoted by $\hat{B}_t \in \mathbb{R}^{d_{\text{out}} \times r}$ and $\hat{A}_t \in \mathbb{R}^{r \times d_{\text{in}}}$, truncated to a rank budget $r$. Since residuals are sparser than the original task matrices, $r$ can typically be set much smaller than the rank used in ESM.

Each task retains its low-rank expert parameters $\{(\hat{B}_t, \hat{A}_t)\}$ across all target layers. At inference time, the ESM++ weight matrix for layer $\ell$ under expert $t$ is reconstructed as:
\begin{equation}\label{eq:esmr_expert_weight}
W_{\text{ESM++},t} = W_{\text{ESM}} + \hat{B}_t \hat{A}_t,
\end{equation}
where $W_{\text{ESM}}$ is the merged weight from ESM.

\subsubsection{\textbf{Prototype Collection}}

To determine which expert to activate for a given input, we require a lightweight routing mechanism. We collect a \textit{prototype} vector for each task at each target layer, which characterizes the typical input distribution that the task's expert expects.

Specifically, for each task $t$, we run a forward pass using its fine-tuned model on the proxy dataset. We register forward hooks at each target layer to capture the input features $X \in \mathbb{R}^{n \times d_{\text{in}}}$. The prototype vector $p_t \in \mathbb{R}^{d_{\text{in}}}$ for task $t$ at that layer is obtained by mean-pooling over all tokens and proxy samples:
\begin{equation}\label{eq:esmr_prototype}
p_t = \frac{1}{n} \sum_{i=1}^{n} X_i.
\end{equation}
This collection is performed once, offline, for all tasks and layers. The storage overhead is negligible: for each target layer, we store $T$ vectors of dimension $d_{\text{in}}$.

\subsubsection{\textbf{Per-Layer Routing at Inference}}

At inference time, routing is performed independently at each target layer. Given the input activations of a sample $X \in \mathbb{R}^{n \times d_{\text{in}}}$ at layer $\ell$, we first mean-pool the token sequence to obtain a global representation $\bar{x} \in \mathbb{R}^{d_{\text{in}}}$. The routing score for task $t$ is the cosine similarity between $\bar{x}$ and the prototype $p_t$:
\begin{equation}\label{eq:esmr_routing_score}
s_t = \frac{\bar{x}^{\top} p_t}{\|\bar{x}\|_2 \cdot \|p_t\|_2}.
\end{equation}
We select the expert with the highest score, $t^* = \operatorname{arg\,max}_t s_t$, reconstruct the ESM++ weight matrix as $W_{\text{ESM++}} = W_{\text{ESM}} + \hat{B}_{t^*} \hat{A}_{t^*}$, and execute the layer forward pass. This procedure is repeated independently at each target layer, enabling fine-grained, layer-wise composition of task-specific knowledge.

\section{Experiments}
\label{seq:experiments}

\begin{table*}[t]
\centering
\caption{Average absolute accuracy on model merging benchmarks, with normalized average accuracy shown as subscripts in parentheses. ``\textit{Pre-trained}'' (pre-trained model) and ``\textit{Fine-tuned}'' (fine-tuned models) results are presented as the lower and upper bounds, respectively.}
\label{tab:main_results}
\setlength{\tabcolsep}{4.2pt}
\begin{tabular}{lcccccccccc}
\toprule[1.5pt]
\multirow{2}{*}{Method} & \multirow{2}{*}{Venue} & \multicolumn{3}{c}{ViT-B/32} & \multicolumn{3}{c}{ViT-B/16} & \multicolumn{3}{c}{ViT-L/14}
\\
\cmidrule[0.5pt](lr){3-5}\cmidrule[0.5pt](lr){6-8}\cmidrule[0.5pt](lr){9-11} && 8 tasks & 14 tasks & 20 tasks & 8 tasks & 14 tasks & 20 tasks & 8 tasks & 14 tasks & 20 tasks
\\
\midrule[0.5pt]

\rowcolor[HTML]{ECECEC}\textit{Pre-trained} & -- & 48.3 & 57.2 & 56.1 & 55.3 & 61.3 & 59.7 & 64.7 & 68.2 & 65.2 \\
\rowcolor[HTML]{ECECEC}\textit{Fine-tuned} & -- & 92.8 & 90.9 & 91.3 & 94.6 & 92.8 & 93.2 & 95.8 & 94.3 & 94.7 \\

\midrule[0.5pt]

\multicolumn{11}{c}{\textbf{Static Merging}} \\

Model Soup \cite{wortsman2022model} & ICML 2022 & 66.3$_{(72.1)}$ & 64.3$_{(71.1)}$ & 61.0$_{(67.5)}$ & 72.2$_{(76.6)}$ & 69.5$_{(74.8)}$ & 65.3$_{(70.4)}$ & 79.6$_{(83.2)}$ & 76.7$_{(81.1)}$ & 71.6$_{(75.6)}$ \\
Task Arithmetic \cite{ilharcoediting} & ICLR 2023 & 70.8$_{(76.5)}$ & 65.3$_{(72.1)}$ & 60.5$_{(66.8)}$ & 75.4$_{(79.6)}$ & 70.5$_{(75.9)}$ & 65.8$_{(70.8)}$ & 84.9$_{(88.7)}$ & 79.4$_{(84.0)}$ & 74.0$_{(78.1)}$ \\
TIES-Merging \cite{yadav2023ties} & NeurIPS 2023 & 75.1$_{(81.0)}$ & 68.0$_{(74.8)}$ & 63.4$_{(69.9)}$ & 79.7$_{(84.3)}$ & 73.2$_{(78.7)}$ & 68.2$_{(73.3)}$ & 86.9$_{(90.7)}$ & 79.5$_{(84.1)}$ & 75.7$_{(79.8)}$ \\
Consensus TA \cite{wanglocalizing} & ICML 2024 & 75.0$_{(80.8)}$ & 70.4$_{(77.4)}$ & 65.4$_{(72.0)}$ & 79.4$_{(83.9)}$ & 74.4$_{(79.9)}$ & 69.8$_{(74.9)}$ & 86.3$_{(90.1)}$ & 82.2$_{(86.9)}$ & 79.0$_{(83.2)}$ \\
FR-Merging \cite{zheng2025free} & ICCV 2025 & 78.6$_{(84.6)}$ & 63.7$_{(69.7)}$ & 50.9$_{(55.1)}$ & 82.6$_{(87.2)}$ & 72.0$_{(77.2)}$ & 58.6$_{(62.5)}$ & 89.0$_{(92.8)}$ & 81.0$_{(85.5)}$ & 71.6$_{(75.3)}$ \\
TSV-M \cite{gargiulo2025task} & CVPR 2025 & 85.9$_{(92.3)}$ & 80.1$_{(87.9)}$ & 77.1$_{(84.3)}$ & 89.0$_{(93.9)}$ & 84.6$_{(91.0)}$ & 80.6$_{(86.5)}$ & 93.0$_{(97.0)}$ & 89.2$_{(94.4)}$ & 87.7$_{(92.5)}$ \\
Iso-C \cite{marczakno} & ICML 2025 & 86.3$_{(92.9)}$ & 80.3$_{(88.1)}$ & 75.5$_{(82.5)}$ & 90.6$_{(95.6)}$ & 84.8$_{(91.1)}$ & 79.6$_{(85.4)}$ & 94.2$_{(98.3)}$ & 89.3$_{(94.5)}$ & 87.6$_{(92.2)}$ \\
Iso-CTS \cite{marczakno} & ICML 2025 & 86.2$_{(92.8)}$ & 81.7$_{(89.7)}$ & 78.1$_{(85.5)}$ & 91.1$_{(96.1)}$ & 86.4$_{(92.8)}$ & 82.4$_{(88.4)}$ & \textbf{94.7}$_{(\textbf{98.8})}$ & 91.0$_{(96.3)}$ & 90.1$_{(94.9)}$ \\
DC-Merge \cite{zhang2026dc} & CVPR 2026 & 87.1$_{(93.6)}$ & 82.5$_{(90.6)}$ & 80.6$_{(88.2)}$ & 90.8$_{(95.8)}$ & 87.1$_{(93.7)}$ & 84.6$_{(90.8)}$ & 94.3$_{(98.4)}$ & 91.0$_{(96.4)}$ & 90.5$_{(95.4)}$ \\

\rowcolor[HTML]{C6E2FF}ESM & Ours & \textbf{88.6}$_{(\textbf{95.4})}$ & \textbf{83.9}$_{(\textbf{92.4})}$ & \textbf{82.3}$_{(\textbf{90.1})}$ & \textbf{91.6}$_{(\textbf{96.7})}$ & \textbf{87.6}$_{(\textbf{94.4})}$ & \textbf{85.3}$_{(\textbf{91.6})}$ & \textbf{94.7}$_{(\textbf{98.8})}$ & \textbf{91.3}$_{(\textbf{96.8})}$ & \textbf{90.7}$_{(\textbf{95.7})}$ \\

\midrule[0.5pt]
\multicolumn{11}{c}{\textbf{Dynamic Merging}} \\

FREE-Merging \cite{zheng2025free} & ICCV 2025 & 85.8$_{(92.4)}$ & 81.7$_{(89.7)}$ & 79.4$_{(86.7)}$ & 88.0$_{(92.9)}$ & 84.9$_{(91.2)}$ & 82.1$_{(88.0)}$ & 92.6$_{(96.6)}$ & 89.7$_{(94.9)}$ & 88.6$_{(93.3)}$ \\
E-WEMoE-90\% \cite{shen2026efficient} & TPAMI 2026 & 91.7$_{(98.8)}$ & 85.7$_{(94.0)}$ & 85.7$_{(93.7)}$ & 93.2$_{(98.5)}$ & 88.5$_{(95.3)}$ & 88.5$_{(94.9)}$ & 94.8$_{(98.9)}$ & 91.6$_{(97.0)}$ & 92.4$_{(97.5)}$ \\
WEMoE \cite{shen2026efficient} & TPAMI 2026 & \textbf{91.9}$_{(\textbf{99.0})}$ & 85.3$_{(93.4)}$ & 85.4$_{(93.3)}$ & \textbf{93.3}$_{(\textbf{98.6})}$ & 87.9$_{(94.7)}$ & 88.1$_{(94.5)}$ & 94.8$_{(99.0)}$ & 91.2$_{(96.7)}$ & 92.0$_{(96.9)}$ \\
SMILE \cite{tang2026zero} & TPAMI 2026 & 91.5$_{(98.4)}$ & 86.5$_{(94.6)}$ & 86.6$_{(94.8)}$ & 93.2$_{(98.5)}$ & 89.0$_{(95.3)}$ & 89.1$_{(95.6)}$ & 95.3$_{(99.4)}$ & 91.3$_{(96.2)}$ & 92.3$_{(97.0)}$ \\

\rowcolor[HTML]{C6E2FF}ESM++ ($r=8$) & Ours & 91.3$_{(98.4)}$ & 87.3$_{(96.1)}$ & 86.4$_{(94.5)}$ & 93.0$_{(98.2)}$ & 89.9$_{(96.9)}$ & 88.5$_{(95.0)}$ & 95.4$_{(99.6)}$ & 92.7$_{(98.3)}$ & 92.6$_{(97.6)}$ \\
\rowcolor[HTML]{C6E2FF}ESM++ ($r=32$) & Ours & 91.8$_{(\textbf{99.0})}$ & \textbf{88.0}$_{(\textbf{96.8})}$ & \textbf{87.2}$_{(\textbf{95.5})}$ & \textbf{93.3}$_{(\textbf{98.6})}$ & \textbf{90.5}$_{(\textbf{97.5})}$ & \textbf{89.5}$_{(\textbf{96.0})}$ & \textbf{95.6}$_{(\textbf{99.8})}$ & \textbf{93.2}$_{(\textbf{98.8})}$ & \textbf{93.1}$_{(\textbf{98.2})}$ \\

\bottomrule[1.5pt]
\end{tabular}%
\end{table*}

\subsection{Experimental Setup}
\textbf{Vision model merging.} Following \cite{wanglocalizing}, we evaluate multi-task merging on benchmarks of 8, 14, and 20 vision tasks. The 8-task benchmark includes Cars \cite{krause20133d}, DTD \cite{cimpoi2014describing}, EuroSAT \cite{helber2019eurosat}, GTSRB \cite{stallkamp2011german}, MNIST \cite{lecun2002gradient}, RESISC45 \cite{cheng2017remote}, SUN397 \cite{xiao2016sun}, and SVHN \cite{netzer2011reading}. The 14-task benchmark further adds CIFAR100 \cite{krizhevsky2009learning}, STL10 \cite{coates2011analysis}, Flowers102 \cite{nilsback2008automated}, OxfordIIITPet \cite{parkhi2012cats}, PCAM \cite{veeling2018rotation}, and FER2013 \cite{goodfellow2013challenges}. The 20-task benchmark additionally includes EMNIST \cite{cohen2017emnist}, CIFAR10 \cite{krizhevsky2009learning}, Food101 \cite{bossard2014food}, FashionMNIST \cite{xiao2017fashion}, RenderedSST2 \cite{socher2013recursive}, and KMNIST \cite{clanuwat2018deep}. We use CLIP \cite{radford2021learning} models with ViT-B/32, ViT-B/16, and ViT-L/14 visual encoders as pre-trained base models, and adopt the task-specific fine-tuned checkpoints provided by the TALL-masks \cite{wanglocalizing}. We report both absolute and normalized accuracy following standard evaluation practices \cite{wanglocalizing}.

\textbf{Discriminative language model merging.} For discriminative language tasks, we evaluate on the 8-task GLUE benchmark \cite{wang2018glue} using RoBERTa-Base \cite{liu2019roberta} as the pre-trained base model. The benchmark covers diverse natural language understanding tasks, allowing us to assess whether the proposed merging strategy transfers beyond vision models.

\textbf{Generative language model merging.} For generative language tasks, we follow MergeBench \cite{he2026mergebench} and evaluate on instruction-following, mathematics, multilingual understanding, coding, and safety tasks. We use Llama-3.2-3B \cite{grattafiori2024llama} as the pre-trained base model.

\subsection{Main Results}

\begin{table*}[t]
\centering
\caption{Multi-task performance when merging RoBERTa-Base models on 8-task GLUE benchmark.}
\label{tab:roberta}
\setlength{\tabcolsep}{5pt}
\begin{tabular}{lccccccccc}
\toprule[1.5pt]
Method & CoLA & SST-2 & MRPC & STS-B & QQP & MNLI & QNLI & RTE & Avg. \\
\midrule

\rowcolor[HTML]{ECECEC}\textit{Pre-trained} & 0.0 & 49.1 & 15.8 & 15.0 & 41.1 & 34.2 & 52.4 & 53.4 & 32.6 \\
\rowcolor[HTML]{ECECEC}\textit{Fine-tuned} & 56.5 & 94.7 & 88.0 & 86.4 & 89.7 & 87.0 & 91.7 & 66.4 & 82.6 \\

\midrule

Model Soup \cite{wortsman2022model} & 0.0$_{(0.0)}$ & 56.1$_{(59.2)}$ & 75.5$_{(85.8)}$ & 40.6$_{(47.0)}$ & 40.7$_{(45.4)}$ & 41.8$_{(48.0)}$ & 58.6$_{(63.9)}$ & 47.3$_{(71.2)}$ & 45.1$_{(52.6)}$ \\
Task Arithmetic \cite{ilharcoediting} & 6.7$_{(11.8)}$ & 83.9$_{(88.6)}$ & 78.4$_{(89.1)}$ & 27.9$_{(32.3)}$ & 73.2$_{(81.6)}$ & 65.8$_{(75.7)}$ & 78.4$_{(85.5)}$ & 53.4$_{(80.4)}$ & 58.5$_{(68.1)}$ \\
TIES-Merging \cite{yadav2023ties} & 17.8$_{(31.8)}$ & 84.2$_{(88.9)}$ & 75.9$_{(86.2)}$ & 9.4$_{(10.9)}$ & 54.8$_{(61.1)}$ & 72.2$_{(83.0)}$ & 78.8$_{(85.9)}$ & 46.2$_{(69.6)}$ & 54.9$_{(64.7)}$ \\
DARE \cite{yu2024language} (w/ Task Arithmetic) & 0.0$_{(0.0)}$ & 83.4$_{(88.1)}$ & 76.2$_{(86.6)}$ & 26.1$_{(30.2)}$ & 75.6$_{(84.3)}$ & 55.7$_{(64.0)}$ & 72.5$_{(79.1)}$ & 51.3$_{(77.2)}$ & 55.1$_{(63.7)}$ \\
DARE \cite{yu2024language} (w/ TIES-Merging) & 6.7$_{(11.8)}$ & 90.4$_{(95.5)}$ & 75.5$_{(85.8)}$ & 8.1$_{(9.4)}$ & 77.9$_{(86.8)}$ & 72.3$_{(83.1)}$ & 81.3$_{(88.7)}$ & 43.6$_{(65.6)}$ & 57.0$_{(65.6)}$ \\
CAT-Merging \cite{sun2025cat} & 33.2$_{(58.8)}$ & 89.3$_{(94.3)}$ & 68.2$_{(77.5)}$ & 15.6$_{(18.1)}$ & 76.1$_{(84.8)}$ & 72.3$_{(83.1)}$ & 82.9$_{(90.4)}$ & 62.8$_{(94.6)}$ & 62.6$_{(75.2)}$ \\
LOT-Merging \cite{sun2025towards} & 17.1$_{(30.3)}$ & 89.7$_{(94.7)}$ & 78.3$_{(89.0)}$ & 25.5$_{(29.5)}$ & 78.8$_{(87.8)}$ & 73.0$_{(83.9)}$ & 77.2$_{(84.2)}$ & \underline{65.4}$_{(\underline{98.5})}$ & 63.1$_{(74.7)}$ \\
TSV-M \cite{gargiulo2025task} & 25.4$_{(44.8)}$ & \underline{93.4}$_{(\underline{98.6})}$ & 80.4$_{(91.4)}$ & 43.8$_{(50.7)}$ & 83.1$_{(92.7)}$ & \underline{74.6}$_{(\underline{85.8})}$ & 84.6$_{(92.3)}$ & 50.5$_{(76.1)}$ & 67.0$_{(79.0)}$ \\
WUDI-Merging \cite{chengwhoever} & \underline{46.2}$_{(\underline{81.8})}$ & 93.1$_{(98.3)}$ & 69.3$_{(78.7)}$ & 52.3$_{(60.5)}$ & \underline{83.2}$_{(\underline{92.7})}$ & \textbf{81.2}$_{(\textbf{93.3})}$ & 83.0$_{(90.5)}$ & 57.4$_{(86.4)}$ & 70.7$_{(85.3)}$ \\

\rowcolor[HTML]{C6E2FF}ESM (Ours) & 42.2$_{(74.7)}$ & 91.1$_{(96.2)}$ & \underline{83.9}$_{(\underline{95.3})}$ & \underline{74.0}$_{(\underline{85.6})}$ & 76.6$_{(85.4)}$ & 73.3$_{(84.3)}$ & \underline{85.0}$_{(\underline{92.7})}$ & \textbf{68.2}$_{(\textbf{102.7})}$ & \underline{74.3}$_{(\underline{89.6})}$ \\

\rowcolor[HTML]{C6E2FF}ESM++ ($r=32$, Ours) & \textbf{54.5}$_{(\textbf{96.4})}$ & \textbf{94.2}$_{(\textbf{99.4})}$ & \textbf{84.9}$_{(\textbf{96.4})}$ & \textbf{75.4}$_{(\textbf{87.3})}$ & \textbf{86.6}$_{(\textbf{96.5})}$ & 69.5$_{(79.9)}$ & \textbf{88.0}$_{(\textbf{96.0})}$ & 56.7$_{(85.3)}$ & \textbf{76.2}$_{(\textbf{92.2})}$ \\

\bottomrule[1.5pt]
\end{tabular}
\end{table*}

\begin{table}[t]
\centering
\caption{Performance comparison on merging five Llama-3.2-3B expert models specialized in Instruction, Math, Coding, Multilingual, and Safety.}
\label{tab:llm_merging}
\setlength{\tabcolsep}{3pt}
\begin{tabular}{lcccccc}
\toprule[1.5pt]
Method & Instr. & Math & Coding & Multiling. & Safety & Avg. \\
\midrule[0.5pt]

\rowcolor[HTML]{ECECEC}\textit{Pre-trained} & 10.45 & 30.25 & 27.17 & 40.73 & 19.87 & 25.69 \\
\rowcolor[HTML]{ECECEC}\textit{Fine-tuned} & 53.52 & 60.27 & 44.62 & 41.64 & 42.23 & 48.46 \\

\midrule[0.5pt]

Model Soup \cite{wortsman2022model} & 13.71 & 41.93 & 37.22 & 42.25 & 31.21 & 33.26 \\
Task Arithmetic \cite{ilharcoediting} & 30.91 & 46.02 & \underline{41.17} & \underline{42.35} & 40.16 & 40.12 \\
TIES-Merging \cite{yadav2023ties} & 20.66 & 48.07 & 39.05 & 42.34 & 33.73 & 36.77 \\
Consensus TA \cite{wanglocalizing} & 33.12 & 48.07 & \textbf{42.05} & \textbf{42.39} & 36.87 & 40.50 \\
L\&S \cite{he2024localize} & 29.11 & 44.81 & 33.97 & 42.16 & 24.28 & 34.87 \\
DARE \cite{yu2024language} & 35.30 & 50.80 & 40.62 & 42.22 & 40.16 & 41.82 \\
TSV-M \cite{gargiulo2025task} & 26.63 & \underline{54.28} & 40.79 & 42.13 & 38.22 & 40.41 \\
\rowcolor[HTML]{C6E2FF}ESM (Ours) & \underline{42.45} & 52.08 & 39.35 & 41.03 & \textbf{45.53} & \underline{44.09} \\
\rowcolor[HTML]{C6E2FF}ESM++ ($r=64$, Ours) & \textbf{53.06} & \textbf{58.23} & 41.07 & 41.79 & \underline{42.90} & \textbf{47.41} \\

\bottomrule[1.5pt]
\end{tabular}
\end{table}

\textbf{Vision model merging.}
As presented in Table~\ref{tab:main_results}, we compare the proposed ESM framework against a comprehensive suite of static and routing-based model merging methods, with the pre-trained base model and the average single-task fine-tuned performance serving as lower and upper bounds, respectively.
For static merging, ESM achieves the best or tied-best performance across all nine settings.
For routing-based merging, ESM++ further improves the merged model by preserving task-specific residual expertise: ESM++ ($r=32$) obtains the best results on seven out of nine settings.
The advantage of ESM++ is especially clear as the number of tasks grows, indicating its ability to mitigate inter-task interference in more challenging multi-task merging scenarios.

\textbf{Discriminative language model merging.}
Table~\ref{tab:roberta} reports results on the 8-task GLUE benchmark \cite{wang2018glue} with RoBERTa-Base \cite{liu2019roberta}.
ESM already surpasses prior static merging methods in average performance, achieving $74.3\%$ absolute accuracy.
ESM++ further raises the average accuracy to $76.2\%$, obtaining the best performance on six out of eight datasets.
These results suggest that essential subspace merging has broad effectiveness for discriminative language understanding tasks, while routing is particularly beneficial when different GLUE tasks require heterogeneous linguistic capabilities.

\textbf{Generative large language model merging.}
Following the MergeBench setting \cite{he2026mergebench}, Table~\ref{tab:llm_merging} evaluates merging five Llama-3.2-3B expert models specialized for instruction following, mathematics, coding, multilingual understanding, and safety.
Compared with conventional merging baselines, ESM achieves the highest average score among static methods.
With dynamic routing, ESM++ further improves the average score to $47.69\%$, approaching the fine-tuned expert upper bound of $48.46\%$.
The consistent gains across these diverse generative capabilities demonstrate that ESM can merge specialized LLM experts while preserving complementary task knowledge that is often diluted by a single global parameter average.

\begin{figure}[t]
    \centering
    \includegraphics[width=\linewidth]{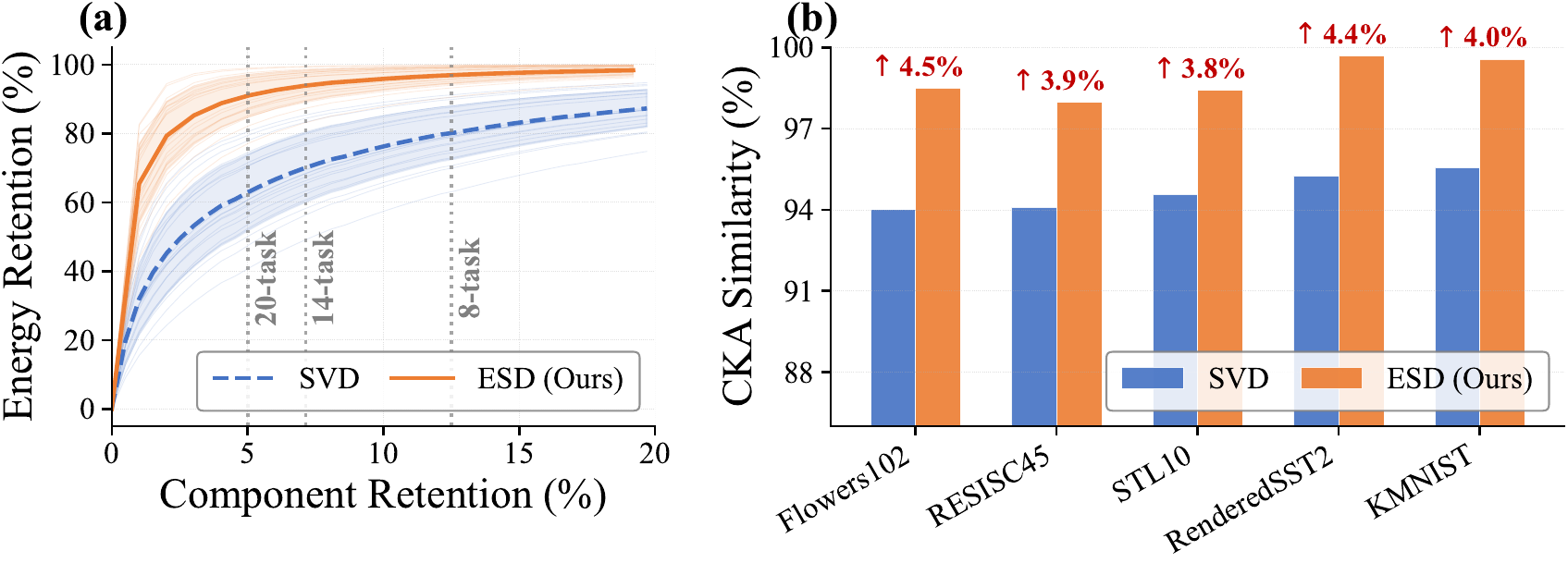}
    \caption{Comparison of ESD and SVD on ViT-B/16. (a) Energy retention as a function of the fraction of retained principal components, where ESD retains more energy with fewer components. (b) CKA similarity between the low-rank decomposed model and the fine-tuned expert, showing that ESD better preserves task-specific features after decomposition.}
    \label{fig:esd_vs_svd}
\end{figure}

\subsection{Ablation and Analysis}

\subsubsection{\textbf{Comparison of ESD and Parameter-Space SVD}}

Fig.~\ref{fig:esd_vs_svd} compares ESD with directly applying SVD to task matrices from two complementary perspectives on ViT-B/16. \figpanel{fig:esd_vs_svd}{a} shows the cumulative energy retained as different proportions of components are preserved, defined using squared singular values or eigenvalues, both of which represent the explained variance. Our proposed ESD exhibits a highly concentrated energy distribution, indicating its ability to preserve essential task-specific knowledge with fewer components. \figpanel{fig:esd_vs_svd}{b} further evaluates feature preservation after low-rank decomposition using Centered Kernel Alignment (CKA) similarity \cite{kornblith2019similarity}. We measure the similarity between the low-rank decomposed model and the fine-tuned expert using the class token from the final layer, based on the feature difference relative to the zero-shot pre-trained model. ESD more effectively preserves task-specific features than parameter-space SVD, further confirming its advantage in retaining critical knowledge.

\begin{table*}[t]
\centering
\caption{Ablation study of key components in the proposed ESM merging method.}
\label{tab:ablation}
\setlength{\tabcolsep}{3.6pt}
\begin{tabular}{>{\centering\arraybackslash}p{1cm}>{\centering\arraybackslash}p{1cm}cc|ccccccccccc}
\toprule[1.5pt]
\multicolumn{2}{c}{Decomposition} & \multirow{2}{*}{Truncation} & \multirow{2}{*}{Orthogonalization} & \multicolumn{3}{c}{ViT-B/16} & \multicolumn{6}{c}{Llama-3.2-3B} & \multirow{2}{*}{RoBERTa}
\\
\cmidrule[0.5pt](lr){1-2}\cmidrule[0.5pt](lr){5-7}\cmidrule[0.5pt](lr){8-13} SVD & ESD &&& 8 tasks & 14 tasks & 20 tasks & Instruction & Math & Coding & Multilingual & Safety & Avg.
\\
\midrule[0.5pt]

\textcolor{red}{\ding{55}} & \textcolor{green}{\ding{51}} & \textcolor{red}{\ding{55}} & \textcolor{red}{\ding{55}} & 80.3 & 74.3 & 72.9 & 31.9 & 48.1 & 41.0 & \textbf{42.0} & 39.9 & 40.6 & 58.1 \\

\textcolor{red}{\ding{55}} & \textcolor{green}{\ding{51}} & \textcolor{green}{\ding{51}} & \textcolor{red}{\ding{55}} & 79.9 & 73.9 & 72.6 & 32.9 & 47.7 & \textbf{41.3} & 41.9 & 39.4 & 40.6 & 52.6 \\

\textcolor{red}{\ding{55}} & \textcolor{green}{\ding{51}} & \textcolor{red}{\ding{55}} & \textcolor{green}{\ding{51}} & 89.1 & 83.5 & 79.4 & 33.7 & \textbf{52.1} & 41.1 & \textbf{42.0} & 40.4 & 41.8 & 65.1 \\

\textcolor{green}{\ding{51}} & \textcolor{red}{\ding{55}} & \textcolor{green}{\ding{51}} & \textcolor{green}{\ding{51}} & 89.6 & 85.4 & 82.1 & 38.3 & 51.8 & 39.2 & 41.3 & 45.2 & 43.2 & 66.3 \\

\rowcolor[HTML]{C6E2FF}\textcolor{red}{\ding{55}} & \textcolor{green}{\ding{51}} & \textcolor{green}{\ding{51}} & \textcolor{green}{\ding{51}} & \textbf{91.6} & \textbf{87.6} & \textbf{85.3} & \textbf{42.5} & \textbf{52.1} & 39.4 & 41.0 & \textbf{45.5} & \textbf{44.1} & \textbf{74.3} \\

\bottomrule[1.5pt]
\end{tabular}
\end{table*}

\begin{figure*}[t]
    \centering
    \includegraphics[width=\linewidth]{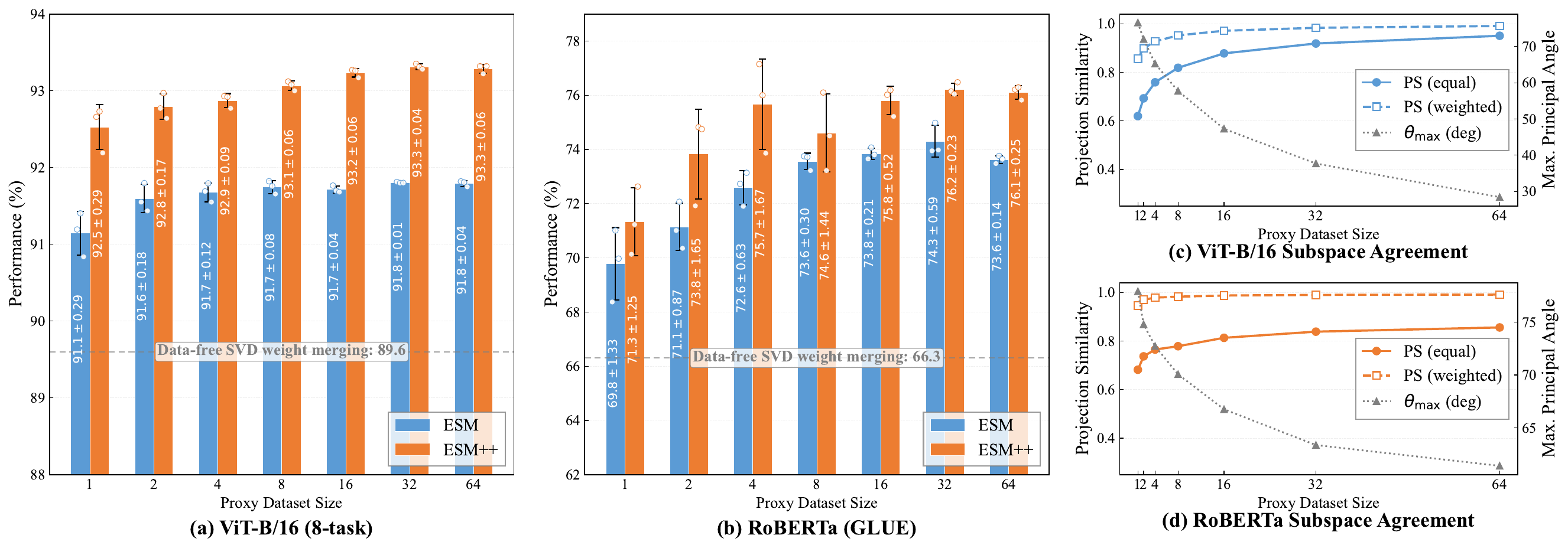}
    \caption{Impact of proxy dataset size on merging performance and subspace estimation. Panels (a,b) report average accuracy on ViT-B/16 and RoBERTa, while panels (c,d) compare proxy-estimated subspaces with full-test-set reference subspaces using projection similarity ``PS'' and maximum principal angle ``$\theta_{\max}$''. Among them, the weighted projection similarity ``PS (weighted)'' is the most relevant indicator because it measures how much high-variance functional information is preserved. Detailed definitions of these subspace metrics are provided in Appendix~\ref{sec:proxy_subspace_metrics}.}
    \label{fig:ablation_proxy_size}
\end{figure*}

\subsubsection{\textbf{Ablation of ESM Components}}
\label{sec:ablation}

We conduct an ablation study of the three key components in ESM, including the decomposition method, truncation, and orthogonalization, as shown in Table \ref{tab:ablation}. First, replacing direct SVD on parameter updates with the proposed output shift-aware ESD consistently improves performance. Under the same truncation and orthogonalization setting, ESD outperforms SVD across various benchmarks, confirming that preserving functionally important directions is more effective than retaining parameter-space dominant directions. Second, orthogonalization substantially improves the merged model by reducing interference among task-specific components.
Third, although truncation alone slightly decreases performance, it brings clear gains when combined with orthogonalization.
This indicates that low-rank truncation is most beneficial as a preparation for orthogonalization: it filters out weak and interference-prone directions, allowing the orthogonalized factors to preserve the main task knowledge more effectively.

\subsubsection{\textbf{Impact of Proxy Dataset Size}}

We perform an ablation study on the size of the proxy dataset, as illustrated in Fig.~\ref{fig:ablation_proxy_size}. Panels (a) and (b) report ESM and ESM++ performance as the number of proxy samples varies. In both settings, even a single unlabeled proxy sample is sufficient to outperform the data-free baseline that directly applies SVD to the parameter update matrices, and only a small number of samples is needed for stable merging performance. Panels (c) and (d) further analyze the corresponding proxy-estimated subspaces by comparing them with reference subspaces estimated from the full test set. The eigenvalue-weighted projection similarity ``PS (equal)'' remains high even with limited proxy samples, suggesting that the dominant, high-variance directions are reliably recovered. In contrast, the maximum principal angle ``$\theta_{\max}$'' is more sensitive because it reflects worst-case alignment of low-energy tail directions, which are harder to estimate but contribute less to the retained information. Detailed definitions of these subspace metrics are provided in Appendix~\ref{sec:proxy_subspace_metrics}.

\begin{figure}[t]
    \centering
    \includegraphics[width=\linewidth]{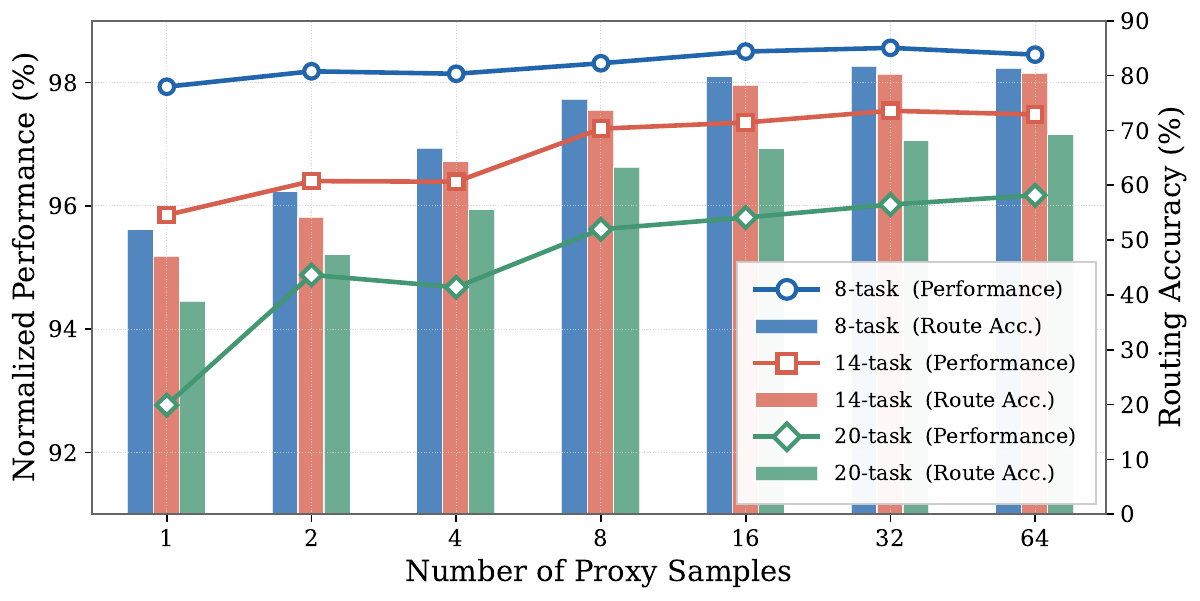}
    \caption{Effect of proxy set size on ESM++. We report the normalized accuracy and routing accuracy obtained when varying the number of proxy samples.}
    \label{fig:proxy_routing}
\end{figure}

Fig.~\ref{fig:proxy_routing} studies the sensitivity of ESM++ to the number of proxy samples used for prototype and residual expert construction. With only one unlabeled proxy sample, ESM++ already preserves more than $90\%$ of the expert-model performance. With a small proxy set, both routing accuracy and performance become stable, suggesting that ESM++ does not require a large proxy dataset for effective routing and expert composition.

\begin{figure*}[t]
    \centering
    \includegraphics[width=\linewidth]{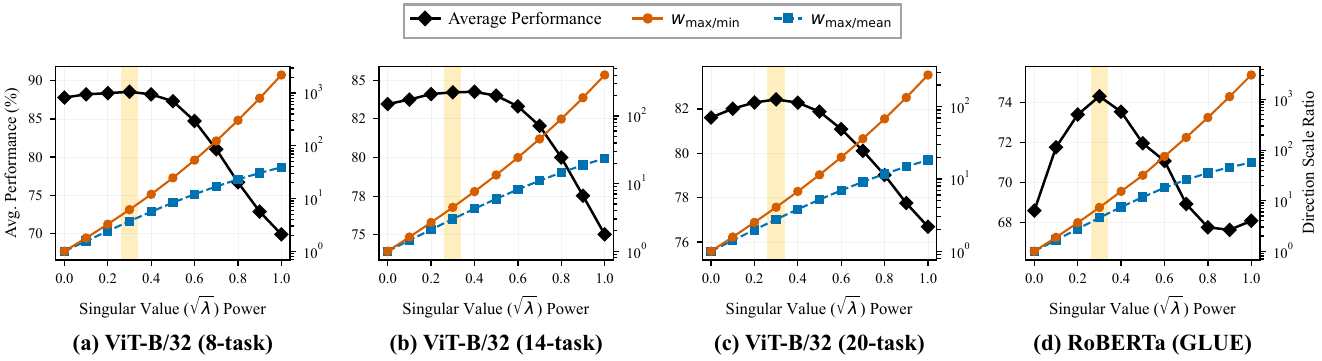}
    \caption{Effect of the eigenvalue-based weighting power used before orthogonalization. We weight each direction by its singular value, i.e., the square root of the corresponding eigenvalue $\lambda$, raised to a power. Here, $w_{\mathrm{max/min}}$ denotes the ratio between the weighting coefficient of the largest direction and that of the smallest direction, while $w_{\mathrm{max/mean}}$ denotes the ratio between the weighting coefficient of the largest direction and the average weighting coefficient over all directions. The \colorbox[HTML]{FFF0C1}{gold} marker highlights the best setting, where a power of $0.3$ achieves the strongest performance.}
    \label{fig:eigen_power}
\end{figure*}

\begin{table*}[t]
\centering
\caption{Prototype-based and oracle routing results for ESM++, reporting routing accuracy and average accuracy to separate routing errors from the performance retained by the principal components. Normalized accuracy is shown in parentheses.}
\label{tab:oracle_routing}
\setlength{\tabcolsep}{3.8pt}
\begin{tabular}{lccccccccccc}
\toprule[1.5pt]
\multirow{2}{*}{Method} & \multirow{2}{*}{\makecell{Routing\\Strategy}} & \multirow{2}{*}{\makecell{Routing\\Accuracy}} & \multicolumn{3}{c}{ViT-B/32} & \multicolumn{3}{c}{ViT-B/16} & \multicolumn{3}{c}{ViT-L/14}
\\
\cmidrule[0.5pt](lr){4-6}\cmidrule[0.5pt](lr){7-9}\cmidrule[0.5pt](lr){10-12} &&& 8 tasks & 14 tasks & 20 tasks & 8 tasks & 14 tasks & 20 tasks & 8 tasks & 14 tasks & 20 tasks
\\
\midrule[0.5pt]

\rowcolor[HTML]{ECECEC}\textit{Pre-trained} & -- & -- & 48.3 & 57.2 & 56.1 & 55.3 & 61.3 & 59.7 & 64.7 & 68.2 & 65.2 \\
\rowcolor[HTML]{ECECEC}\textit{Fine-tuned} & -- & -- & 92.8 & 90.9 & 91.3 & 94.6 & 92.8 & 93.2 & 95.8 & 94.3 & 94.7 \\
\rowcolor[HTML]{ECECEC}ESM & -- & -- & 88.6$_{(95.4)}$ & 83.9$_{(92.4)}$ & 82.3$_{(90.1)}$ & 91.6$_{(96.7)}$ & 87.6$_{(94.4)}$ & 85.3$_{(91.6)}$ & 94.7$_{(98.8)}$ & 91.3$_{(96.8)}$ & 90.7$_{(95.7)}$ \\

\midrule[0.5pt]

ESM++ ($r=8$) & Prototype & 76.2\% & 91.3$_{(98.4)}$ & 87.3$_{(96.1)}$ & 86.4$_{(94.5)}$ & 93.0$_{(98.2)}$ & 89.9$_{(96.9)}$ & 88.5$_{(95.0)}$ & 95.4$_{(99.6)}$ & 92.7$_{(98.3)}$ & 92.6$_{(97.6)}$ \\

ESM++ ($r=8$) & Oracle & 100.0\% & \textbf{91.7}$_{(\textbf{98.8})}$ & \textbf{88.2}$_{(\textbf{97.0})}$ & \textbf{88.0}$_{(\textbf{96.2})}$ & \textbf{93.5}$_{(\textbf{98.7})}$ & \textbf{90.5}$_{(\textbf{97.4})}$ & \textbf{90.2}$_{(\textbf{96.8})}$ & \textbf{95.6}$_{(\textbf{99.7})}$ & \textbf{93.0}$_{(\textbf{98.5})}$ & \textbf{93.2}$_{(\textbf{98.3})}$ \\

\midrule[0.5pt]

ESM++ ($r=32$) & Prototype & 76.5\% & 91.8$_{(99.0)}$ & 88.0$_{(96.8)}$ & 87.2$_{(95.5)}$ & 93.3$_{(98.6)}$ & 90.5$_{(97.5)}$ & 89.5$_{(96.0)}$ & 95.6$_{(99.8)}$ & 93.2$_{(98.8)}$ & 93.1$_{(98.2)}$ \\
ESM++ ($r=32$) & Oracle & 100.0\% & \textbf{92.4}$_{(\textbf{99.5})}$ & \textbf{89.2}$_{(\textbf{98.0})}$ & \textbf{89.2}$_{(\textbf{97.5})}$ & \textbf{94.0}$_{(\textbf{99.3})}$ & \textbf{91.3}$_{(\textbf{98.3})}$ & \textbf{91.3}$_{(\textbf{97.9})}$ & \textbf{95.9}$_{(\textbf{100.0})}$ & \textbf{93.5}$_{(\textbf{99.1})}$ & \textbf{93.8}$_{(\textbf{99.0})}$ \\

\bottomrule[1.5pt]
\end{tabular}%
\end{table*}

\begin{table}[t]
\centering
\caption{Impact of proxy dataset composition on merging performance. ``\textit{Random (ID)}'': random sampling from in-distribution task data. ``\textit{Class Imbalance}'': sampling only a single class per task. ``\textit{Random (OOD)}'': random sampling from out-of-distribution data.}
\label{tab:proxy_composition}
\setlength{\tabcolsep}{3.6pt}
\begin{tabular}{cc|cccc}
\toprule[1.5pt]
\makecell{Decomp.\\Method} & \makecell{Sampling\\Strategy} & ViT-B/32 & ViT-B/16 & ViT-L/14 & RoBERTa
\\
\midrule[0.5pt]


\rowcolor[HTML]{ECECEC}SVD & - & 86.6 & 89.6 & 93.4 & 67.0\\
\rowcolor[HTML]{C6E2FF}ESD & \textit{Random (ID)} & 88.4 & 91.8 & 94.8 & 74.3 \\
ESD & \textit{Class Imbalance} & 88.4 & 91.8 & 94.8 & 72.9 \\
ESD & \textit{Random (OOD)} & 88.3 & 91.8 & 94.8 & 74.2 \\



\bottomrule[1.5pt]
\end{tabular}
\end{table}

\subsubsection{\textbf{Impact of Proxy Dataset Composition}}

We analyze how the composition of the proxy dataset affects ESM. Table \ref{tab:proxy_composition} reports the average performance with a fixed proxy set size of 32 samples, where ViT models are evaluated on the 8-task benchmark and RoBERTa is evaluated on the GLUE benchmark.
Our default setup uses unlabeled samples randomly selected from the corresponding task dataset, denoted as ``\textit{Random (ID)}''. We also consider two challenging scenarios: sampling only from a single class within each task dataset (``\textit{Class Imbalance}'') and sampling from an out-of-distribution dataset (ImageNet-1k \cite{russakovsky2015imagenet} for ViT and WikiText-2 \cite{merity2016pointer} for RoBERTa), denoted as ``\textit{Random (OOD)}''.
The results show that ESM is robust to proxy composition: ViT models remain stable under both class imbalance and OOD sampling, while RoBERTa shows only mild sensitivity to class imbalance.

\subsubsection{\textbf{Impact of Eigenvalue-Based Weighting Power}}

Fig.~\ref{fig:eigen_power} further studies the weighting applied to ESD directions before orthogonalization. Since the singular value of each direction (equivalent to the square root of its corresponding eigenvalue) reflects the variance explained by that output direction, this weighting controls how strongly high-energy output-shift directions are emphasized during the subsequent orthogonalization step. When the power is too small, all directions are treated nearly uniformly, allowing low-energy directions to obscure the knowledge carried by high-energy directions. Conversely, an overly large power over-amplifies the dominant directions and can suppress complementary task-specific information. The best performance appears at a moderate power of $0.3$, highlighted in \colorbox[HTML]{FFF0C1}{gold}, indicating that ESM benefits from softly emphasizing high-energy directions rather than applying either uniform weighting or overly aggressive reweighting.

\subsubsection{\textbf{Prototype-Based and Oracle Routing}}
\label{sec:prototype_oracle_routing}

We further compare two routing strategies for ESM++: prototype-based routing, which selects experts according to their similarity to task prototypes, and oracle routing, which uses the ground-truth task identity. This evaluation isolates the effect of routing accuracy from the capacity of the retained principal components. As shown in Table~\ref{tab:oracle_routing}, even without training an additional router, the prototype-based router achieves an average routing accuracy of about $76\%$. Oracle routing provides an upper bound for ESM++ and measures how much task-specific performance can be preserved by the low-rank essential components when routing is perfect. Notably, because our low-rank decomposition is theoretically guaranteed to minimize the expected output truncation error, retaining only a very small rank of $r=8$ is already highly effective. This rank is tiny compared with the hidden dimensions of ViT-B ($768$) and ViT-L/14 ($1024$), yet the oracle results show that the retained components preserve more than $98\%$ of the expert-model performance across model scales.

\begin{table*}[t]
\centering
\caption{Effect of different base models for ESM++ on the 8-task GLUE benchmark with RoBERTa. We compare using the pre-trained model and the ESM merged model as the shared base for residual expert routing and composition.}
\label{tab:base_model}
\setlength{\tabcolsep}{1.7pt}
\begin{tabular}{lc>{\columncolor[HTML]{FCE4D6}}ccccccccccc}
\toprule[1.5pt]
Method & \makecell{Routing\\Strategy} & \cellcolor[HTML]{FCE4D6}\textbf{Base Model} & \makecell{Routing\\Accuracy} & CoLA & SST-2 & MRPC & STS-B & QQP & MNLI & QNLI & RTE & Avg. \\
\midrule

\rowcolor[HTML]{ECECEC}\textit{Pre-trained} & -- & \cellcolor[HTML]{FCE4D6}-- & -- & 0.0 & 49.1 & 15.8 & 15.0 & 41.1 & 34.2 & 52.4 & 53.4 & 32.6 \\
\rowcolor[HTML]{ECECEC}\textit{Fine-tuned} & -- & \cellcolor[HTML]{FCE4D6}-- & -- & 56.5 & 94.7 & 88.0 & 86.4 & 89.7 & 87.0 & 91.7 & 66.4 & 82.6 \\
\rowcolor[HTML]{ECECEC}ESM & -- & \cellcolor[HTML]{FCE4D6}-- & -- & 42.2$_{(74.7)}$ & 91.1$_{(96.2)}$ & 83.9$_{(95.3)}$ & 74.0$_{(85.6)}$ & 76.6$_{(85.4)}$ & 73.3$_{(84.3)}$ & 85.0$_{(92.7)}$ & 68.2$_{(102.7)}$ & 74.3$_{(89.6)}$ \\

\midrule

\multirow{4}{*}{ESM++ ($r=8$)}
& \multirow{2}{*}{Prototype} & \textbf{Pre-trained} & 71.2\% & 56.0$_{(99.1)}$ & 89.3$_{(94.3)}$ & 82.4$_{(93.6)}$ & 71.1$_{(82.3)}$ & 80.9$_{(90.2)}$ & 47.4$_{(54.5)}$ & 75.8$_{(82.7)}$ & 57.0$_{(85.8)}$ & 70.0$_{(84.7)}$ \\
& & \textbf{ESM-merged} & \textbf{73.3\%} & \textbf{57.5}$_{(\textbf{101.6})}$ & \textbf{93.5}$_{(\textbf{98.7})}$ & \textbf{85.6}$_{(\textbf{97.3})}$ & \textbf{74.3}$_{(\textbf{86.0})}$ & \textbf{87.9}$_{(\textbf{98.0})}$ & \textbf{61.1}$_{(\textbf{70.3})}$ & \textbf{85.7}$_{(\textbf{93.5})}$ & \textbf{59.6}$_{(\textbf{89.7})}$ & \textbf{75.6}$_{(\textbf{91.9})}$ \\

\cmidrule(l){2-13}
& \multirow{2}{*}{Oracle} & \textbf{Pre-trained} & 100\% & 56.0$_{(99.1)}$ & 93.3$_{(98.5)}$ & 86.7$_{(98.5)}$ & 85.3$_{(98.7)}$ & 82.3$_{(91.8)}$ & 83.3$_{(95.7)}$ & 89.2$_{(97.3)}$ & \textbf{65.7}$_{(\textbf{98.9})}$ & 80.2$_{(97.1)}$ \\
& & \textbf{ESM-merged} & 100\% & \textbf{56.2}$_{(\textbf{99.4})}$ & \textbf{94.5}$_{(\textbf{99.8})}$ & \textbf{87.9}$_{(\textbf{99.9})}$ & \textbf{86.3}$_{(\textbf{99.9})}$ & \textbf{87.5}$_{(\textbf{97.5})}$ & \textbf{86.8}$_{(\textbf{99.8})}$ & \textbf{90.5}$_{(98.7)}$ & 65.3$_{(98.4)}$ & \textbf{81.9}$_{(\textbf{99.2})}$ \\

\midrule

\multirow{4}{*}{ESM++ ($r=32$)}
& \multirow{2}{*}{Prototype} & \textbf{Pre-trained} & 71.1\% & \textbf{56.5}$_{(\textbf{100.0})}$ & 89.9$_{(94.9)}$ & 82.3$_{(93.5)}$ & 71.5$_{(82.8)}$ & 84.7$_{(94.4)}$ & 49.3$_{(56.7)}$ & 77.2$_{(84.2)}$ & \textbf{58.5}$_{(\textbf{88.1})}$ & 71.2$_{(86.2)}$ \\
& & \textbf{ESM-merged} & \textbf{72.6\%} & 54.5$_{(96.4)}$ & \textbf{94.2}$_{(\textbf{99.4})}$ & \textbf{84.9}$_{(\textbf{96.4})}$ & \textbf{75.4}$_{(\textbf{87.3})}$ & \textbf{86.6}$_{(\textbf{96.5})}$ & \textbf{69.5}$_{(\textbf{79.9})}$ & \textbf{88.0}$_{(\textbf{96.0})}$ & 56.7$_{(85.3)}$ & \textbf{76.2}$_{(\textbf{92.2})}$ \\

\cmidrule(l){2-13}
& \multirow{2}{*}{Oracle} & \textbf{Pre-trained} & 100\% & 56.3$_{(99.6)}$ & \textbf{94.3}$_{(\textbf{99.6})}$ & 87.3$_{(99.2)}$ & 86.1$_{(99.7)}$ & 86.5$_{(96.4)}$ & 85.8$_{(98.6)}$ & 90.6$_{(98.8)}$ & 66.8$_{(100.6)}$ & 81.7$_{(98.9)}$ \\
& & \textbf{ESM-merged} & 100\% & \textbf{58.0}$_{(\textbf{102.7})}$ & \textbf{94.3}$_{(99.5)}$ & \textbf{88.6}$_{(\textbf{100.7})}$ & \textbf{86.5}$_{(\textbf{100.1})}$ & \textbf{88.3}$_{(\textbf{98.4})}$ & \textbf{85.9}$_{(\textbf{98.7})}$ & \textbf{91.0}$_{(\textbf{99.2})}$ & \textbf{67.5}$_{(\textbf{101.6})}$ & \textbf{82.5}$_{(\textbf{100.1})}$ \\

\bottomrule[1.5pt]
\end{tabular}
\end{table*}

\subsubsection{\textbf{Effect of Base Model for ESM++}}

Table \ref{tab:base_model} studies how the choice of base model affects ESM++ routing and composition. Compared with using the pre-trained model as the shared base, using the ESM merged model consistently yields stronger performance for both prototype-based and oracle routing.
These gains show that ESM provides a better shared representation for extracting and routing residual experts, while ESM++ further restores task-specific specialization on top of this merged model.

\begin{table*}[t]
\centering
\caption{Computational overhead of routing-based merging methods on the 8-task benchmark. TTA denotes test-time adaptation.}
\label{tab:overhead}
\setlength{\tabcolsep}{4.8pt}
\begin{tabular}{lcccccccc}
\toprule[1.5pt]
\multirow{2}{*}{Method} & \multirow{2}{*}{\makecell{Training-Free\\Router}} & \multirow{2}{*}{\makecell{w/o\\TTA}} & \multicolumn{3}{c}{Router Params} & \multicolumn{3}{c}{Expert Params (per Task)}
\\
\cmidrule[0.5pt](lr){4-6}\cmidrule[0.5pt](lr){7-9} &&& ViT-B/32 & ViT-B/16 & ViT-L/14 & ViT-B/32 & ViT-B/16 & ViT-L/14
\\
\midrule[0.5pt]

FREE-Merging \cite{zheng2025free} & \textcolor{red}{\ding{55}} & \textcolor{green}{\ding{51}} & 11,309,896 & 11,311,438 & 11,312,980 & 0.79M ($\sim$0.9\%) & 0.79M ($\sim$0.9\%) & 2.95M ($\sim$0.9\%) \\
E-WEMoE-90\% \cite{shen2026efficient} & \textcolor{red}{\ding{55}} & \textcolor{red}{\ding{55}} & 596,744 & 596,744 & 1,057,800 & 5.67M ($\sim$6.6\%) & 5.67M ($\sim$6.6\%) & 20.14M ($\sim$6.6\%) \\
WEMoE \cite{shen2026efficient} & \textcolor{red}{\ding{55}} & \textcolor{red}{\ding{55}} & 7,160,928 & 7,160,928 & 25,387,200 & 56.67M ($\sim$65.6\%) & 56.67M ($\sim$65.6\%) & 201.45M ($\sim$65.9\%) \\
SMILE \cite{tang2026zero} & \textcolor{green}{\ding{51}} & \textcolor{green}{\ding{51}} & 10,616,832 & 10,616,832 & 28,311,552 & 21.3M ($\sim$24.8\%) & 21.3M ($\sim$24.8\%) & 56.8M ($\sim$18.5\%) \\

\rowcolor[HTML]{C6E2FF}ESM++ ($r=8$, Ours) & \textcolor{green}{\ding{51}} & \textcolor{green}{\ding{51}} & \textbf{147,456} & \textbf{147,456} & \textbf{393,216} & \textbf{0.66M ($\sim$0.8\%)} & \textbf{0.66M ($\sim$0.8\%)} & \textbf{1.67M ($\sim$0.5\%)} \\
\rowcolor[HTML]{C6E2FF}ESM++ ($r=32$, Ours) & \textcolor{green}{\ding{51}} & \textcolor{green}{\ding{51}} & \textbf{147,456} & \textbf{147,456} & \textbf{393,216} & 2.65M ($\sim$3.1\%) & 2.65M ($\sim$3.1\%) & 6.68M ($\sim$2.2\%) \\

\bottomrule[1.5pt]
\end{tabular}%
\end{table*}

\subsubsection{\textbf{Computational Overhead}}

Table \ref{tab:overhead} compares the computational and parameter overhead of recent routing-based model merging methods on the 8-task benchmark. Unlike methods that require learning an additional router, ESM++ is training-free: it only performs a single forward pass over the proxy data to collect task prototypes, which are then directly used for cosine-similarity routing at inference time. ESM++ also does not rely on test-time adaptation (TTA), so the merged model can be applied to test samples without iterative updating or additional optimization. In terms of parameters, both the router and the residual experts are highly lightweight. The prototype router contains only 147K parameters for ViT-B models and 393K for ViT-L/14, while the $r=8$ residual experts require less than $1\%$ of the original model parameters per task. Despite this small overhead, ESM++ achieves state-of-the-art performance, showing that prototype-based routing can preserve task-specific expertise without introducing a heavy router or large expert modules.

\section{Conclusion}
In this paper, we studied model merging from the perspective of output activation shifts induced by task-specific updates. We showed that these shifts concentrate in a few principal directions that better reflect functional changes than parameter-space decomposition, while accumulated low-energy directions can lead to merging interference. Motivated by this, we proposed Essential Subspace Decomposition (ESD) to preserve essential update components, and developed ESM for compact static fusion and ESM++ for dynamic low-rank residual routing. Extensive experiments on vision and language benchmarks demonstrate that our framework achieves strong performance and efficiency, providing a principled approach to structured and reliable model merging.

\textbf{Limitations and Future Work.} Despite its effectiveness, the current method is mainly designed for merging models that share the same architecture and originate from the same base model, where task updates can be directly compared and composed in a common parameter and activation space. Extending this framework to more general scenarios, such as merging models from different sources, training recipes, or architectures, remains an important direction for future work. We hope that the essential-subspace perspective can inspire more universal model fusion methods that operate beyond homogeneous model families.


\bibliographystyle{IEEEtran}
\bibliography{./main}

@String(IJCV = {Int. J. Comput. Vis.})

@String(CVPR= {IEEE Conf. Comput. Vis. Pattern Recog.})

@String(ICCV= {Int. Conf. Comput. Vis.})

@String(ECCV= {Eur. Conf. Comput. Vis.})

@String(ICLR = {Int. Conf. Learn. Represent.})

@String(AAAI = {AAAI})

@String(IJCV  = {IJCV})

@String(CVPR  = {CVPR})

@String(ICCV  = {ICCV})

@String(ECCV  = {ECCV})

@String(ICLR  = {ICLR})

@inproceedings{ilharcoediting,
  title={Editing models with task arithmetic},
  author={Ilharco, Gabriel and Ribeiro, Marco Tulio and Wortsman, Mitchell and Schmidt, Ludwig and Hajishirzi, Hannaneh and Farhadi, Ali},
  booktitle={ICLR},
  year = 2023
}

@inproceedings{gargiulo2025task,
  title={Task singular vectors: Reducing task interference in model merging},
  author={Gargiulo, Antonio Andrea and Crisostomi, Donato and Bucarelli, Maria Sofia and Scardapane, Simone and Silvestri, Fabrizio and Rodola, Emanuele},
  booktitle={CVPR},
  year={2025}
}

@inproceedings{marczakno,
  title={No Task Left Behind: Isotropic Model Merging with Common and Task-Specific Subspaces},
  author={Marczak, Daniel and Magistri, Simone and Cygert, Sebastian and Twardowski, Bart{\l}omiej and Bagdanov, Andrew D and van de Weijer, Joost},
  booktitle={ICML},
  year={2025}
}

@inproceedings{wortsman2022model,
  title={Model soups: averaging weights of multiple fine-tuned models improves accuracy without increasing inference time},
  author={Wortsman, Mitchell and Ilharco, Gabriel and Gadre, Samir Ya and Roelofs, Rebecca and Gontijo-Lopes, Raphael and Morcos, Ari S and Namkoong, Hongseok and Farhadi, Ali and Carmon, Yair and Kornblith, Simon and others},
  booktitle={ICML},
  year={2022},
}

@inproceedings{yadav2023ties,
  title={Ties-merging: Resolving interference when merging models},
  author={Yadav, Prateek and Tam, Derek and Choshen, Leshem and Raffel, Colin A and Bansal, Mohit},
  booktitle={NeurIPS},
  year={2023}
}

@inproceedings{yangadamerging,
  title={AdaMerging: Adaptive Model Merging for Multi-Task Learning},
  author={Yang, Enneng and Wang, Zhenyi and Shen, Li and Liu, Shiwei and Guo, Guibing and Wang, Xingwei and Tao, Dacheng},
  booktitle={ICLR},
  year={2024}
}

@inproceedings{yu2024language,
  title={Language models are super mario: Absorbing abilities from homologous models as a free lunch},
  author={Yu, Le and Yu, Bowen and Yu, Haiyang and Huang, Fei and Li, Yongbin},
  booktitle={ICML},
  year={2024}
}

@inproceedings{matena2022merging,
  title={Merging models with fisher-weighted averaging},
  author={Matena, Michael S and Raffel, Colin A},
  booktitle={NeurIPS},
  year={2022}
}

@inproceedings{jindataless,
  title={Dataless Knowledge Fusion by Merging Weights of Language Models},
  author={Jin, Xisen and Ren, Xiang and Preotiuc-Pietro, Daniel and Cheng, Pengxiang},
  booktitle={ICLR},
  year={2023}
}

@inproceedings{hulora,
  title={LoRA: Low-Rank Adaptation of Large Language Models},
  author={Hu, Edward J and Wallis, Phillip and Allen-Zhu, Zeyuan and Li, Yuanzhi and Wang, Shean and Wang, Lu and Chen, Weizhu and others},
  booktitle={ICLR},
  year={2022}
}

@inproceedings{wangsvd,
  title={SVD-LLM: Truncation-aware Singular Value Decomposition for Large Language Model Compression},
  author={Wang, Xin and Zheng, Yu and Wan, Zhongwei and Zhang, Mi},
  booktitle={ICLR},
  year={2025}
}

@inproceedings{wanglocalizing,
  title={Localizing Task Information for Improved Model Merging and Compression},
  author={Wang, Ke and Dimitriadis, Nikolaos and Ortiz-Jimenez, Guillermo and Fleuret, Fran{\c{c}}ois and Frossard, Pascal},
  booktitle={ICML},
  year={2024}
}

@inproceedings{radford2021learning,
  title={Learning transferable visual models from natural language supervision},
  author={Radford, Alec and Kim, Jong Wook and Hallacy, Chris and Ramesh, Aditya and Goh, Gabriel and Agarwal, Sandhini and Sastry, Girish and Askell, Amanda and Mishkin, Pamela and Clark, Jack and others},
  booktitle={ICML},
  year={2021},
}

@inproceedings{krause20133d,
  title={3d object representations for fine-grained categorization},
  author={Krause, Jonathan and Stark, Michael and Deng, Jia and Fei-Fei, Li},
  booktitle={ICCV workshops},
  year={2013}
}

@inproceedings{cimpoi2014describing,
  title={Describing textures in the wild},
  author={Cimpoi, Mircea and Maji, Subhransu and Kokkinos, Iasonas and Mohamed, Sammy and Vedaldi, Andrea},
  booktitle={CVPR},
  year={2014}
}

@article{helber2019eurosat,
  title={Eurosat: A novel dataset and deep learning benchmark for land use and land cover classification},
  author={Helber, Patrick and Bischke, Benjamin and Dengel, Andreas and Borth, Damian},
  journal={JSTARS},
  year={2019},
}

@inproceedings{stallkamp2011german,
  title={The German traffic sign recognition benchmark: a multi-class classification competition},
  author={Stallkamp, Johannes and Schlipsing, Marc and Salmen, Jan and Igel, Christian},
  booktitle={IJCNN},
  year={2011},
}

@article{lecun2002gradient,
  title={Gradient-based learning applied to document recognition},
  author={LeCun, Yann and Bottou, L{\'e}on and Bengio, Yoshua and Haffner, Patrick},
  journal={Proceedings of the IEEE},
  year={2002},
}

@article{cheng2017remote,
  title={Remote sensing image scene classification: Benchmark and state of the art},
  author={Cheng, Gong and Han, Junwei and Lu, Xiaoqiang},
  journal={Proceedings of the IEEE},
  year={2017},
}

@article{xiao2016sun,
  title={Sun database: Exploring a large collection of scene categories},
  author={Xiao, Jianxiong and Ehinger, Krista A and Hays, James and Torralba, Antonio and Oliva, Aude},
  journal={IJCV},
  year={2016},
}

@inproceedings{netzer2011reading,
  title={Reading digits in natural images with unsupervised feature learning},
  author={Netzer, Yuval and Wang, Tao and Coates, Adam and Bissacco, Alessandro and Wu, Baolin and Ng, Andrew Y and others},
  booktitle={NeurIPS workshops},
  year={2011}
}

@techreport{krizhevsky2009learning,
  title={Learning multiple layers of features from tiny images},
  author={Krizhevsky, Alex and Hinton, Geoffrey and others},
  year={2009},
  Institution = {Toronto, ON, Canada}
}

@inproceedings{coates2011analysis,
  title={An analysis of single-layer networks in unsupervised feature learning},
  author={Coates, Adam and Ng, Andrew and Lee, Honglak},
  booktitle={AISTATS},
  year={2011},
}

@inproceedings{nilsback2008automated,
  title={Automated flower classification over a large number of classes},
  author={Nilsback, Maria-Elena and Zisserman, Andrew},
  booktitle={ICVGIP},
  year={2008},
}

@inproceedings{parkhi2012cats,
  title={Cats and dogs},
  author={Parkhi, Omkar M and Vedaldi, Andrea and Zisserman, Andrew and Jawahar, CV},
  booktitle={CVPR},
  year={2012},
}

@inproceedings{veeling2018rotation,
  title={Rotation equivariant CNNs for digital pathology},
  author={Veeling, Bastiaan S and Linmans, Jasper and Winkens, Jim and Cohen, Taco and Welling, Max},
  booktitle={MICCAI},
  year={2018},
}

@inproceedings{goodfellow2013challenges,
  title={Challenges in representation learning: A report on three machine learning contests},
  author={Goodfellow, Ian J and Erhan, Dumitru and Carrier, Pierre Luc and Courville, Aaron and Mirza, Mehdi and Hamner, Ben and Cukierski, Will and Tang, Yichuan and Thaler, David and Lee, Dong-Hyun and others},
  booktitle={ICONIP},
  year={2013},
}

@inproceedings{cohen2017emnist,
  title={EMNIST: Extending MNIST to handwritten letters},
  author={Cohen, Gregory and Afshar, Saeed and Tapson, Jonathan and Van Schaik, Andre},
  booktitle={IJCNN},
  year={2017},
}

@inproceedings{bossard2014food,
  title={Food-101--mining discriminative components with random forests},
  author={Bossard, Lukas and Guillaumin, Matthieu and Van Gool, Luc},
  booktitle={ECCV},
  year={2014},
}

@article{xiao2017fashion,
  title={Fashion-mnist: a novel image dataset for benchmarking machine learning algorithms},
  author={Xiao, Han and Rasul, Kashif and Vollgraf, Roland},
  journal={arXiv preprint arXiv:1708.07747},
  year={2017}
}

@inproceedings{socher2013recursive,
  title={Recursive deep models for semantic compositionality over a sentiment treebank},
  author={Socher, Richard and Perelygin, Alex and Wu, Jean and Chuang, Jason and Manning, Christopher D and Ng, Andrew Y and Potts, Christopher},
  booktitle={EMNLP},
  year={2013}
}

@article{clanuwat2018deep,
  title={Deep learning for classical japanese literature},
  author={Clanuwat, Tarin and Bober-Irizar, Mikel and Kitamoto, Asanobu and Lamb, Alex and Yamamoto, Kazuaki and Ha, David},
  journal={arXiv preprint arXiv:1812.01718},
  year={2018}
}

@article{russakovsky2015imagenet,
  title={Imagenet large scale visual recognition challenge},
  author={Russakovsky, Olga and Deng, Jia and Su, Hao and Krause, Jonathan and Satheesh, Sanjeev and Ma, Sean and Huang, Zhiheng and Karpathy, Andrej and Khosla, Aditya and Bernstein, Michael and others},
  journal={IJCV},
  year={2015},
}

@inproceedings{stoicamodel,
  title={Model merging with SVD to tie the Knots},
  author={Stoica, George and Ramesh, Pratik and Ecsedi, Boglarka and Choshen, Leshem and Hoffman, Judy},
  booktitle={ICLR},
  year={2025},
}

@inproceedings{du2024parameter,
  title={Parameter competition balancing for model merging},
  author={Du, Guodong and Lee, Junlin and Li, Jing and Jiang, Runhua and Guo, Yifei and Yu, Shuyang and Liu, Hanting and Goh, Sim K and Tang, Ho-Kin and He, Daojing and others},
  booktitle={NeurIPS},
  year={2024}
}

@inproceedings{zhang2024knowledge,
  title={Knowledge composition using task vectors with learned anisotropic scaling},
  author={Zhang, Frederic Z and Albert, Paul and Rodriguez-Opazo, Cristian and van den Hengel, Anton and Abbasnejad, Ehsan},
  booktitle={NeurIPS},
  year={2024}
}

@inproceedings{wanglines,
  title={LiNeS: Post-training Layer Scaling Prevents Forgetting and Enhances Model Merging},
  author={Wang, Ke and Dimitriadis, Nikolaos and Favero, Alessandro and Ortiz-Jimenez, Guillermo and Fleuret, Fran{\c{c}}ois and Frossard, Pascal},
  booktitle={ICLR},
  year={2025}
}

@inproceedings{chengwhoever,
  title={Whoever Started the interference Should End It: Guiding Data-Free Model Merging via Task Vectors},
  author={Cheng, Runxi and Xiong, Feng and Wei, Yongxian and Zhu, Wanyun and Yuan, Chun},
  booktitle={ICML},
  year={2025}
}

@inproceedings{huang2024emr,
  title={Emr-merging: Tuning-free high-performance model merging},
  author={Huang, Chenyu and Ye, Peng and Chen, Tao and He, Tong and Yue, Xiangyu and Ouyang, Wanli},
  booktitle={NeurIPS},
  year={2024}
}

@inproceedings{zheng2025free,
  title={Free-merging: Fourier transform for efficient model merging},
  author={Zheng, Shenghe and Wang, Hongzhi},
  booktitle={ICCV},
  year={2025}
}

@inproceedings{sun2022singular,
  title={Singular value fine-tuning: Few-shot segmentation requires few-parameters fine-tuning},
  author={Sun, Yanpeng and Chen, Qiang and He, Xiangyu and Wang, Jian and Feng, Haocheng and Han, Junyu and Ding, Errui and Cheng, Jian and Li, Zechao and Wang, Jingdong},
  booktitle={NeurIPS},
  year={2022}
}

@inproceedings{han2023svdiff,
  title={Svdiff: Compact parameter space for diffusion fine-tuning},
  author={Han, Ligong and Li, Yinxiao and Zhang, Han and Milanfar, Peyman and Metaxas, Dimitris and Yang, Feng},
  booktitle={ICCV},
  year={2023}
}

@inproceedings{wang2025svd,
  title={SVD-LLM V2: Optimizing Singular Value Truncation for Large Language Model Compression},
  author={Wang, Xin and Alam, Samiul and Wan, Zhongwei and Shen, Hui and Zhang, Mi},
  booktitle={NAACL},
  year={2025}
}

@inproceedings{li2023losparse,
  title={Losparse: Structured compression of large language models based on low-rank and sparse approximation},
  author={Li, Yixiao and Yu, Yifan and Zhang, Qingru and Liang, Chen and He, Pengcheng and Chen, Weizhu and Zhao, Tuo},
  booktitle={ICML},
  year={2023},
}

@inproceedings{saha2023matrix,
  title={Matrix compression via randomized low rank and low precision factorization},
  author={Saha, Rajarshi and Srivastava, Varun and Pilanci, Mert},
  booktitle={NeurIPS},
  year={2023}
}

@inproceedings{dettmers2023qlora,
  title={Qlora: Efficient finetuning of quantized llms},
  author={Dettmers, Tim and Pagnoni, Artidoro and Holtzman, Ari and Zettlemoyer, Luke},
  booktitle={NeurIPS},
  year={2023}
}

@article{ding2023parameter,
  title={Parameter-efficient fine-tuning of large-scale pre-trained language models},
  author={Ding, Ning and Qin, Yujia and Yang, Guang and Wei, Fuchao and Yang, Zonghan and Su, Yusheng and Hu, Shengding and Chen, Yulin and Chan, Chi-Min and Chen, Weize and others},
  journal={NMI},
  year={2023},
}

@inproceedings{kornblith2019similarity,
  title={Similarity of neural network representations revisited},
  author={Kornblith, Simon and Norouzi, Mohammad and Lee, Honglak and Hinton, Geoffrey},
  booktitle={ICML},
  year={2019},
}

@inproceedings{yan2025calm,
  title={CALM: Consensus-Aware Localized Merging for Multi-Task Learning},
  author={Yan, Kunda and Zhang, Min and Cui, Sen and Zikun, Qu and Jiang, Bo and Liu, Feng and Zhang, Changshui},
  booktitle={ICML},
  year={2025}
}

@inproceedings{sun2025towards,
  title={Towards Minimizing Feature Drift in Model Merging: Layer-wise Task Vector Fusion for Adaptive Knowledge Integration},
  author={Sun, Wenju and Li, Qingyong and Wang, Wen and Liu, Yang and Geng, Yangliao and Li, Boyang},
  booktitle={NeurIPS},
  year={2025}
}

@inproceedings{zhang2026dc,
  title={DC-Merge: Improving Model Merging with Directional Consistency},
  author={Zhang, Han-Chen and Zhou, Zi-Hao and Luo, Mao-Lin and Di, Shimin and Zhang, Min-Ling and Wei, Tong},
  booktitle={CVPR},
  year={2026}
}

@article{he2024localize,
  title={Localize-and-Stitch: Efficient Model Merging via Sparse Task Arithmetic},
  author={He, Yifei and Hu, Yuzheng and Lin, Yong and Zhang, Tong and Zhao, Han},
  journal={TMLR},
  year={2024}
}

@inproceedings{li2026model,
  title={Model merging in the essential subspace},
  author={Li, Longhua and Qi, Lei and Tian, Qi and Geng, Xin},
  booktitle={CVPR},
  year={2026}
}

@article{shen2026efficient,
  title={Efficient and effective weight-ensembling mixture of experts for multi-task model merging},
  author={Shen, Li and Tang, Anke and Yang, Enneng and Guo, Guibing and Luo, Yong and Zhang, Lefei and Cao, Xiaochun and Du, Bo and Tao, Dacheng},
  journal={TPAMI},
  year={2026}
}

@article{tang2026zero,
  title={Zero-Shot Sparse Mixture of Low-Rank Experts Construction From Pre-Trained Foundation Models},
  author={Tang, Anke and Shen, Li and Luo, Yong and Xie, Shuai and Hu, Han and Zhang, Lefei and Du, Bo and Tao, Dacheng},
  journal={TPAMI},
  year={2026}
}

@article{li2026improving,
  title={Improving Model Fusion by Training-time Neuron Alignment with Fixed Neuron Anchors},
  author={Li, Zexi and Li, Zhiqi and Lin, Jie and Shen, Tao and Xiao, Jun and Guo, Yike and Lin, Tao and Wu, Chao},
  journal={TPAMI},
  year={2026}
}

@article{singh2020model,
  title={Model fusion via optimal transport},
  author={Singh, Sidak Pal and Jaggi, Martin},
  journal={NeurIPS},
  volume={33},
  pages={22045--22055},
  year={2020}
}

@article{tatro2020optimizing,
  title={Optimizing mode connectivity via neuron alignment},
  author={Tatro, Norman and Chen, Pin-Yu and Das, Payel and Melnyk, Igor and Sattigeri, Prasanna and Lai, Rongjie},
  journal={NeurIPS},
  volume={33},
  pages={15300--15311},
  year={2020}
}

@inproceedings{pena2023re,
  title={Re-basin via implicit sinkhorn differentiation},
  author={Pe{\~n}a, Fidel A Guerrero and Medeiros, Heitor Rapela and Dubail, Thomas and Aminbeidokhti, Masih and Granger, Eric and Pedersoli, Marco},
  booktitle={CVPR},
  pages={20237--20246},
  year={2023}
}

@article{merity2016pointer,
  title={Pointer sentinel mixture models},
  author={Merity, Stephen and Xiong, Caiming and Bradbury, James and Socher, Richard},
  journal={arXiv preprint arXiv:1609.07843},
  year={2016}
}

@inproceedings{sun2025cat,
  title={CAT Merging: A Training-Free Approach for Resolving Conflicts in Model Merging},
  author={Sun, Wenju and Li, Qingyong and Geng, Yangliao and Li, Boyang},
  booktitle={ICML},
  pages={57523--57543},
  year={2025},
  organization={PMLR}
}

@article{li2026energy,
  title={Energy-Structured Low-Rank Adaptation for Continual Learning},
  author={Li, Longhua and Qi, Lei and Tian, Qi and Geng, Xin},
  journal={arXiv preprint arXiv:2605.27482},
  year={2026}
}

@article{he2026mergebench,
  title={Mergebench: A benchmark for merging domain-specialized llms},
  author={He, Yifei and Zeng, Siqi and Hu, Yuzheng and Yang, Rui and Zhang, Tong and Zhao, Han},
  journal={NeurIPS},
  volume={38},
  year={2026}
}

@inproceedings{wang2018glue,
  title={GLUE: A multi-task benchmark and analysis platform for natural language understanding},
  author={Wang, Alex and Singh, Amanpreet and Michael, Julian and Hill, Felix and Levy, Omer and Bowman, Samuel},
  booktitle={EMNLP workshop},
  pages={353--355},
  year={2018}
}

@article{liu2019roberta,
  title={Roberta: A robustly optimized bert pretraining approach},
  author={Liu, Yinhan and Ott, Myle and Goyal, Naman and Du, Jingfei and Joshi, Mandar and Chen, Danqi and Levy, Omer and Lewis, Mike and Zettlemoyer, Luke and Stoyanov, Veselin},
  journal={arXiv preprint arXiv:1907.11692},
  year={2019}
}

@inproceedings{li2026stratified,
  title={Stratified Knowledge-Density Super-Network for Scalable Vision Transformers},
  author={Li, Longhua and Qi, Lei and Geng, Xin},
  booktitle={AAAI},
  volume={40},
  number={27},
  pages={22985--22993},
  year={2026}
}

@article{grattafiori2024llama,
  title={The llama 3 herd of models},
  author={Grattafiori, Aaron and Dubey, Abhimanyu and Jauhri, Abhinav and Pandey, Abhinav and Kadian, Abhishek and Al-Dahle, Ahmad and Letman, Aiesha and Mathur, Akhil and Schelten, Alan and Vaughan, Alex and others},
  journal={arXiv preprint arXiv:2407.21783},
  year={2024}
}

@article{zhou2023instruction,
  title={Instruction-following evaluation for large language models},
  author={Zhou, Jeffrey and Lu, Tianjian and Mishra, Swaroop and Brahma, Siddhartha and Basu, Sujoy and Luan, Yi and Zhou, Denny and Hou, Le},
  journal={arXiv preprint arXiv:2311.07911},
  year={2023}
}

@article{cobbe2021training,
  title={Training verifiers to solve math word problems},
  author={Cobbe, Karl and Kosaraju, Vineet and Bavarian, Mohammad and Chen, Mark and Jun, Heewoo and Kaiser, Lukasz and Plappert, Matthias and Tworek, Jerry and Hilton, Jacob and Nakano, Reiichiro and others},
  journal={arXiv preprint arXiv:2110.14168},
  year={2021}
}

@inproceedings{lai2023okapi,
  title={Okapi: Instruction-tuned large language models in multiple languages with reinforcement learning from human feedback},
  author={Lai, Viet and Nguyen, Chien and Ngo, Nghia and Nguy{\~{e}}n, Thu{\d{a}}t and Dernoncourt, Franck and Rossi, Ryan and Nguyen, Thien},
  booktitle={EMNLP},
  pages={318--327},
  year={2023}
}

@article{chen2021evaluating,
  title={Evaluating large language models trained on code},
  author={Chen, Mark and Tworek, Jerry and Jun, Heewoo and Yuan, Qiming and Pinto, Henrique Ponde De Oliveira and Kaplan, Jared and Edwards, Harri and Burda, Yuri and Joseph, Nicholas and Brockman, Greg and others},
  journal={arXiv preprint arXiv:2107.03374},
  year={2021}
}

@article{austin2021program,
  title={Program synthesis with large language models},
  author={Austin, Jacob and Odena, Augustus and Nye, Maxwell and Bosma, Maarten and Michalewski, Henryk and Dohan, David and Jiang, Ellen and Cai, Carrie and Terry, Michael and Le, Quoc and others},
  journal={arXiv preprint arXiv:2108.07732},
  year={2021}
}

@article{han2024wildguard,
  title={Wildguard: Open one-stop moderation tools for safety risks, jailbreaks, and refusals of llms},
  author={Han, Seungju and Rao, Kavel and Ettinger, Allyson and Jiang, Liwei and Lin, Bill Yuchen and Lambert, Nathan and Choi, Yejin and Dziri, Nouha},
  journal={NeurIPS},
  volume={37},
  pages={8093--8131},
  year={2024}
}

@article{mazeika2024harmbench,
  title={Harmbench: A standardized evaluation framework for automated red teaming and robust refusal},
  author={Mazeika, Mantas and Phan, Long and Yin, Xuwang and Zou, Andy and Wang, Zifan and Mu, Norman and Sakhaee, Elham and Li, Nathaniel and Basart, Steven and Li, Bo and others},
  journal={arXiv preprint arXiv:2402.04249},
  year={2024}
}

@inproceedings{shen2024anything,
  title={" do anything now": Characterizing and evaluating in-the-wild jailbreak prompts on large language models},
  author={Shen, Xinyue and Chen, Zeyuan and Backes, Michael and Shen, Yun and Zhang, Yang},
  booktitle={ACM CCS},
  pages={1671--1685},
  year={2024}
}

@inproceedings{rottger2024xstest,
  title={Xstest: A test suite for identifying exaggerated safety behaviours in large language models},
  author={R{\"o}ttger, Paul and Kirk, Hannah and Vidgen, Bertie and Attanasio, Giuseppe and Bianchi, Federico and Hovy, Dirk},
  booktitle={NAACL},
  pages={5377--5400},
  year={2024}
}

\vspace{-30pt}
\begin{IEEEbiography}[{\IfFileExists{photo/longhua-li.jpg}{\includegraphics[width=1in,height=1.25in,clip,keepaspectratio]{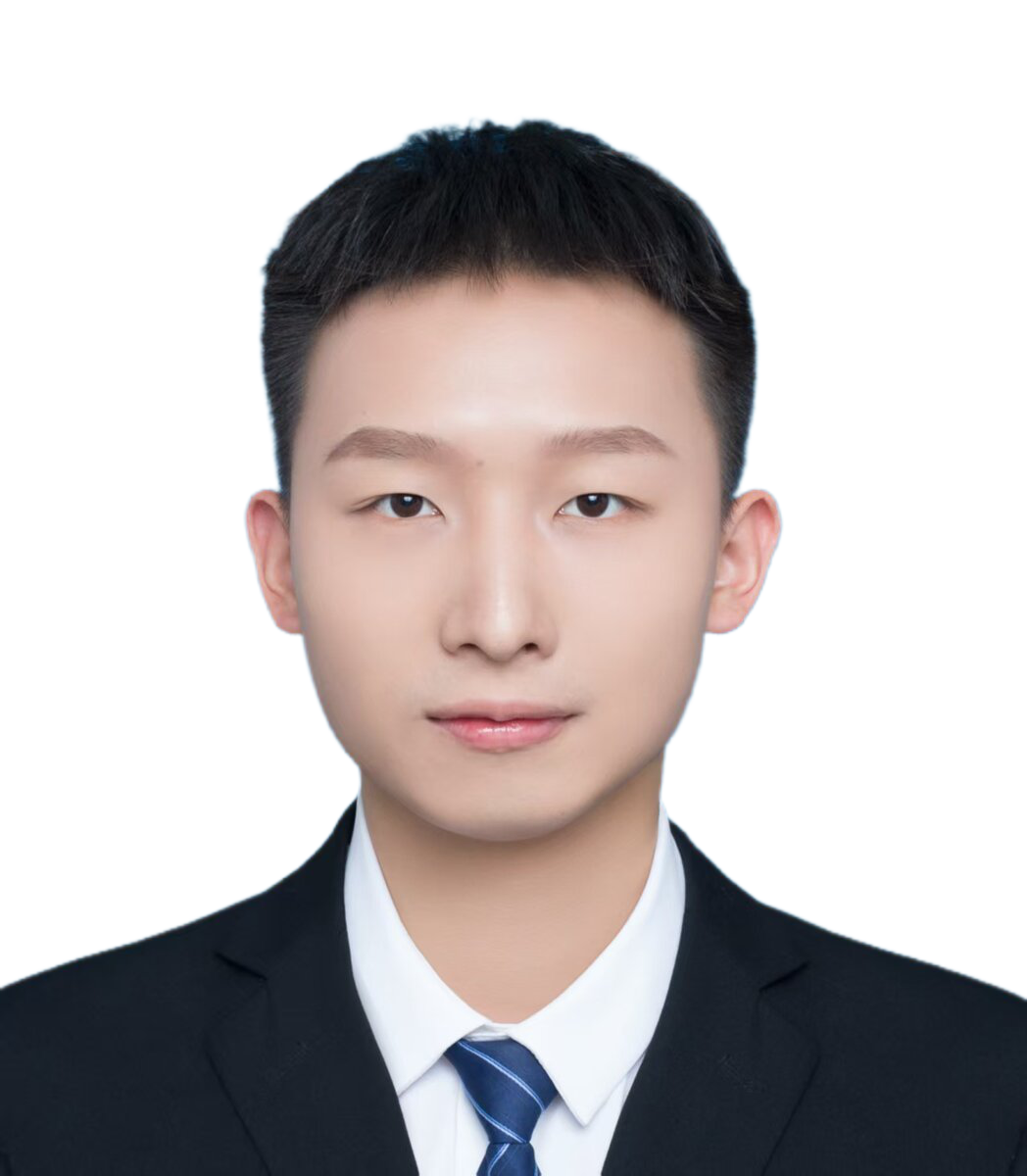}}{}}]{Longhua Li}
received the B.S. degree in Artificial Intelligence from Shandong University in 2023. He is currently pursuing a Ph.D. degree in Artificial Intelligence at Southeast University. His research interests include machine learning and computer vision.
\end{IEEEbiography}

\vspace{-15pt}
\begin{IEEEbiography}[{\IfFileExists{photo/lei-qi.png}{\includegraphics[width=1in,height=1.25in,clip,keepaspectratio]{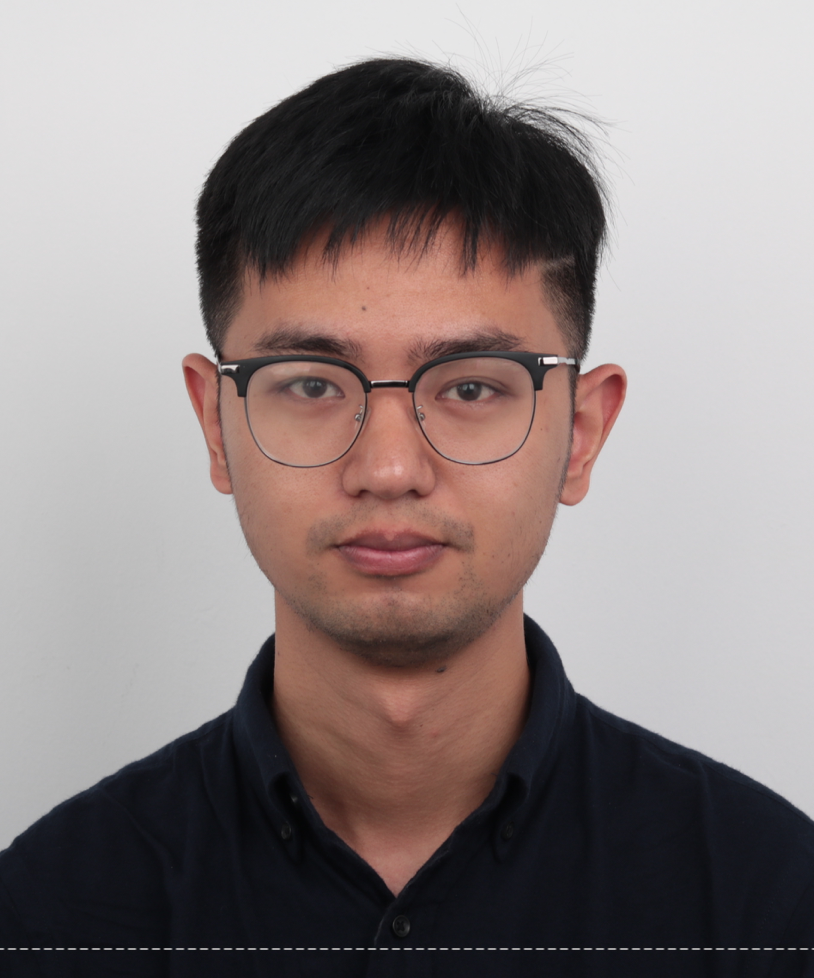}}{}}]{Lei Qi}
received the Ph.D. degree from the Department of Computer Science and Technology of Nanjing University in 2020. He is currently an associate professor in the School of Computer Science and Engineering of Southeast University, China. His research interests include some ML methods, such as domain adaptation, semi-supervised learning, unsupervised learning and meta-learning. For applications, he mainly focuses on person re-identification, semantic segmentation and object detection.
\end{IEEEbiography}

\vspace{-15pt}
\begin{IEEEbiography}[{\IfFileExists{photo/xin-geng.jpeg}{\includegraphics[width=1in,height=1.25in,clip,keepaspectratio]{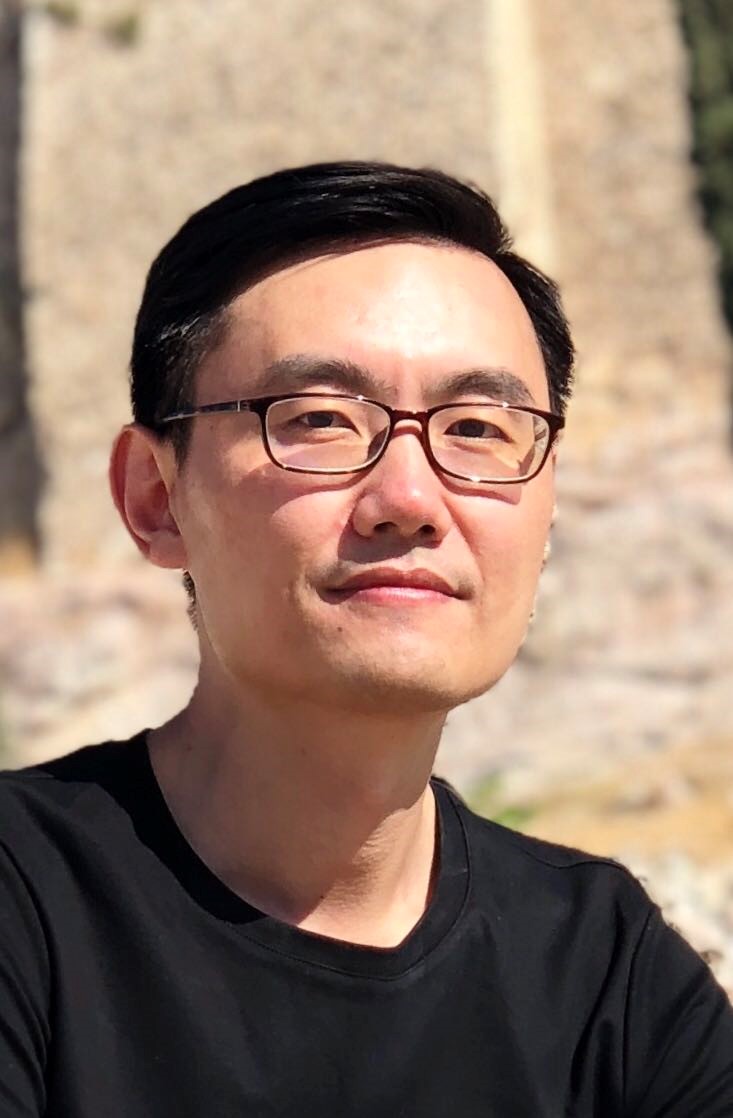}}{}}]{Xin Geng}
(Senior Member, IEEE) is currently a professor and the dean of School of Computer Science and Engineering at Southeast University, China. He received the B.Sc. (2001) and M.Sc. (2004) degrees in computer science from Nanjing University, China, and the Ph.D. (2008) degree in computer science from Deakin University, Australia. His research interests include machine learning, pattern recognition, and computer vision. He has published over 70 refereed papers in these areas, including those published in prestigious journals and top international conferences. He has been an Associate Editor of IEEE T-MM, FCS and MFC, a Steering Committee Member of PRICAI, a Program Committee Chair for conferences such as PRICAI'18, VALSE'13, etc., an Area Chair for conferences such as CVPR, ACMMM, PRCV, CCPR, and a Senior Program Committee Member for conferences such as IJCAI, AAAI, ECAI, etc. He is a Distinguished Fellow of IETI.
\end{IEEEbiography}

\vspace{-15pt}
\begin{IEEEbiography}[{\IfFileExists{photo/qi-tian.jpeg}{\includegraphics[width=1in,height=1.25in,clip,keepaspectratio]{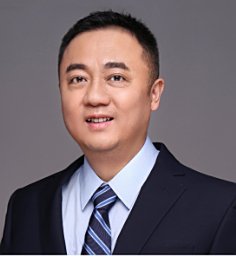}}{}}]{Qi Tian}
(Fellow, IEEE) received the PhD degree in ECE from the University of Illinois at Urbana Champaign (UIUC), in 2002. He is currently the chief scientist in Artificial Intelligence with Huawei Cloud \& AI. He was the chief scientist in computer vision with Huawei Noah's Ark Laboratory from 2018--2020. Before he joined Huawei, he was a full professor with the Department of Computer Science, The University of Texas at San Antonio (UTSA) (2002--2019). He was listed in the Top 10 of the 2016 Most Influential Scholars in Multimedia by Aminer.org. He is an Academician of International Eurasian Academy of Sciences (IEAS) Fellow, 2021. He received 2017 UTSA President Distinguished Award for Research Achievement, 2016 UTSA Innovation Award in the first category, 2014 Research Achievement Awards from College of Science, UTSA, and 2010 Google Faculty Research Award. He has served as founding member of ICMR, (2009--2014), ACM MM (2009--2012), and international steering committee member for ACM MIR (2006--2010), ACM ICIMCS 2013, ICME 2006 and 2009, PCM 2012, and IEEE International Symposium on Multimedia 2011, chair for ACM Multimedia 2015. He is the associate editor of IEEE Transactions on Multimedia, IEEE Transactions on Circuits and Systems for Video Technology, ACM Transactions on Multimedia Computing, Communications, and Applications, MMSJ, and Journal of Machine Vision and Applications.
\end{IEEEbiography}

\clearpage
\onecolumn
\clearpage
\setcounter{page}{1}
\maketitlesupplementary

\appendices
\newtheorem{propositionplain}{Proposition}
\newenvironment{proposition}{%
  \begin{theorembox}\begin{propositionplain}%
}{%
  \end{propositionplain}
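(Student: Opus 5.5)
The plan is to prove the SVD truncation identity in Equation~\ref{eq:svd_loss} by writing the truncation residual explicitly in the singular basis and using orthonormality of the left singular vectors to kill all cross terms. Start from the compact SVD $\Delta W=\sum_{i=1}^{s}\sigma_i u_i v_i^{\top}$, with $\{u_i\}$ and $\{v_i\}$ orthonormal families and $\sigma_1\ge\cdots\ge\sigma_s>0$. The rank-$r$ truncation is $\widehat{\Delta W}=\sum_{i=1}^{r}\sigma_i u_i v_i^{\top}$. The residual is therefore
\[
R:=\Delta W-\widehat{\Delta W}=\sum_{i=r+1}^{s}\sigma_i u_i v_i^{\top},
\]
so that for any fixed input $x$ we have $Rx=\sum_{i=r+1}^{s}\sigma_i (v_i^{\top}x)\,u_i$, with each $v_i^{\top}x$ a scalar.

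Next I will compute the squared norm pointwise in $x$. Expanding $\|Rx\|_2^2=(Rx)^{\top}(Rx)$ gives a double sum
\[
\|Rx\|_2^2=\sum_{i,j=r+1}^{s}\sigma_i\sigma_j\,(v_i^{\top}x)(v_j^{\top}x)\,u_i^{\top}u_j .
\]
Orthonormality $u_i^{\top}u_j=\delta_{ij}$ collapses this to the diagonal, yielding $\|Rx\|_2^2=\sum_{i=r+1}^{s}\sigma_i^2 (v_i^{\top}x)^2$. Taking $\mathbb{E}_{x\sim\mathcal{D}}$ and using linearity of expectation over the finite sum gives the claimed formula. The only assumption needed is that $\mathcal{D}$ has finite second moments, so that each $\mathbb{E}[(v_i^{\top}x)^2]=v_i^{\top}\mathbb{E}[xx^{\top}]v_i$ is finite. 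I will remark on this in passing, since it also exhibits the error as $\sum_{i>r}\sigma_i^2\, v_i^{\top}\Sigma_x v_i$ with $\Sigma_x=\mathbb{E}[xx^{\top}]$, which makes the dependence on input alignment explicit.

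There is no genuine obstacle here; the step requiring care is bookkeeping. First, the orthonormality used must be that of the \emph{left} singular vectors $u_i$, not the right ones. Second, the right singular vectors enter only through the scalars $v_i^{\top}x$, which is exactly why no simplification to $\sum\sigma_i^2$ occurs.

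For the companion ESD statement (Equation~\ref{eq:esd_loss}) I would reuse the same template. The residual becomes $(I-\hat E\hat E^{\top})\Delta W x=\sum_{i>r}e_i e_i^{\top}\Delta W x$, and orthonormality of the $e_i$ gives $\sum_{i>r}(e_i^{\top}\Delta W x)^2$. Its expectation is $e_i^{\top}\Delta W\Sigma_x\Delta W^{\top}e_i=\lambda_i$, since the $e_i$ are eigenvectors of the (uncentered) shift covariance. The one point to check there is that the PCA is taken on the uncentered second moment of $\Delta O$, or that the shifts are zero-mean.
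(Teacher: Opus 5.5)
Your proposal is correct and follows the paper's proof essentially step for step: you write the residual as $\sum_{i=r+1}^{s}\sigma_i u_i v_i^{\top}$, apply it to $x$, use orthonormality of the left singular vectors $u_i$ to reduce $\|Rx\|_2^2$ to $\sum_{i>r}\sigma_i^2 (v_i^{\top}x)^2$, and then take expectations by linearity. Your sketch for the ESD companion also matches the paper's argument, and your centering caveat is settled there because the paper defines $M_y=\mathbb{E}[yy^{\top}]$ as the uncentered second moment, equivalently taking the right singular vectors of the uncentered $\Delta O$.
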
\end{theorembox}%
}

\section{Proofs}

This section provides the derivations of the expected output error after truncation for both the standard SVD and the proposed Essential Subspace Decomposition (ESD), as well as the connection between polar normalization and the Orthogonal Procrustes solution.

\subsection{Proof for SVD Truncation Error}
\label{sec:Proof_SVD_Truncation_Error}

\begin{proposition}
Given a task matrix $\Delta W \in \mathbb{R}^{d_{\text{out}} \times d_{\text{in}}}$ with singular value decomposition $\Delta W = U\Sigma V^\top = \sum_{i=1}^s \sigma_i u_i v_i^\top$, where $s=\operatorname{rank}(\Delta W)$. Let $\widehat{\Delta W} = \sum_{i=1}^r \sigma_i u_i v_i^\top$ be its top-$r$ truncated approximation. For an input $x$ drawn from a distribution $\mathcal{D}$, the expected squared $L_2$ error on the output activation is
$$ \mathbb{E}_{x\sim\mathcal{D}} \left[ \|\Delta W x - \widehat{\Delta W} x \|_2^2 \right] = \sum_{i=r+1}^s\sigma_{i}^{2} \cdot \mathbb{E}_{x\sim\mathcal{D}} \left[ (v_{i}^{\top}x)^2 \right].
$$
\end{proposition}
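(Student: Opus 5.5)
The proof is a direct computation from the SVD. First write the truncation residual explicitly. Since $\Delta W = \sum_{i=1}^s \sigma_i u_i v_i^\top$ and $\widehat{\Delta W} = \sum_{i=1}^r \sigma_i u_i v_i^\top$, we have
\begin{equation*}
\Delta W x - \widehat{\Delta W} x = \sum_{i=r+1}^{s} \sigma_i u_i (v_i^\top x)
\end{equation*}
for every fixed input $x$. The residual is a linear combination of the left singular vectors $u_{r+1},\dots,u_s$ with scalar coefficients $\sigma_i (v_i^\top x)$.

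Next, compute the squared norm pointwise in $x$ using orthonormality of the left singular vectors, $u_i^\top u_j = \delta_{ij}$. Expanding the inner product of the residual with itself, all cross terms $i\neq j$ vanish. This gives
\begin{equation*}
\|\Delta W x - \widehat{\Delta W} x\|_2^2 = \sum_{i=r+1}^{s} \sigma_i^2 (v_i^\top x)^2 ,
\end{equation*}
which is the Pythagorean identity in the orthonormal system $\{u_i\}$.

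Finally, take the expectation over $x\sim\mathcal{D}$. The sum is finite, with $s\le \min(d_{\text{out}},d_{\text{in}})$ terms, so linearity of expectation applies. The $\sigma_i$ are deterministic, so the expectation passes inside each term and yields the stated formula.

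The only point needing care, and the closest thing to an obstacle, is integrability: each $\mathbb{E}[(v_i^\top x)^2]$ must be finite. This holds whenever $\mathcal{D}$ has finite second moments, since $(v_i^\top x)^2 \le \|x\|_2^2$ because $\|v_i\|_2=1$. Without that assumption both sides equal $+\infty$ simultaneously, because all terms are nonnegative. So the identity still holds in the extended sense via monotone convergence for nonnegative sums.

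Orthogonality of the $v_i$ is not needed anywhere in the argument. The cross terms are eliminated by orthonormality of the $u_i$ alone.
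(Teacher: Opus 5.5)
Your proposal is correct and follows the paper's proof essentially line for line: write the residual as $\sum_{i=r+1}^{s}\sigma_i u_i (v_i^\top x)$, use orthonormality of the $u_i$ to eliminate cross terms, and then apply linearity of expectation. Your integrability remark is a refinement the paper leaves implicit, with one correction: without finite second moments the two sides need not both be infinite. What holds is that they agree in $[0,\infty]$, by additivity of the integral for nonnegative functions.
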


\begin{proof}
The error matrix resulting from the truncation is the sum of the discarded components:
\begin{equation}
\Delta W - \widehat{\Delta W} = \sum_{i=r+1}^{s} \sigma_i u_i v_i^\top.
\end{equation}
The error on the output activation for a given input $x$ is:
\begin{equation}
(\Delta W - \widehat{\Delta W})x = \left( \sum_{i=r+1}^{s} \sigma_i u_i v_i^\top \right) x = \sum_{i=r+1}^{s} \sigma_i u_i (v_i^\top x).
\end{equation}
Since $v_i^\top x$ is a scalar, we can rewrite this as a linear combination of the orthonormal vectors $u_i$. The squared $L_2$ norm of this error vector is:
\begin{equation}
\|(\Delta W - \widehat{\Delta W})x\|_2^2 = \left\| \sum_{i=r+1}^{s} (\sigma_i v_i^\top x) u_i \right\|_2^2.
\end{equation}
Because the left singular vectors $\{u_i\}$ form an orthonormal set, the squared norm of their weighted sum is the sum of the squares of the weights:
\begin{equation}
\|(\Delta W - \widehat{\Delta W})x\|_2^2 = \sum_{i=r+1}^{s} (\sigma_i v_i^\top x)^2 = \sum_{i=r+1}^{s} \sigma_i^2 (v_i^\top x)^2.
\end{equation}
By taking the expectation over the input distribution $\mathcal{D}$ and applying the linearity of expectation, we arrive at the final expression:
\begin{equation}
\mathbb{E}_{x\sim\mathcal{D}} \left[ \|\Delta W x - \widehat{\Delta W} x \|_2^2 \right] = \sum_{i=r+1}^s\sigma_{i}^{2} \cdot \mathbb{E}_{x\sim\mathcal{D}} \left[ (v_{i}^{\top}x)^2 \right].
\end{equation}
This completes the proof.
\end{proof}

\subsection{Proof for ESD Truncation Error}
\label{sec:Proof_ESD_Truncation_Error}

\begin{proposition}
Given a task matrix $\Delta W \in \mathbb{R}^{d_{\text{out}} \times d_{\text{in}}}$ and the activation shift $y=\Delta W x$, let $M_y=\mathbb{E}_{x\sim\mathcal{D}}[yy^\top]$ be the uncentered second-moment matrix of output shifts. Let $\{e_i\}_{i=1}^{d_{\text{out}}}$ be the eigenvectors of $M_y$, with eigenvalues $\lambda_1\geq\cdots\geq\lambda_{d_{\text{out}}}\geq0$. Let $\hat{E}=[e_1,\dots,e_r]$ and $\widehat{\Delta W}=\hat{E}\hat{C}=\hat{E}(\hat{E}^\top\Delta W)$ be the ESD reconstruction. Then
$$ \mathbb{E}_{x\sim\mathcal{D}} \left[ \|\Delta W x - \widehat{\Delta W} x \|_2^2 \right] = \sum_{i=r+1}^{d_{\text{out}}} \lambda_i. $$
For empirical ESD, the same identity holds with $\mathcal{D}$ replaced by the empirical proxy distribution; equivalently, $\{e_i\}$ are the right singular vectors of $\Delta O=X_{\text{proxy}}\Delta W^\top$, and $\lambda_i$ are the corresponding squared singular values up to the empirical normalization constant.
\end{proposition}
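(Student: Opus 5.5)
The plan is to express the residual as an orthogonal projection of the output shift and then take expectations through a trace. Since $\{e_i\}_{i=1}^{d_{\text{out}}}$ is an orthonormal eigenbasis of the symmetric PSD matrix $M_y$, the matrix $P=\hat{E}\hat{E}^\top$ is the orthogonal projector onto $\mathrm{span}\{e_1,\dots,e_r\}$. Its complement is $I-P=\sum_{i=r+1}^{d_{\text{out}}} e_ie_i^\top$. For any fixed input $x$, the ESD reconstruction gives $\widehat{\Delta W}x=P\Delta W x=Py$. Hence the pointwise error is
\[
\Delta Wx-\widehat{\Delta W}x=(I-P)y=\sum_{i=r+1}^{d_{\text{out}}}(e_i^\top y)\,e_i .
\]
By orthonormality, its squared norm is $\sum_{i>r}(e_i^\top y)^2$, in direct parallel with the SVD proposition proved just above.

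Next I take expectations over $x\sim\mathcal{D}$ and use linearity. For each $i$,
\[
\mathbb{E}[(e_i^\top y)^2]=e_i^\top\,\mathbb{E}[yy^\top]\,e_i=e_i^\top M_y e_i=\lambda_i .
\]
Summing over $i=r+1,\dots,d_{\text{out}}$ gives the claimed $\sum_{i>r}\lambda_i$. Equivalently, the error equals $\operatorname{tr}\big((I-P)M_y\big)$, which can be evaluated in the eigenbasis. The only regularity needed is finiteness of the second moment $\mathbb{E}\|x\|_2^2$, so that $M_y=\Delta W\,\mathbb{E}[xx^\top]\,\Delta W^\top$ exists. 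I would state this as a standing assumption.

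For the empirical version, take $\mathcal{D}$ to be the uniform distribution over the $n$ rows of $X_{\text{proxy}}$. Then
\[
M_y=\tfrac1n\sum_j y_jy_j^\top=\tfrac1n\,\Delta O^\top\Delta O .
\]
If $\Delta O=U_O\Sigma_O V_O^\top$, then $\Delta O^\top\Delta O=V_O\Sigma_O^2V_O^\top$. So the eigenvectors of $M_y$ are the right singular vectors of $\Delta O$, and $\lambda_i=\sigma_i(\Delta O)^2/n$. The identity then follows verbatim from the population argument.

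The main obstacle is not computational but one of interpretation. "PCA" in the main text might suggest centering, while the statement uses the uncentered second moment. With centering, the identity would fail by a mean term $\|(I-P)\mathbb{E}y\|^2$, so I would explicitly note that the uncentered convention is what makes the identity exact. I would also remark that ties among eigenvalues do not matter, since any orthonormal choice within an eigenspace gives the same sum. Finally, optimality over all rank-$r$ projections follows from Ky Fan's maximum principle applied to $\operatorname{tr}(PM_y)$.
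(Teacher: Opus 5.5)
Your proposal is correct and follows the paper's proof essentially step for step. You write the error as $(I-\hat{E}\hat{E}^\top)y$, expand it in the orthonormal eigenbasis, and use $e_i^\top M_y e_i=\lambda_i$ after taking expectations; the empirical case follows because $\Delta O^\top\Delta O$ is diagonalized by the right singular vectors of $\Delta O$. Your extra remarks are accurate refinements the paper leaves implicit: the explicit $\lambda_i=\sigma_i(\Delta O)^2/n$ normalization, the observation that the identity needs the uncentered second moment (centering adds $\|(I-P)\mathbb{E}y\|_2^2$), and the finite-second-moment assumption.
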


\begin{proof}
The error on the output activation for an input $x$ is:
\begin{equation}
\Delta W x - \widehat{\Delta W} x = \Delta W x - \hat{E} \hat{E}^\top \Delta W x = (I - \hat{E} \hat{E}^\top) \Delta W x.
\end{equation}
The matrix $(I - \hat{E} \hat{E}^\top)$ is the projection matrix onto the subspace spanned by the discarded directions $\{e_{r+1}, \dots, e_{d_{\text{out}}}\}$. Let $y = \Delta W x$ be the activation shift for input $x$. The error vector can be expressed as the projection of $y$ onto this orthogonal subspace:
\begin{equation}
(I - \hat{E} \hat{E}^\top) y = \sum_{i=r+1}^{d_{\text{out}}} (e_i^\top y) e_i.
\end{equation}
Since $\{e_i\}$ form an orthonormal basis, the squared $L_2$ norm is the sum of the squares of the projection coefficients:
\begin{equation}
\begin{aligned}
\|\Delta W x - \widehat{\Delta W} x \|_2^2 &= \left\| \sum_{i=r+1}^{d_{\text{out}}} (e_i^\top y) e_i \right\|_2^2 \\
&= \sum_{i=r+1}^{d_{\text{out}}} (e_i^\top y)^2.
\end{aligned}
\end{equation}
Taking the expectation over $\mathcal{D}$ gives
\begin{equation}
\begin{aligned}
\mathbb{E}_{x\sim\mathcal{D}} \left[ \|\Delta W x - \widehat{\Delta W} x \|_2^2 \right]
&= \sum_{i=r+1}^{d_{\text{out}}} \mathbb{E}_{x\sim\mathcal{D}} \left[ (e_i^\top y)^2 \right] \\
&= \sum_{i=r+1}^{d_{\text{out}}} e_i^\top \mathbb{E}_{x\sim\mathcal{D}}[yy^\top] e_i.
\end{aligned}
\end{equation}
By the definition of $M_y$, each retained or discarded direction satisfies
\begin{equation}
M_y e_i = \lambda_i e_i,
\end{equation}
and hence
\begin{equation}
e_i^\top M_y e_i = \lambda_i.
\end{equation}
Substituting this identity into the expected error yields
\begin{equation}
\mathbb{E}_{x\sim\mathcal{D}} \left[ \|\Delta W x - \widehat{\Delta W} x \|_2^2 \right] = \sum_{i=r+1}^{d_{\text{out}}} \lambda_i.
\end{equation}
Empirically, if $\Delta O=U\Sigma V^\top$ is the SVD of the uncentered activation-shift matrix, then the columns of $V$ are exactly the eigenvectors of $\Delta O^\top\Delta O$, i.e., the uncentered second-moment directions of output shifts. This completes the proof.
\end{proof}

\subsection{Equivalence of Polar Normalization and Orthogonal Procrustes}
\label{sec:whitening_procrustes}
\begin{proposition}
Let $X=U\Sigma V^\top$ be the compact SVD of a matrix $X$. The polar factor $UV^\top$ can be obtained from the column Gram matrix as $X((X^\top X)^\dagger)^{1/2}$, where $\dagger$ denotes the Moore--Penrose inverse. When $X$ has full column rank, this reduces to $X(X^\top X)^{-1/2}=UV^\top$. This polar factor is the Orthogonal Procrustes projection of $X$; in the full-rank rectangular case, it is the closest matrix with the corresponding orthonormal-column or orthonormal-row constraint.
\end{proposition}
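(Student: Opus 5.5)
The plan is to work throughout with the compact SVD $X=U\Sigma V^\top$, where $U\in\mathbb{R}^{m\times k}$ and $V\in\mathbb{R}^{n\times k}$ have orthonormal columns, $\Sigma=\operatorname{diag}(\sigma_1,\dots,\sigma_k)$ with $\sigma_i>0$, and $k=\operatorname{rank}(X)$. The argument has three steps.

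\textbf{Step 1 (Gram-matrix formula).} First compute the Gram matrix:
\begin{equation}
X^\top X=V\Sigma^2V^\top .
\end{equation}
Its Moore--Penrose inverse is $(X^\top X)^\dagger=V\Sigma^{-2}V^\top$. This can be checked directly against the four Penrose conditions, using $V^\top V=I_k$. The positive semidefinite square root is then $((X^\top X)^\dagger)^{1/2}=V\Sigma^{-1}V^\top$. Multiplying on the left by $X$ gives
\begin{equation}
X\,V\Sigma^{-1}V^\top=U\Sigma V^\top V\Sigma^{-1}V^\top=UV^\top .
\end{equation}
This proves the first claim.

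\textbf{Step 2 (full column rank).} When $X$ has full column rank, $k=n$, so $V$ is square orthogonal. Then $X^\top X$ is invertible, the pseudoinverse coincides with the inverse, and the formula reduces to $X(X^\top X)^{-1/2}=UV^\top$. In this case $UV^\top$ has orthonormal columns, since $(UV^\top)^\top UV^\top=VV^\top=I_n$. The full-row-rank case is symmetric: apply the same argument to $X^\top$, or use $(XX^\top)^{-1/2}X$.

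\textbf{Step 3 (Procrustes optimality).} This is the main step. I will show that $UV^\top$ minimizes $\|Q-X\|_F$ over all $Q$ with $Q^\top Q=I_n$ (the orthonormal-column case, $m\ge n$). Expand
\begin{equation}
\|Q-X\|_F^2=\|Q\|_F^2-2\operatorname{tr}(Q^\top X)+\|X\|_F^2=n-2\operatorname{tr}(Q^\top X)+\|X\|_F^2 .
\end{equation}
Minimizing the distance is therefore equivalent to maximizing
\begin{equation}
\operatorname{tr}(Q^\top U\Sigma V^\top)=\operatorname{tr}(\Sigma\, V^\top Q^\top U).
\end{equation}
Set $Z=V^\top Q^\top U$. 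It is a product of matrices with operator norm at most one, so each diagonal entry satisfies $|Z_{ii}|\le 1$. Hence the trace is at most $\sum_i\sigma_i$, with equality when $Z=I$. Substituting $Q=UV^\top$ gives $Z=V^\top VU^\top U=I$, so the bound is attained and $UV^\top$ is the maximizer.

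The main obstacle is this trace inequality. A secondary concern is uniqueness, which holds when all $\sigma_i>0$ and is all the full-rank claim needs. In the rank-deficient case, $UV^\top$ is merely a partial isometry and the minimizer is not unique. There I will only assert that $UV^\top$ is the canonical polar factor, which is how the statement is worded.
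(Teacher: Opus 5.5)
Your proposal is correct, and Steps 1--2 follow the paper's own argument. The paper computes $X^\top X=V\Sigma^2V^\top$, takes $((X^\top X)^\dagger)^{1/2}=V\Sigma^{-1}V^\top$, and multiplies by $X$ to obtain $UV^\top$. You additionally check the Penrose conditions and the orthonormality of $UV^\top$, which the paper leaves implicit.

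The difference is Step 3. The paper does not prove Procrustes optimality; it only asserts that $UV^\top$ ``gives the Frobenius-norm Orthogonal Procrustes projection, with the usual non-uniqueness only in rank-deficient null-space directions.'' Your trace bound $\operatorname{tr}(\Sigma Z)\le\sum_i\sigma_i$, using $\|Z\|_2\le 1$, actually establishes that claim, so your argument is more complete than the paper's.

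The statement says ``the closest matrix,'' so do not merely assert uniqueness; add the short equality case. With all $\sigma_i>0$, equality forces $Z_{ii}=1$ for every $i$, hence $\operatorname{tr}(Z)=n$. Both $QV$ and $U$ have orthonormal columns, so
$\|QV-U\|_F^2=2n-2\operatorname{tr}(V^\top Q^\top U)=2n-2\operatorname{tr}(Z)=0$.
This gives $QV=U$, and therefore $Q=UV^\top$.

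Your treatment of the rank-deficient case, where $UV^\top$ is only a partial isometry and the minimizer is not unique, matches the paper's caveat.
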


\begin{proof}
Let $X=U\Sigma V^\top$ be the compact SVD, where the diagonal entries of $\Sigma$ are positive. Then
\begin{equation}
X^\top X = V\Sigma^2V^\top.
\end{equation}
Using the Moore--Penrose inverse square root gives
\begin{equation}
((X^\top X)^\dagger)^{1/2}=V\Sigma^{-1}V^\top.
\end{equation}
Substituting this into the whitening transformation yields
\begin{equation}
\begin{aligned}
X((X^\top X)^\dagger)^{1/2} &= (U\Sigma V^\top)(V\Sigma^{-1}V^\top)\\
&= UV^\top.
\end{aligned}
\end{equation}
The matrix $UV^\top$ is the polar factor of $X$ and gives the Frobenius-norm Orthogonal Procrustes projection, with the usual non-uniqueness only in rank-deficient null-space directions. This completes the proof.
\end{proof}

\section{Method Details}
\subsection{Methodology Pseudocode}
Algorithm~\ref{alg:esm_m} summarizes ESM, the static merging variant of our framework. The colored blocks highlight its three main stages: \colorbox{orange!18}{essential subspace decomposition}, \colorbox{blue!12}{cross-task concatenation}, and \colorbox{green!14}{orthogonalization/reconstruction}. ESM first decomposes each task matrix within its essential subspace and truncates the retained components, then concatenates the task-specific factors and orthogonalizes them to reconstruct a single merged update.

\newcommand{\algblockheader}[2]{%
\STATE{\setlength{\fboxsep}{2pt}\colorbox{#1}{\parbox{0.92\linewidth}{\centering\textit{#2}}}}}
\newcommand{\algcolorstate}[2]{%
\STATE{\setlength{\fboxsep}{2pt}\colorbox{#1}{\parbox{0.92\linewidth}{#2}}}}

\begin{algorithm}[t]
\caption{ESM}
\label{alg:esm_m}
\begin{algorithmic}[1]
\REQUIRE Task matrices $\{\Delta W_t^{(\ell)}\}_{t=1}^T$ for all layers $\ell \in \mathcal{L}$, pre-trained weights $\{W_0^{(\ell)}\}_{\ell\in\mathcal{L}}$, validation set $\mathcal{D}_{\text{val}}$
\ENSURE ESM merged model parameters $\{W_{\text{ESM}}^{(\ell)}\}_{\ell\in\mathcal{L}}$

\FOR{each task $t = 1$ to $T$ and each layer $\ell \in \mathcal{L}$}
    \algblockheader{orange!18}{\textbf{Essential Subspace Decomposition}}
    \algcolorstate{orange!10}{Obtain the essential basis $E_t^{(\ell)}$ following ESD in Section~\ref{sec:ESD} (Eq.~\ref{eq:activation_shift})}
    \algcolorstate{orange!10}{Project the task matrix onto this basis: $C_t^{(\ell)} \gets (E_t^{(\ell)})^\top \Delta W_t^{(\ell)}$ (Eq.~\ref{eq:esd_factorization})}
    \algcolorstate{orange!10}{Set $r \gets \lfloor d_{\text{out}}/T \rfloor$ and retain $\hat{E}_t^{(\ell)} \gets E_t^{(\ell)}[:,1:r]$, $\hat{C}_t^{(\ell)} \gets C_t^{(\ell)}[1:r,:]$ (Eq.~\ref{eq:esd_approximation})}
\ENDFOR

\FOR{each layer $\ell \in \mathcal{L}$}
    \algblockheader{blue!12}{\textbf{Concatenation}}
    \algcolorstate{blue!8}{Stack retained bases: $E_{\text{cat}}^{(\ell)} \gets [\hat{E}_1^{(\ell)}, \hat{E}_2^{(\ell)}, \ldots, \hat{E}_T^{(\ell)}]$ (Eq.~\ref{eq:E_cat})}
    \algcolorstate{blue!8}{Stack retained coordinates: $C_{\text{cat}}^{(\ell)} \gets [\hat{C}_1^{(\ell)}; \hat{C}_2^{(\ell)}; \ldots; \hat{C}_T^{(\ell)}]$ (Eq.~\ref{eq:C_cat})}
    \algblockheader{green!14}{\textbf{Orthogonalization and Reconstruction}}
    \algcolorstate{green!8}{Compute SVDs of $E_{\text{cat}}^{(\ell)}$ and $C_{\text{cat}}^{(\ell)}$ (Eq.~\ref{eq:esm_orthogonalization_svd})}
    \algcolorstate{green!8}{Obtain orthogonal factors $E_{\text{ortho}}^{(\ell)} \gets U_E^{(\ell)}(V_E^{(\ell)})^\top$, $C_{\text{ortho}}^{(\ell)} \gets U_C^{(\ell)}(V_C^{(\ell)})^\top$ (Eq.~\ref{eq:esm_orthogonalization})}
    \algcolorstate{green!8}{Reconstruct the ESM merged update $\Delta W_{\text{ESM}}^{(\ell)} \gets E_{\text{ortho}}^{(\ell)} C_{\text{ortho}}^{(\ell)}$ (Eq.~\ref{eq:esm_reconstruction})}
\ENDFOR

\STATE Select the global coefficient $\alpha^*$ using $\mathcal{D}_{\text{val}}$
\FOR{each layer $\ell \in \mathcal{L}$}
    \STATE Update ESM weights: $W_{\text{ESM}}^{(\ell)} \gets W_0^{(\ell)} + \alpha^* \Delta W_{\text{ESM}}^{(\ell)}$ (Eq.~\ref{eq:esm_final_merge})
\ENDFOR

\STATE \RETURN $\{W_{\text{ESM}}^{(\ell)}\}_{\ell\in\mathcal{L}}$
\end{algorithmic}
\end{algorithm}

Algorithm~\ref{alg:esm_r} presents ESM++, the dynamic routing variant. Its stages mirror the same color scheme: \colorbox{orange!18}{low-rank expert extraction}, \colorbox{blue!12}{prototype collection}, and \colorbox{green!14}{prototype-based routing and forward}. ESM++ first extracts task-specific residual experts with ESD, then builds task prototypes from proxy features, and finally selects the most relevant expert for each layer during inference.

\begin{algorithm}[t]
\caption{ESM++}
\label{alg:esm_r}
\begin{algorithmic}[1]
\REQUIRE ESM merged weights $\{W_{\text{ESM}}^{(\ell)}\}_{\ell\in\mathcal{L}}$, task-specific weights $\{W_t^{(\ell)}\}_{t=1}^T$, proxy dataset $\mathcal{D}_{\text{proxy}}$, test input $x$
\ENSURE Output prediction $y$

\FOR{each task $t = 1$ to $T$ and each layer $\ell \in \mathcal{L}$}
    \algblockheader{orange!18}{\textbf{Low-Rank Expert Extraction}}
    \algcolorstate{orange!10}{Compute residual update $\delta W_t^{(\ell)} \gets W_t^{(\ell)} - W_{\text{ESM}}^{(\ell)}$ (Eq.~\ref{eq:esmr_residual})}
    \algcolorstate{orange!10}{Apply ESD to $\delta W_t^{(\ell)}$ and retain low-rank expert factors $(\hat{B}_t^{(\ell)},\hat{A}_t^{(\ell)})$ following Section~\ref{sec:ESM++}}
\ENDFOR

\FOR{each task $t = 1$ to $T$ and each layer $\ell \in \mathcal{L}$}
    \algblockheader{blue!12}{\textbf{Prototype Collection}}
    \algcolorstate{blue!8}{Run the fine-tuned model on $\mathcal{D}_{\text{proxy}}$ and collect layer input features $X_t^{(\ell)}$}
    \algcolorstate{blue!8}{Build task prototype $p_t^{(\ell)} \gets \frac{1}{n}\sum_{i=1}^{n}X_{t,i}^{(\ell)}$ by mean pooling (Eq.~\ref{eq:esmr_prototype})}
\ENDFOR

\STATE Initialize activations with input $x$
\FOR{each layer $\ell \in \mathcal{L}$ during inference}
    \algblockheader{green!14}{\textbf{Prototype-Based Routing and Forward}}
    \algcolorstate{green!8}{Mean-pool current layer input features to obtain $\bar{x}^{(\ell)}$}
    \algcolorstate{green!8}{Compute routing score $s_t^{(\ell)} \gets \frac{(\bar{x}^{(\ell)})^\top p_t^{(\ell)}}{\|\bar{x}^{(\ell)}\|_2\|p_t^{(\ell)}\|_2}$ for each task $t$ (Eq.~\ref{eq:esmr_routing_score})}
    \algcolorstate{green!8}{Select $t^* \gets \operatorname{arg\,max}_t s_t^{(\ell)}$ and compose $W_{\text{ESM++}}^{(\ell)} \gets W_{\text{ESM}}^{(\ell)} + \hat{B}_{t^*}^{(\ell)}\hat{A}_{t^*}^{(\ell)}$ (Eq.~\ref{eq:esmr_expert_weight})}
    \algcolorstate{green!8}{Perform the layer forward pass using $W_{\text{ESM++}}^{(\ell)}$}
\ENDFOR

\STATE \RETURN prediction $y$
\end{algorithmic}
\end{algorithm}

\subsection{Merging Non-Matrix Parameters}
While most parameters in the transformer architecture are 2D matrices merged using our proposed ESM within the Essential Subspace, the network also includes other parameter types. For non-matrix parameters such as bias vectors, layer normalization parameters, and the convolutional stem, we follow the standard practice in \cite{gargiulo2025task} and apply simple averaging.

\subsection{Target Layers for ESM}
We primarily apply ESM to the linear layers in transformer blocks, including the query, key, value, and output projections in the attention module, as well as the up- and down-projection layers in the MLP. Based on the eigenvalue distribution of the output shifts across these layers, we select the query, key, value, and MLP up-projection layers as the target layers for ESM merging and ESM++ routing in ViT-based vision models. The remaining layers are merged by simply averaging the corresponding parameters across all fine-tuned models. For language models, we apply ESM to all linear layers in the transformer blocks.

\section{Experiment Details}

\subsection{Generative Language Model Evaluation Datasets}
\label{sec:generative_lm_eval_datasets}

Following MergeBench \cite{he2026mergebench}, we evaluate generative language model merging across instruction-following, mathematics, multilingual understanding, coding, and safety abilities. The evaluation datasets and metrics are summarized in Table~\ref{tab:mergebench_eval_datasets}.

\begin{table*}[t]
\centering
\caption{Datasets used for generative language model evaluation, following MergeBench \cite{he2026mergebench}.}
\label{tab:mergebench_eval_datasets}
\setlength{\tabcolsep}{6pt}
\begin{tabular}{llll}
\toprule[1.5pt]
\textbf{Category} & \textbf{Dataset} & \textbf{Metric} & \textbf{\# Data} \\
\midrule[0.5pt]
Instruction-following & IFEval \cite{zhou2023instruction} & Prompt-Level Loose Accuracy, Inst-Level Loose Accuracy & 541 \\
\midrule[0.5pt]
Mathematics & GSM8K \cite{cobbe2021training} & Exact-Match, (Flexible-Extract, 8-shot CoT) & 1320 \\
\midrule[0.5pt]
\multirow{3}{*}{Multilingual understanding} & M\_MMLU \cite{lai2023okapi} & Accuracy & 60K \\
& M\_ARC \cite{lai2023okapi} & Normalized Accuracy & 10.34K \\
& M\_Hellaswag \cite{lai2023okapi} & Normalized Accuracy & 37.35K \\
\midrule[0.5pt]
\multirow{2}{*}{Coding} & Humaneval+ \cite{chen2021evaluating} & Pass@1 & 164 \\
& MBPP+ \cite{austin2021program} & Pass@1 & 378 \\
\midrule[0.5pt]
\multirow{4}{*}{Safety} & WildGuardTest \cite{han2024wildguard} & RTA (Refuse To Answer) & 1730 \\
& HarmBench \cite{mazeika2024harmbench} & RTA (Refuse To Answer) & 410 \\
& DoAnythingNow \cite{shen2024anything} & RTA (Refuse To Answer) & 15.14K \\
& XSTest \cite{rottger2024xstest} & Accuracy & 450 \\
\bottomrule[1.5pt]
\end{tabular}
\end{table*}

\begin{table*}[t]
\centering
\caption{Fine-grained ablation study of the three levels in Polarized Scaling. Results are reported in terms of average absolute accuracy, with normalized average accuracy shown as subscripts in parentheses.}
\label{tab:ablation_ps}
\setlength{\tabcolsep}{4.4pt}
\begin{tabular}{ccc|ccccccccc}
\toprule[1.5pt]
\multicolumn{3}{c|}{Polarized Scaling} & \multicolumn{3}{c}{ViT-B/32} & \multicolumn{3}{c}{ViT-B/16} & \multicolumn{3}{c}{ViT-L/14}
\\
\cmidrule[0.5pt](lr){1-3}\cmidrule[0.5pt](lr){4-6}\cmidrule[0.5pt](lr){7-9}\cmidrule[0.5pt](lr){10-12} Inter-Layer & Inter-Task & Inter-Dimension & 8 tasks & 14 tasks & 20 tasks & 8 tasks & 14 tasks & 20 tasks & 8 tasks & 14 tasks & 20 tasks
\\
\midrule[0.5pt]

\textcolor{red}{\ding{55}} & \textcolor{red}{\ding{55}} & \textcolor{red}{\ding{55}} & 87.1$_{(93.7)}$ & 81.8$_{(89.8)}$ & 79.8$_{(87.3)}$ & 91.2$_{(96.3)}$ & 86.4$_{(93.0)}$ & 83.6$_{(89.6)}$ & 94.5$_{(98.6)}$ & 90.8$_{(96.1)}$ & 89.7$_{(94.5)}$ \\

\textcolor{green}{\ding{51}} & \textcolor{red}{\ding{55}} & \textcolor{red}{\ding{55}} & 88.4$_{(95.2)}$ & 83.6$_{(91.9)}$ & 81.7$_{(89.3)}$ & \textbf{91.7}$_{(\textbf{86.8})}$ & \textbf{87.6}$_{(94.3)}$ & 85.1$_{(91.3)}$ & 94.6$_{(98.7)}$ & \textbf{91.3}$_{(96.7)}$ & 90.4$_{(95.3)}$ \\

\textcolor{red}{\ding{55}} & \textcolor{green}{\ding{51}} & \textcolor{red}{\ding{55}} & 87.3$_{(94.0)}$ & 82.3$_{(90.4)}$ & 80.4$_{(88.0)}$ & 91.5$_{(96.6)}$ & 87.0$_{(93.6)}$ & 84.2$_{(90.4)}$ & 94.7$_{(98.8)}$ & 90.9$_{(96.3)}$ & 90.0$_{(95.0)}$ \\

\textcolor{red}{\ding{55}} & \textcolor{red}{\ding{55}} & \textcolor{green}{\ding{51}} & 87.5$_{(94.2)}$ & 82.0$_{(90.1)}$ & 79.7$_{(87.1)}$ & 91.2$_{(96.3)}$ & 86.5$_{(90.1)}$ & 83.6$_{(89.7)}$ & 94.4$_{(98.5)}$ & 90.7$_{(96.0)}$ & 89.5$_{(94.3)}$ \\

\rowcolor[HTML]{C6E2FF}\textcolor{green}{\ding{51}} & \textcolor{green}{\ding{51}} & \textcolor{green}{\ding{51}} & \textbf{88.6}$_{(\textbf{95.4})}$ & \textbf{83.9}$_{(\textbf{92.4})}$ & \textbf{82.3}$_{(\textbf{90.1})}$ & 91.6$_{(96.7)}$ & \textbf{87.6}$_{(\textbf{94.4})}$ & \textbf{85.3}$_{(\textbf{91.6})}$ & \textbf{94.7}$_{(\textbf{98.8})}$ & \textbf{91.3}$_{(\textbf{96.8})}$ & \textbf{90.7}$_{(\textbf{95.7})}$ \\

\bottomrule[1.5pt]
\end{tabular}
\end{table*}

\subsection{Details on Polarized Scaling}
For ViT model merging, we incorporate Polarized Scaling as an additional norm-based rescaling step before composing task updates. This section provides a more detailed explanation and analysis of this strategy, including the empirical motivation, the scaling mechanism, and the contribution of its different levels.

\subsubsection{Empirical Evidence: Pairwise Task Interaction.}
We further analyze the pairwise influence between task matrices. As shown in \figpanel{fig:task_interaction}{a}, each column represents how the performance of two tasks changes when the task update of the column task is added to the fine-tuned model of the row task. We compare two layer-wise loading orders: adding large-norm updates first and adding small-norm updates first. The results show that descending norm order yields a better average performance than ascending norm order. This indicates that large-norm updates are more task-critical: although they may perturb the invaded task, they substantially improve the source task. In contrast, low-norm updates can still harm the invaded task while providing limited benefit to the source task. This highlights the importance of suppressing less critical or noisy updates while emphasizing the most essential ones in model merging.

\subsubsection{Polarized Scaling Method}
Motivated by this observation, we apply Polarized Scaling to increase the contrast among parameter updates before merging. Specifically, updates with larger norms are further amplified because they are more likely to correspond to task-critical directions or consensus knowledge accumulated across tasks, whereas smaller-norm updates are suppressed since they are more likely to be redundant or noisy. This polarization strengthens useful signals and prevents important updates from being submerged by numerous weak components. In practice, we apply this scaling at three complementary levels: across tasks, across dimensions, and across layers.
More details of Polarized Scaling can be found in \cite{li2026model}.
To further examine the contribution of each scaling level, Table~\ref{tab:ablation_ps} independently ablates inter-layer, inter-task, and inter-dimension scaling. The results show that each level brings consistent gains over the variant without Polarized Scaling, indicating that useful norm-based signals exist at different granularities. Combining all three levels achieves the best overall performance, suggesting that they capture complementary structures in task updates.

\begin{figure}[t]
    \centering
    \begin{minipage}[c]{0.48\linewidth}
        \centering
        \subfloat[Task Interaction.]{%
            \includegraphics[height=0.34\textheight,width=\linewidth,keepaspectratio]{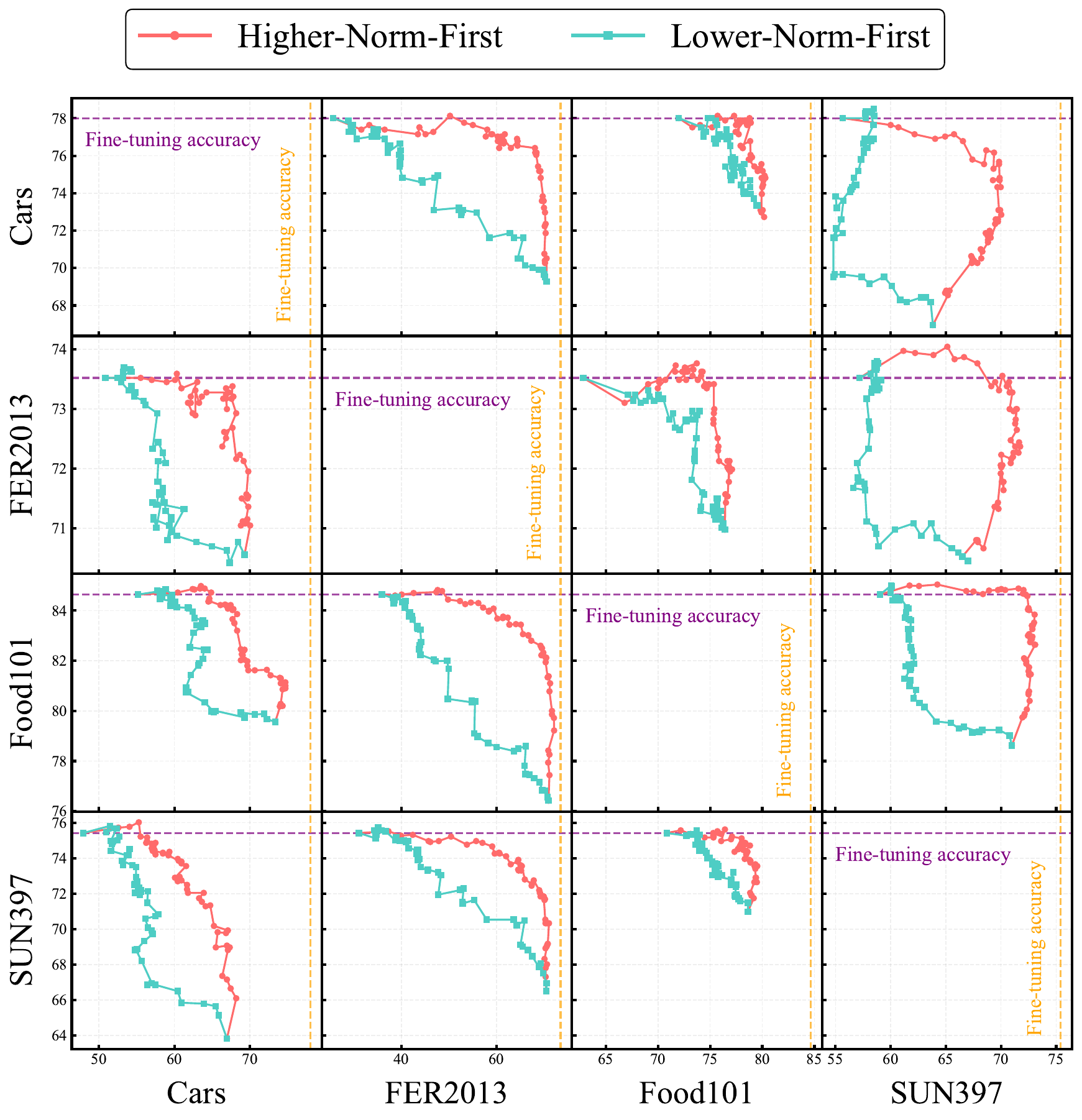}}
    \end{minipage}
    \hfill
    \begin{minipage}[c]{0.48\linewidth}
        \centering
        \subfloat[Polarized Scaling.]{%
            \includegraphics[height=0.16\textheight,width=\linewidth,keepaspectratio]{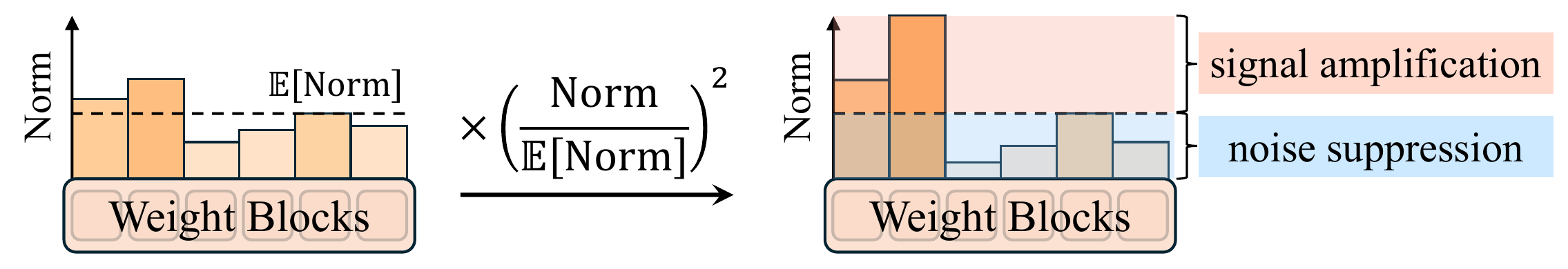}%
            \label{fig:Scaling_Method}}

        \vspace{2mm}

        \subfloat[Scaling Hierarchy.]{%
            \includegraphics[height=0.16\textheight,width=\linewidth,keepaspectratio]{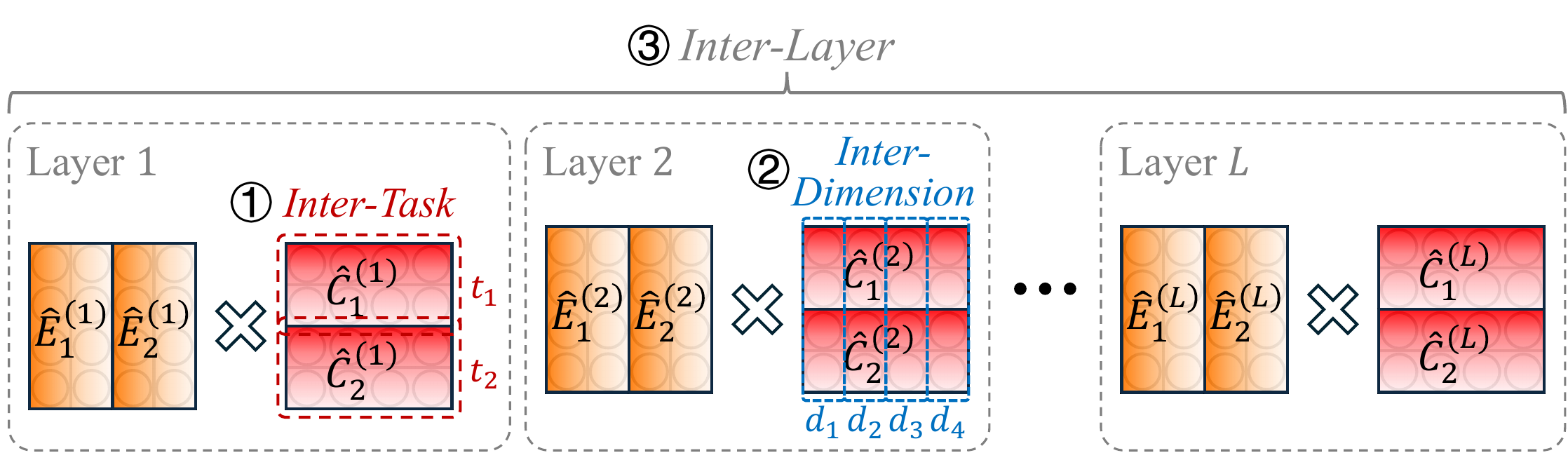}%
            \label{fig:Scaling_Perspectives}}
    \end{minipage}
    \caption{Illustration of task invasion and Polarized Scaling. (a) Pairwise task invasion between fine-tuned ViT models under different norm-based loading orders. (b) Polarized Scaling enlarges high-norm updates and shrinks low-norm updates. (c) The scaling is applied across tasks, dimensions, and layers.}
    \label{fig:task_interaction}
    \label{fig:Polarized_Scaling}
\end{figure}

\subsubsection{Ablation Study on the Exponent of the Scaling Factor}
The default configuration of our method employs a power of $2$ in the polarized scaling coefficient, i.e., $(\frac{\text{norm}}{\mathbb{E}[\text{norm}]})^2$. The rationale for this choice is to amplify significant parameters while suppressing redundant ones. To validate the sensitivity of our approach to this hyperparameter, we conducted an ablation study. As shown in Fig.~\ref{fig:ablation_polarized_scaling_power}, the results indicate that model merging is robust across a range of exponents. The value of $2$ was chosen as the default because it achieves optimal performance.

\begin{figure*}[t]
    \centering
    \includegraphics[width=\linewidth]{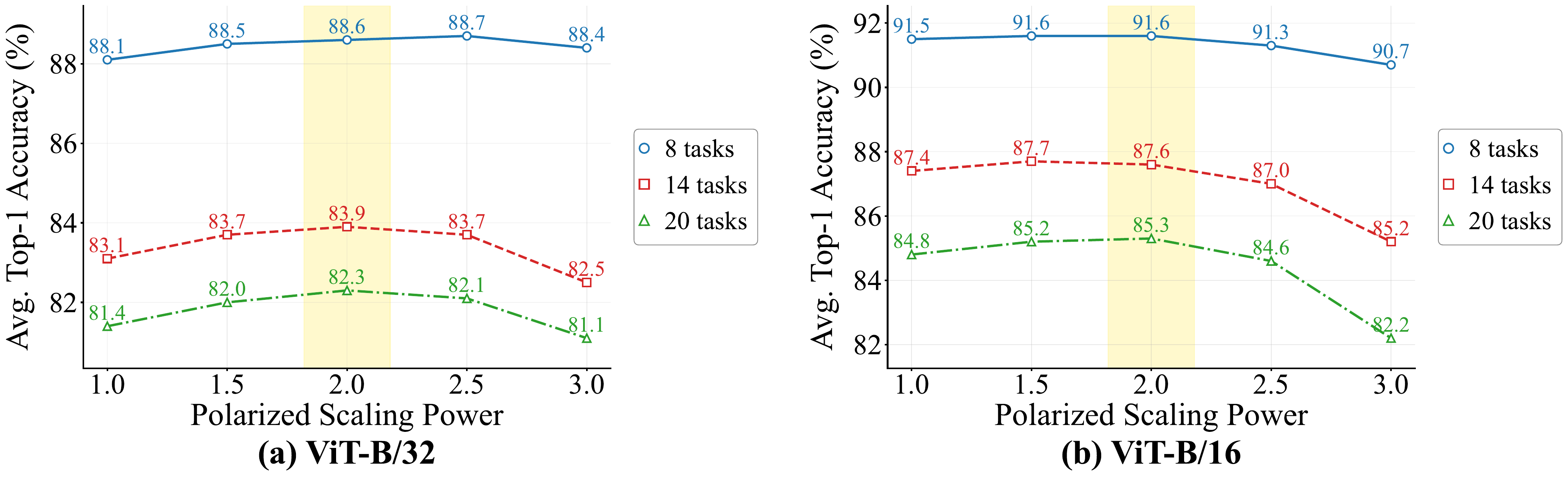}
    \caption{Performance of Polarized Scaling under different powers of the scaling factor.}
    \label{fig:ablation_polarized_scaling_power}
\end{figure*}

\subsubsection{Detailed Ablation Study of Polarized Scaling.}
We perform a detailed ablation of the Polarized Scaling in Table~\ref{tab:detailed_ablation_ps}. We compare three alternatives: (i) ``\textit{Reverse}'', which applies the reciprocal of the scaling factors; (ii) ``\textit{Noise$--$}'', which retains only factors $<1$ to suppress noisy parameters; and (iii) ``\textit{Signal$++$}'', which retains only factors $>1$ to enhance important parameters.
Experimental results show that, compared with ``\textit{w/o Scaling}'' (i.e., without scaling), the ``\textit{Reverse}'' operation significantly degrades performance because important parameters are overwhelmed by redundant ones. Both ``\textit{Noise$--$}'' and ``\textit{Signal$++$}'' improve over ``\textit{None}'' by raising the signal-to-noise ratio of important parameters. The full Polarized Scaling method, which combines both suppression and amplification, achieves the best performance.

\begin{table}[t]
\centering
\caption{Detailed ablation study of the Polarized Scaling. The symbol $\gamma$ denotes the scaling factor at three different levels. The following variants are compared: (i) ``\textit{Reverse}'': taking the reciprocal of the scaling factors; (ii) ``\textit{Noise$--$}'': retaining only factors $<1$ to suppress noisy parameters; (iii) ``\textit{Signal$++$}'': retaining only factors $>1$ to enhance important parameters.}
\label{tab:detailed_ablation_ps}
\begin{tabular}{lcccc}
\toprule[1.5pt] 
\multirow{2}{*}{Method} & \multirow{2}{*}{Scaling} & \multicolumn{3}{c}{ViT-B/32}
\\
\cmidrule[0.5pt](lr){3-5} && 8 tasks & 14 tasks & 20 tasks
\\
\midrule[0.5pt] 

\rowcolor[HTML]{ECECEC}\textit{w/o Scaling} & - & 87.1 (93.7) & 81.8 (89.8) & 79.8 (87.3) \\
\textit{Reverse Polarized Scaling} & $1/\gamma$ & 82.9 (89.2) & 76.3 (83.7) & 72.6 (79.4) \\
\textit{Noise$--$} & $\min(\gamma,1)$ & 87.8 (94.5) & 83.2 (91.4) & 81.3 (89.0) \\
\textit{Signal$++$} & $\max(\gamma,1)$ & 88.1 (95.0) & 83.0 (91.2) & 81.0 (88.7) \\

\rowcolor[HTML]{C6E2FF}Polarized Scaling & $\gamma$ & \textbf{88.6} (\textbf{95.4}) & \textbf{83.9} (\textbf{92.4}) & \textbf{82.3} (\textbf{90.1}) \\

\bottomrule[1.5pt] 
\end{tabular}
\end{table}

\subsection{Calculation of Energy Retention}
\label{sec:energy_retention}
\figpanel{fig:esd_vs_svd}{a} shows the cumulative energy retained when preserving different proportions of components.
For the SVD-based method, energy retention is calculated as the ratio of the sum of squares of the retained singular values to the sum of squares of all singular values. For our ESD method, it is defined as the ratio of the sum of the retained eigenvalues to the sum of all eigenvalues. This is because the square of a singular value and an eigenvalue both correspond to the explained variance.

\subsection{Subspace Similarity Metrics for Proxy Size Analysis}
\label{sec:proxy_subspace_metrics}

We evaluate how well a subspace estimated from a limited proxy set matches a reference subspace estimated from the full test set. Let the reference subspace be $\mathcal{U}_{\text{test}}$ with orthonormal basis $U_{\text{test}} \in \mathbb{R}^{d \times r}$ and eigenvalues $\lambda_1 \geq \lambda_2 \geq \cdots \geq \lambda_r > 0$. Let the proxy-estimated subspace be $\mathcal{U}_s$ with orthonormal basis $U_s \in \mathbb{R}^{d \times r}$. We define $C = U_s^\top U_{\text{test}}$, whose singular values $\sigma_1 \geq \cdots \geq \sigma_r$ satisfy $\sigma_i = \cos \theta_i$, where $\theta_i$ are the principal angles between the two subspaces.

\paragraph{Equal-weight projection similarity.}
This metric averages the squared cosines of all principal angles:
\begin{equation}
\mathrm{PS}_{\mathrm{eq}} = \frac{1}{r} \sum_{i=1}^{r} \sigma_i^2 = \frac{\|U_s^\top U_{\text{test}}\|_F^2}{r}.
\end{equation}
It treats all retained directions equally and measures the average overlap between the proxy and reference subspaces.

\paragraph{Eigenvalue-weighted projection similarity.}
Since different principal directions contribute different amounts of functional variance, we also compute an eigenvalue-weighted score:
\begin{equation}
\mathrm{PS}_{\mathrm{w}} = \frac{\sum_{i=1}^{r} \lambda_i \|U_s^\top u_i\|_2^2}{\sum_{i=1}^{r} \lambda_i},
\end{equation}
where $u_i$ is the $i$-th column of $U_{\text{test}}$. This metric measures the fraction of test-set subspace information preserved by the proxy subspace, with larger weights assigned to high-variance directions. It is therefore the most informative metric for our analysis: even if some low-energy tail directions are imperfectly aligned, the proxy subspace can still preserve the dominant functional information needed for model merging.

\paragraph{Maximum principal angle.}
We further report the worst-case subspace misalignment:
\begin{equation}
\theta_{\max} = \arccos(\sigma_r).
\end{equation}
This metric is conservative because it is determined by the least aligned direction. It is useful for diagnosing unstable tail directions, but it is less directly tied to information preservation than $\mathrm{PS}_{\mathrm{w}}$, since the worst-aligned direction may correspond to a low-eigenvalue component.

\subsection{Ablation Study on Rank Budget $\boldsymbol{r}$ for ESM}
In our method and experiments, the default setting uses $r = \lfloor d_{\text{out}} / T \rfloor$ as the rank budget for low-rank decomposition of each task matrix, where $T$ denotes the number of tasks and $d_{\text{out}}$ represents the original output dimension. We conduct an ablation study on the selection of rank $r$, as shown in Fig.~\ref{fig:suppl_ablation_rank_r}. The results demonstrate that the merged model exhibits robustness to the choice of rank $r$, maintaining comparable performance across a wide range of values ($\lfloor 0.5\cdot d_{\text{out}} / T \rfloor \sim \lfloor 2.0\cdot d_{\text{out}} / T \rfloor$). This stability arises because our decomposition concentrates the task-relevant energy into a small number of dominant rank components. Moreover, the eigenvalue-based weighting and subsequent orthogonalization substantially reduce interference from low-energy directions, making the merged representation less sensitive to moderate changes in the retained rank budget.

\begin{figure*}[t]
    \centering
    \includegraphics[width=\linewidth]{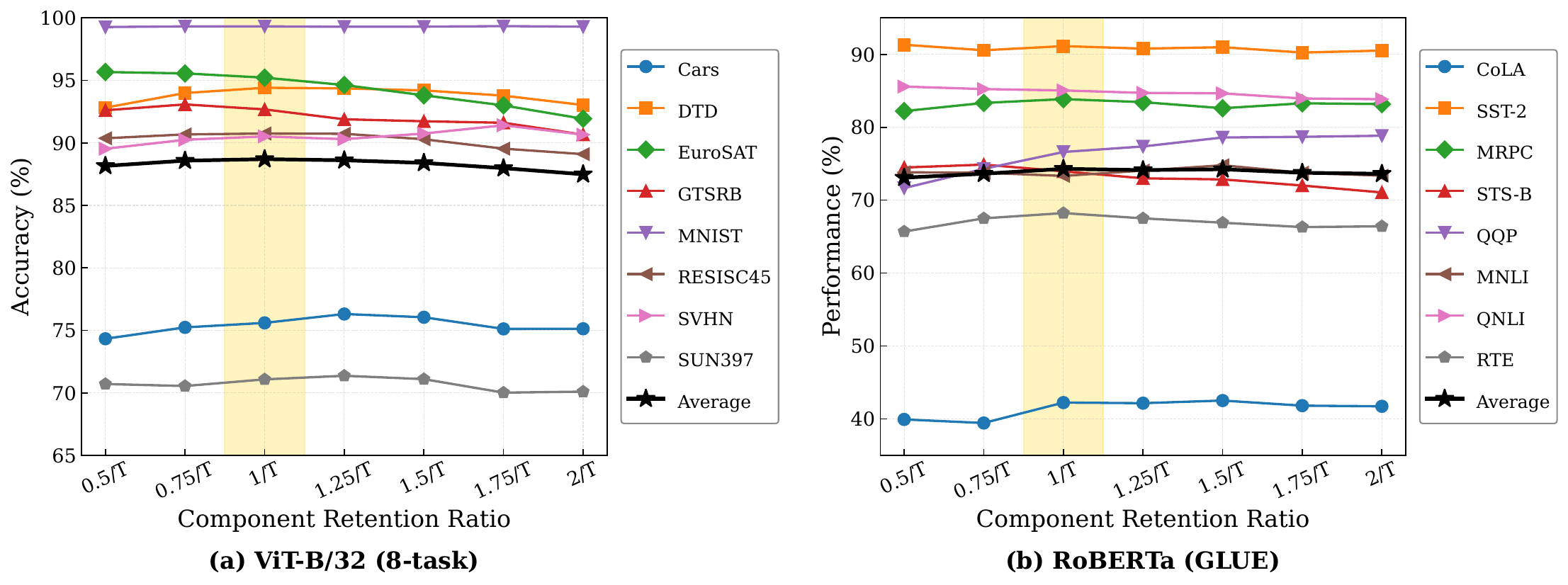}
    \caption{Ablation study on the impact of component retention ratio on merged model performance. $T$ denotes the number of tasks.}
    \label{fig:suppl_ablation_rank_r}
\end{figure*}

\begin{table*}[t]
\centering
\caption{Global scaling coefficient $\alpha$ selected on the validation set for each benchmark setting.}
\label{tab:alpha_selection}
\setlength{\tabcolsep}{4.2pt}
\begin{tabular}{ccccccccccc}
\toprule[1.5pt]
\multicolumn{3}{c}{\textbf{ViT-B/32}} & \multicolumn{3}{c}{\textbf{ViT-B/16}} & \multicolumn{3}{c}{\textbf{ViT-L/14}} & \textbf{RoBERTa} & \textbf{Llama-3.2-3B} \\
\cmidrule[0.5pt](lr){1-3}\cmidrule[0.5pt](lr){4-6}\cmidrule[0.5pt](lr){7-9}\cmidrule[0.5pt](lr){10-10}\cmidrule[0.5pt](lr){11-11}
8-task & 14-task & 20-task & 8-task & 14-task & 20-task & 8-task & 14-task & 20-task & GLUE & MergeBench \\
\midrule[0.5pt]
\textit{0.76} & \textit{0.57} & \textit{0.61} & \textit{0.84} & \textit{0.70} & \textit{0.65} & \textit{0.82} & \textit{0.68} & \textit{0.63} & \textit{2.80} & \textit{2.00} \\
\bottomrule[1.5pt]
\end{tabular}
\end{table*}

\subsection{Selection of Global Scaling Coefficient $\alpha$}

We report the global scaling coefficient $\alpha$ selected on the validation set, as shown in Table \ref{tab:alpha_selection}. Based on the empirical ranges used in previous model merging studies \cite{gargiulo2025task, marczakno}, we set the search interval for $\alpha$ between 0.0 and 5.0 and perform ternary search to determine the optimal value. The results show that the optimal $\alpha$ decreases as the number of tasks increases, likely because merging more tasks amplifies the norm of the combined updates.

\subsection{Effect of the Number of Routed Experts}

Fig.~\ref{fig:topk_moe} analyzes the effect of the number of selected experts in ESM++ ($r=8$). The results show that routing to a single expert achieves the best performance. As more experts are selected, the additional task-specific residuals can introduce interference among tasks, which degrades the overall multi-task performance. This observation indicates that our training-free routing strategy does not require combining multiple experts to obtain strong results; selecting only one expert is sufficient for efficient and high-performance inference.

\begin{figure*}[t]
    \centering
    \includegraphics[width=\linewidth]{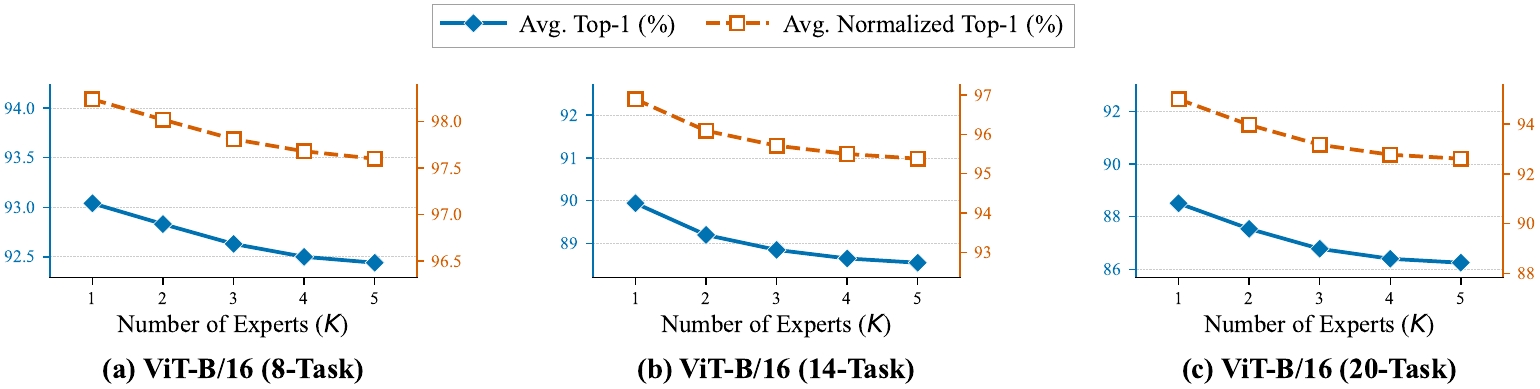}
    \caption{Effect of the number of selected experts in ESM++ ($r=8$). Routing to a single expert yields the best multi-task performance, while selecting more experts introduces stronger cross-task interference among residual experts. This demonstrates that the proposed training-free routing mechanism can achieve efficient and high-performance inference by selecting only one expert.}
    \label{fig:topk_moe}
\end{figure*}

\begin{table*}[t]
\centering
\caption{Prototype-based and oracle routing results for ESM++ on the 8-task GLUE benchmark with RoBERTa. The table reports routing accuracy and task performance to disentangle errors from prototype-based routing and the performance preserved by the retained principal components. Normalized accuracy relative to the fine-tuned experts is shown in parentheses.}
\label{tab:roberta_oracle}
\setlength{\tabcolsep}{3.9pt}
\begin{tabular}{lccccccccccc}
\toprule[1.5pt]
Method & \makecell{Routing\\Strategy} & \makecell{Routing\\Accuracy} & CoLA & SST-2 & MRPC & STS-B & QQP & MNLI & QNLI & RTE & Avg. \\

\midrule

\rowcolor[HTML]{ECECEC}\textit{Pre-trained} & -- & -- & 0.0 & 49.1 & 15.8 & 15.0 & 41.1 & 34.2 & 52.4 & 53.4 & 32.6 \\
\rowcolor[HTML]{ECECEC}\textit{Fine-tuned} & -- & -- & 56.5 & 94.7 & 88.0 & 86.4 & 89.7 & 87.0 & 91.7 & 66.4 & 82.6 \\
\rowcolor[HTML]{ECECEC}ESM & -- & -- & 40.3$_{(71.3)}$ & 89.5$_{(94.4)}$ & 83.6$_{(95.0)}$ & 74.0$_{(85.7)}$ & 73.4$_{(81.9)}$ & 72.5$_{(83.4)}$ & 84.0$_{(91.6)}$ & 65.0$_{(97.8)}$ & 72.8$_{(87.6)}$ \\

\midrule

ESM++ ($r=8$) & Prototype & 73.3\% & \textbf{57.5}$_{(\textbf{101.6})}$ & 93.5$_{(98.7)}$ & 85.6$_{(97.3)}$ & 74.3$_{(86.0)}$ & \textbf{87.9}$_{(\textbf{98.0})}$ & 61.1$_{(70.3)}$ & 85.7$_{(93.5)}$ & 59.6$_{(89.7)}$ & 75.6$_{(91.9)}$ \\

ESM++ ($r=8$) & Oracle & 100\% & 56.2$_{(99.4)}$ & \textbf{94.5}$_{(\textbf{99.8})}$ & \textbf{87.9}$_{(\textbf{99.9})}$ & \textbf{86.3}$_{(\textbf{99.9})}$ & 87.5$_{(97.5)}$ & \textbf{86.8}$_{(\textbf{99.8})}$ & \textbf{90.5}$_{(98.7)}$ & \textbf{65.3}$_{(\textbf{98.4})}$ & \textbf{81.9}$_{(\textbf{99.2})}$ \\

\midrule

ESM++ ($r=32$) & Prototype & 72.6\% & 54.5$_{(96.4)}$ & 94.2$_{(99.4)}$ & 84.9$_{(96.4)}$ & 75.4$_{(87.3)}$ & 86.6$_{(96.5)}$ & 69.5$_{(79.9)}$ & 88.0$_{(96.0)}$ & 56.7$_{(85.3)}$ & 76.2$_{(92.2)}$ \\

ESM++ ($r=32$) & Oracle & 100\% & \textbf{58.0}$_{(\textbf{102.7})}$ & \textbf{94.3}$_{(\textbf{99.5})}$ & \textbf{88.6}$_{(\textbf{100.7})}$ & \textbf{86.5}$_{(\textbf{100.1})}$ & \textbf{88.3}$_{(\textbf{98.4})}$ & \textbf{85.9}$_{(\textbf{98.7})}$ & \textbf{91.0}$_{(\textbf{99.2})}$ & \textbf{67.5}$_{(\textbf{101.6})}$ & \textbf{82.5}$_{(\textbf{100.1})}$ \\

\bottomrule[1.5pt]
\end{tabular}
\end{table*}

\subsection{Prototype-Based and Oracle Routing on GLUE}

Table \ref{tab:roberta_oracle} compares prototype-based routing with oracle routing for ESM++ on the GLUE benchmark. The prototype router achieves routing accuracies of $73.3\%$ for $r=8$ and $72.6\%$ for $r=32$, showing that the proposed training-free router can recover useful task identities from proxy prototypes without learning an additional routing network. The oracle setting uses the ground-truth task identity and therefore provides an upper bound that isolates the quality of the retained low-rank residual experts. Under oracle routing, ESM++ reaches $81.9\%$ average accuracy with $r=8$ and $82.5\%$ with $r=32$, corresponding to $99.2\%$ and $100.1\%$ normalized accuracy, respectively. These results indicate that the ESD residual experts preserve nearly all task-specific knowledge even at a very small rank, while the gap between prototype and oracle routing mainly reflects routing errors rather than insufficient expert capacity.

\begin{figure*}[t]
    \centering
    \includegraphics[width=\linewidth]{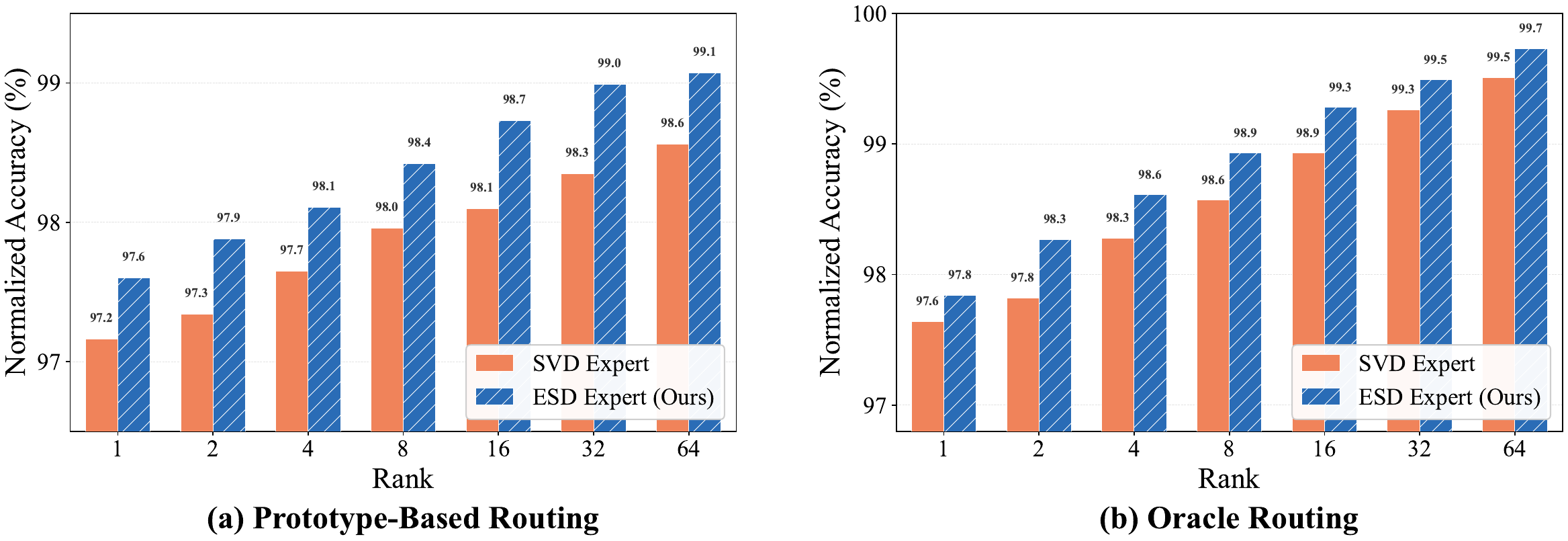}
    \caption{Comparison of low-rank expert construction methods in ESM++. We compare experts obtained by directly applying SVD to parameter update matrices with those obtained by the proposed ESD. Panels (a) and (b) report normalized accuracy under prototype-based routing and oracle routing, respectively, where the x-axis denotes the retained rank and the y-axis denotes normalized accuracy.}
    \label{fig:fused_model_comparison}
\end{figure*}

\begin{figure*}[t]
    \centering
    \includegraphics[width=\linewidth]{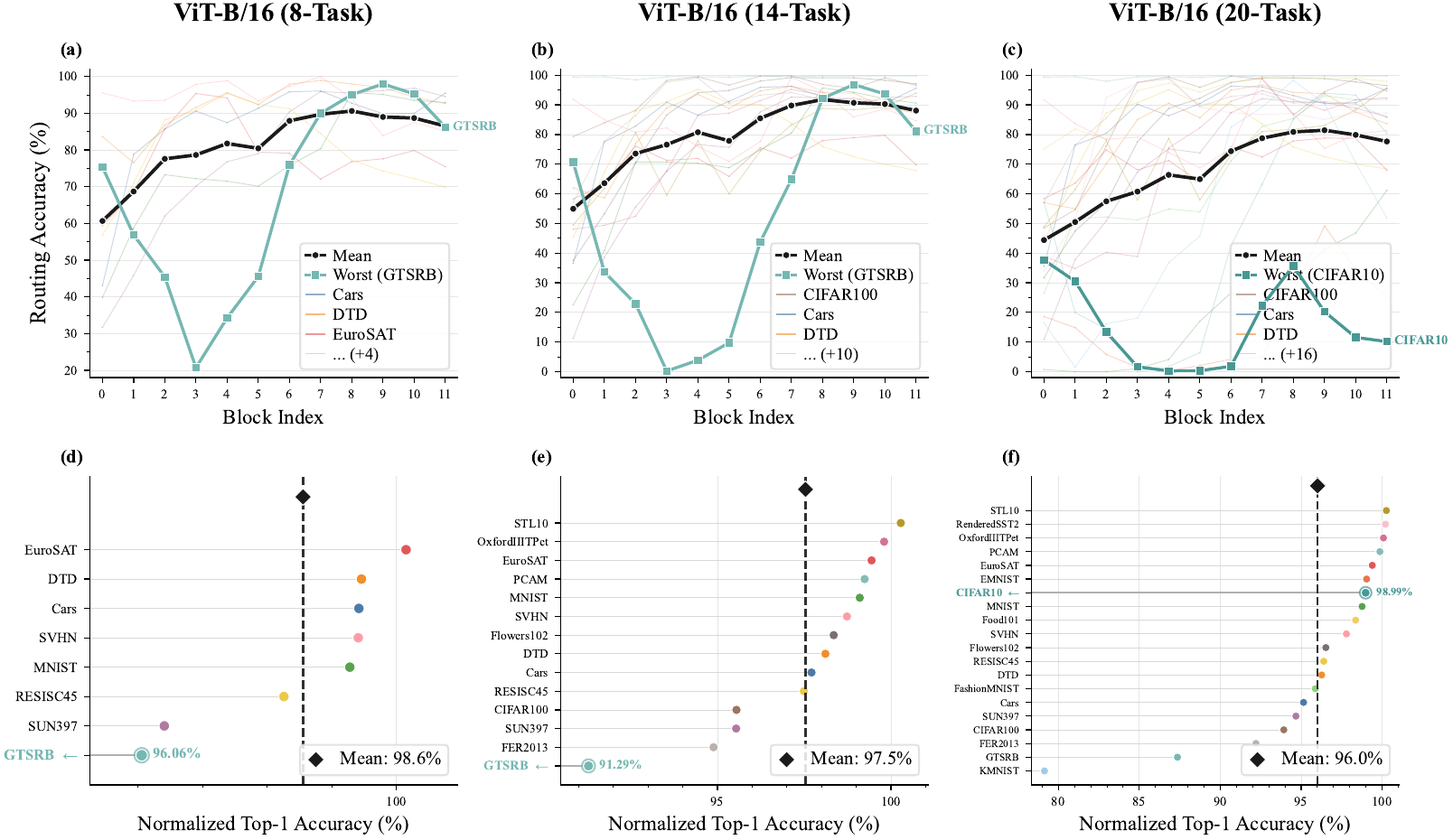}
    \caption{Per-layer routing accuracy and task-level normalized performance of ESM++ ($r=32$). The first row reports routing accuracy at each layer and highlights the task with the lowest routing accuracy. The second row reports the normalized performance of each task, with the task corresponding to the lowest routing accuracy highlighted to examine whether routing errors limit task performance.}
    \label{fig:per_layer_routing_accuracy}
\end{figure*}

\subsection{Per-Layer Routing Accuracy and Task Performance}

Fig.~\ref{fig:per_layer_routing_accuracy} provides a layer-wise analysis of the routing behavior of ESM++ ($r=32$). The routing accuracy generally increases as the layer depth grows, which is consistent with prior observations that semantic information becomes progressively clearer in deeper representations. The figure also compares this routing behavior with the normalized performance of each task. Notably, even for the task with the lowest routing accuracy, ESM++ still achieves more than $90\%$ normalized performance. This suggests that prototype-based routing can effectively select either the correct task expert or a semantically similar expert, thereby providing useful residual specialization and improving the merged model's performance.

\subsection{Comparison of Low-Rank Expert Construction Methods}

Fig.~\ref{fig:fused_model_comparison} compares two ways to construct low-rank residual experts for ESM++: direct SVD on parameter updates and the proposed ESD. Across different ranks, ESD consistently achieves higher normalized accuracy under both prototype-based routing and oracle routing. This confirms that output-shift-aware ESD preserves more useful task-specific expert knowledge than parameter-space SVD.

\subsection{Performance on Individual Tasks}
Fig.~\ref{fig:radar_comparison} provides the detailed per-task results of CLIP model merging across different backbones, complementing the average performance reported in the main text. The results show how ESM and ESM++ perform on each individual task.

\begin{figure}[t]
    \centering
    \subfloat[ViT-B/32]{%
        \includegraphics[width=\linewidth]{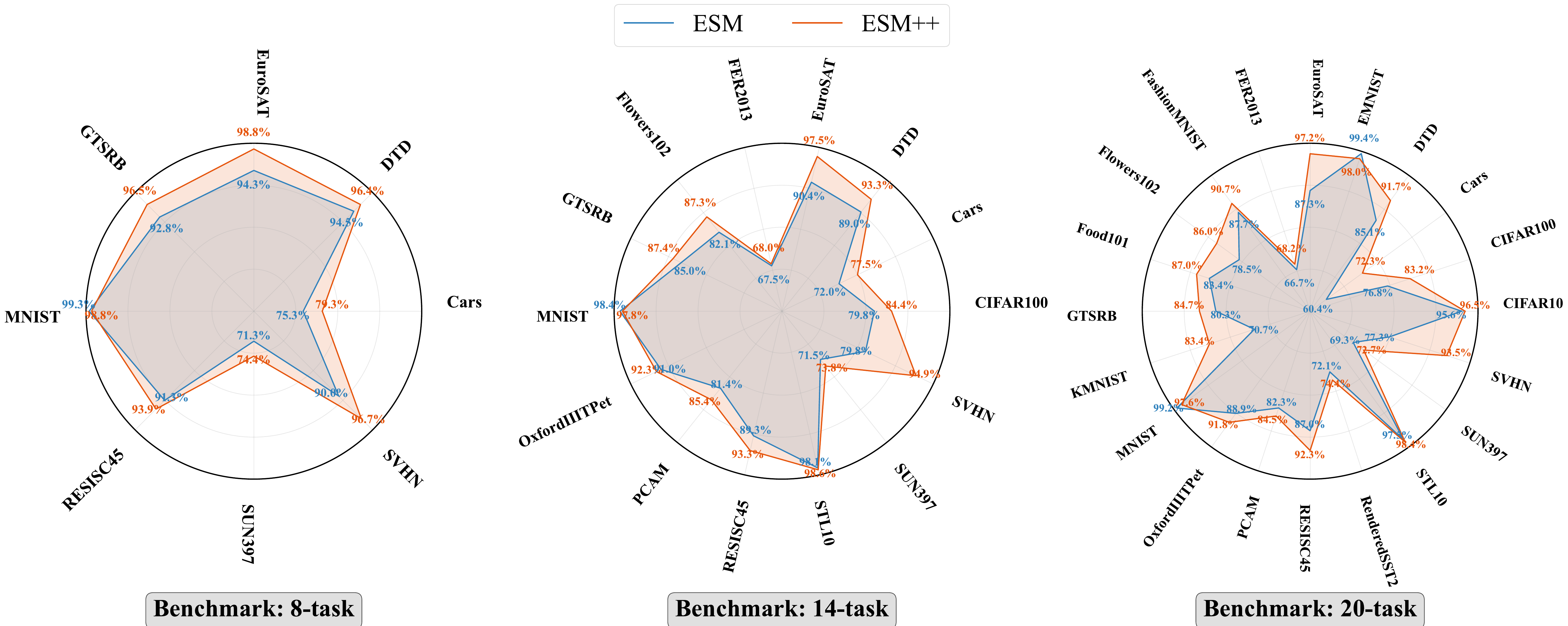}}
    

    \subfloat[ViT-B/16]{%
        \includegraphics[width=\linewidth]{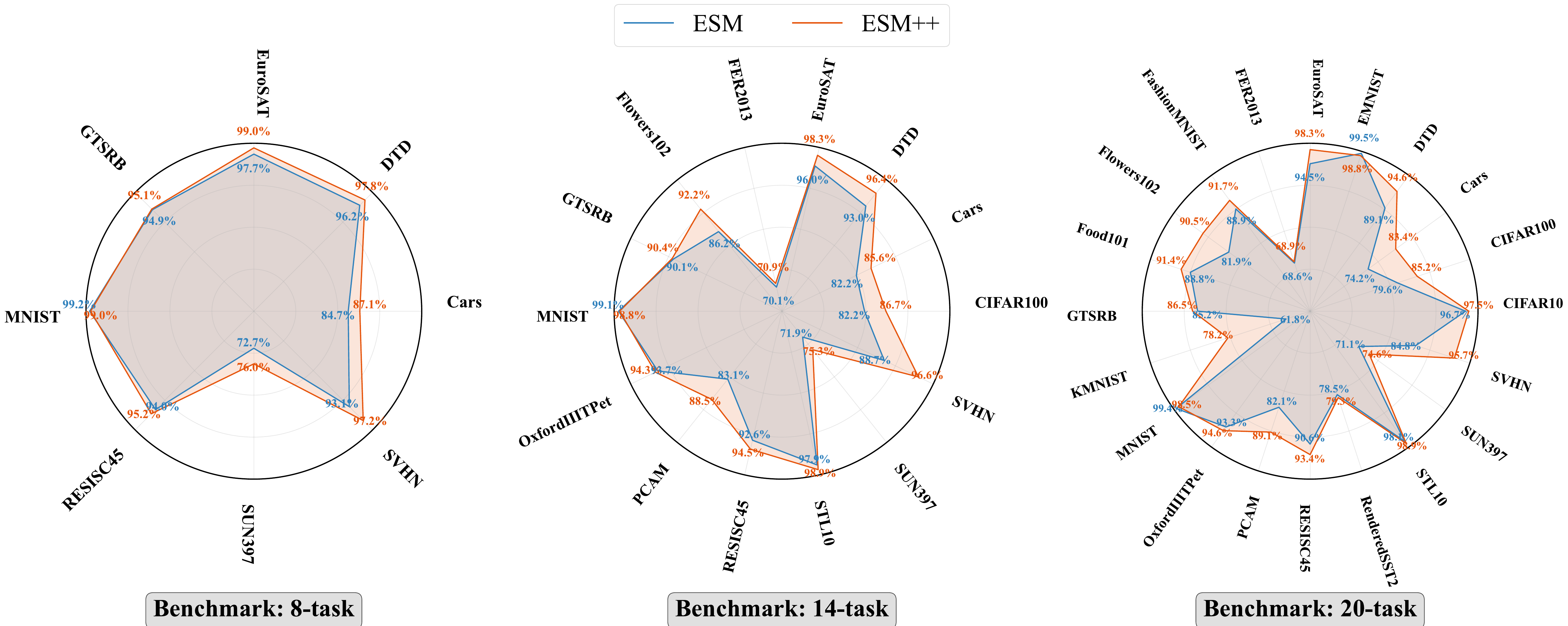}}


    \subfloat[ViT-L/14]{%
        \includegraphics[width=\linewidth]{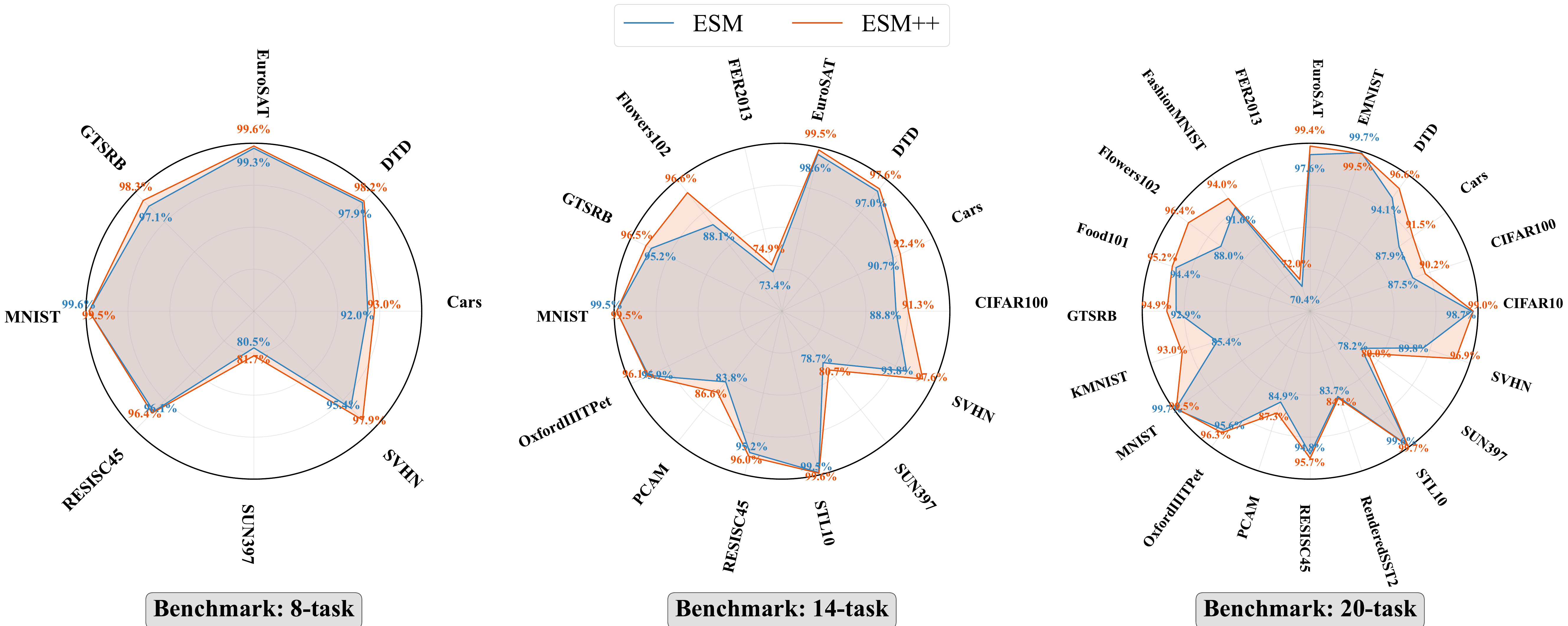}}
    
    \caption{Per-task CLIP merging performance of ESM and ESM++ ($r=32$) across three backbones.}
    \label{fig:radar_comparison}
\end{figure}

\end{document}